\DeclareMathOperator*{\argmin}{arg\,min}
\theoremstyle{plain}
\newtheorem{theorem}{Theorem}[section]
\newtheorem{proposition}[theorem]{Proposition}
\newtheorem{lemma}[theorem]{Lemma}
\newtheorem{corollary}[theorem]{Corollary}
\theoremstyle{definition}
\newtheorem{definition}[theorem]{Definition}
\newtheorem{assumption}{Assumption}
\theoremstyle{remark}
\newtheorem{remark}[theorem]{Remark}
\crefname{assumption}{assumption}{assumptions}
\newtheoremstyle{TheoremNum}
    {\topsep}{\topsep}              
    {\itshape}                      
    {}                              
    {\bfseries}                     
    {.}                             
    { }                             
    {\thmname{#1}\thmnote{ \bfseries #3}}
\theoremstyle{TheoremNum}
\newtheorem{customProposition}{Proposition}
\theoremstyle{definition}
\theoremstyle{definition}
\theoremstyle{definition}
\newtheorem{model}[theorem]{Model}
\definecolor{blindorange}{RGB}{215,131,37}
\definecolor{blindblue} {RGB}{81,172,226}
\definecolor{blindpurple} {RGB}{193,114,177}
\definecolor{blindgreen} {RGB}{33,145,106}
\definecolor{blindpink} {RGB}{252,174,226}
\definecolor{blindbrown} {RGB}{203,145,100}
\definecolor{tabred}{RGB}{197,58,50}
\newcommand{\ind}{\perp\!\!\!\!\perp} 
\def\mis{\mathrm{mis}}
\def\obs{\rm{obs}}
\def\Tr{\rm{Tr}}
\def\Cal{\rm{Cal}}
\def\Test{\rm{Test}}
\def\test{\rm{test}}
\def\ind{\perp\!\!\!\!\perp }
\DeclareMathAlphabet{\mathdutchcal}{U}{dutchcal}{m}{n}
\newcommand{\camerareadyrevision}[1]{{#1}}
\newcommand{\camerareadyrevisionlast}[1]{{#1}}
\newcommand{\mask}{CP-MDA-Nested\xspace}
\newcommand{\masksub}{CP-MDA-Exact\xspace}
\definecolor{mydarkblue}{rgb}{0,0.08,0.45}
\title{Conformal Prediction with Missing Values}
\author[,1,2,3]{Margaux Zaffran\thanks{Corresponding author: \texttt{margaux.zaffran@inria.fr}}}
\author[3]{Aymeric Dieuleveut}
\author[2]{Julie Josse}
\author[4]{Yaniv Romano}
\date{}
\affil[1]{Electricité De France R\&D, Palaiseau, France}
\affil[2]{PreMeDICaL project team, INRIA Sophia-Antipolis, Montpellier, France}
\affil[3]{CMAP, CNRS, École polytechnique, Institut Polytechnique de Paris, Palaiseau, France}
\affil[4]{Departments of Electrical Engineering and of Computer Science, Technion - Israel Institute of Technology, Haifa, Israel}
\begin{document}

\maketitle

\begin{adjustwidth}{0.7cm}{0.7cm}

\begin{abstract}

Conformal prediction is a theoretically grounded framework for constructing predictive intervals. We study conformal prediction with missing values in the covariates -- a setting that brings new challenges to uncertainty quantification. We first show that the marginal coverage guarantee of conformal prediction holds on imputed data for any missingness distribution and almost all imputation functions. However, we emphasize that the average coverage varies depending on the pattern of missing values: conformal methods tend to construct prediction intervals that under-cover the response conditionally to some missing patterns. This motivates our novel generalized conformalized quantile regression framework, missing data augmentation, which yields prediction intervals that are valid conditionally to the patterns of missing values, despite their exponential number. We then show that a universally consistent quantile regression algorithm trained on the imputed data is Bayes optimal for the pinball risk, thus achieving valid coverage conditionally to any given data point. Moreover, we examine the case of a linear model, which demonstrates the importance of our proposal in overcoming the heteroskedasticity induced by missing values. Using synthetic and data from critical care, we corroborate our theory and report improved performance of our methods.

\end{abstract}
\end{adjustwidth}

\section{Introduction}
By leveraging increasingly large data sets, statistical algorithms and machine learning methods can be used to support high-stakes decision-making problems such as autonomous driving, medical or civic applications, and more.  
To ensure the safe deployment of predictive models  it is crucial to quantify the uncertainty of the resulting predictions, communicating the limits of predictive performance. Uncertainty quantification attracts a lot of attention in recent years, particularly methods that are based on  Conformal Prediction (CP) \citep{vovk_algorithmic_2005,papadopoulos_inductive_2002,lei_distribution-free_2018}. CP provides controlled predictive regions for any underlying predictive algorithm (e.g., neural networks and random forests), in finite samples with no assumption on the data distribution except for the exchangeability of the train and test data. More precisely, for a \textit{miscoverage rate} $\alpha \in [0,1]$,  CP outputs a \emph{marginally valid} prediction interval $\widehat{C}_\alpha$ for the test response $Y$ given its corresponding covariates $X$, that is:
\begin{equation}
\mathds{P}( Y \in \widehat{C}_\alpha (X) ) \geq 1 - \alpha.
\label{eq:cp_marg}
\end{equation}
Split CP \citep{papadopoulos_inductive_2002,lei_distribution-free_2018} ach\-ieves Eq. \eqref{eq:cp_marg} by keeping a hold-out set, the \textit{calibration~set}, used to evaluate the performance of a fixed predictive model. 

At the same time, as the volume of data increases, the volume of missing values also increases.
There is a vast literature on this topic \citep{Little2019,Josse2018StatScience}, and a recent survey even identified more than 150 different implementations \citep{mayerrmisstastic}. Missing values create additional challenges to the task of supervised learning, as traditional machine learning algorithms can not handle incomplete  data \citep{josse2019, lemorvan2020, lemorvan2020neumiss, lemorvan2021, ayme2022, vanness}. One of the most popular strategies to deal with missing values suggests imputing the missing entries with plausible values to get completed data, on which any analysis can be performed. The drawback of this ``impute-then-predict'' approach is that single imputation can distort the joint and marginal distribution of the data. Yet, \citet{josse2019, lemorvan2020, lemorvan2021} showed that such impute-then-predict strategies are Bayes consistent, under the assumption that a universally consistent learner is applied on an imputed data set. However, this line of work focuses on point prediction with missing values that aim to predict the most likely outcome. In contrast, our goal is quantifying predictive uncertainty, which was not explored with missing values although its enormous importance.

\begin{figure*}[!b]
    
    \vspace{-10pt}
    
    \centering

    \begin{subfigure}{0.49\textwidth}
    
    \begin{center}\includegraphics[width=\textwidth]{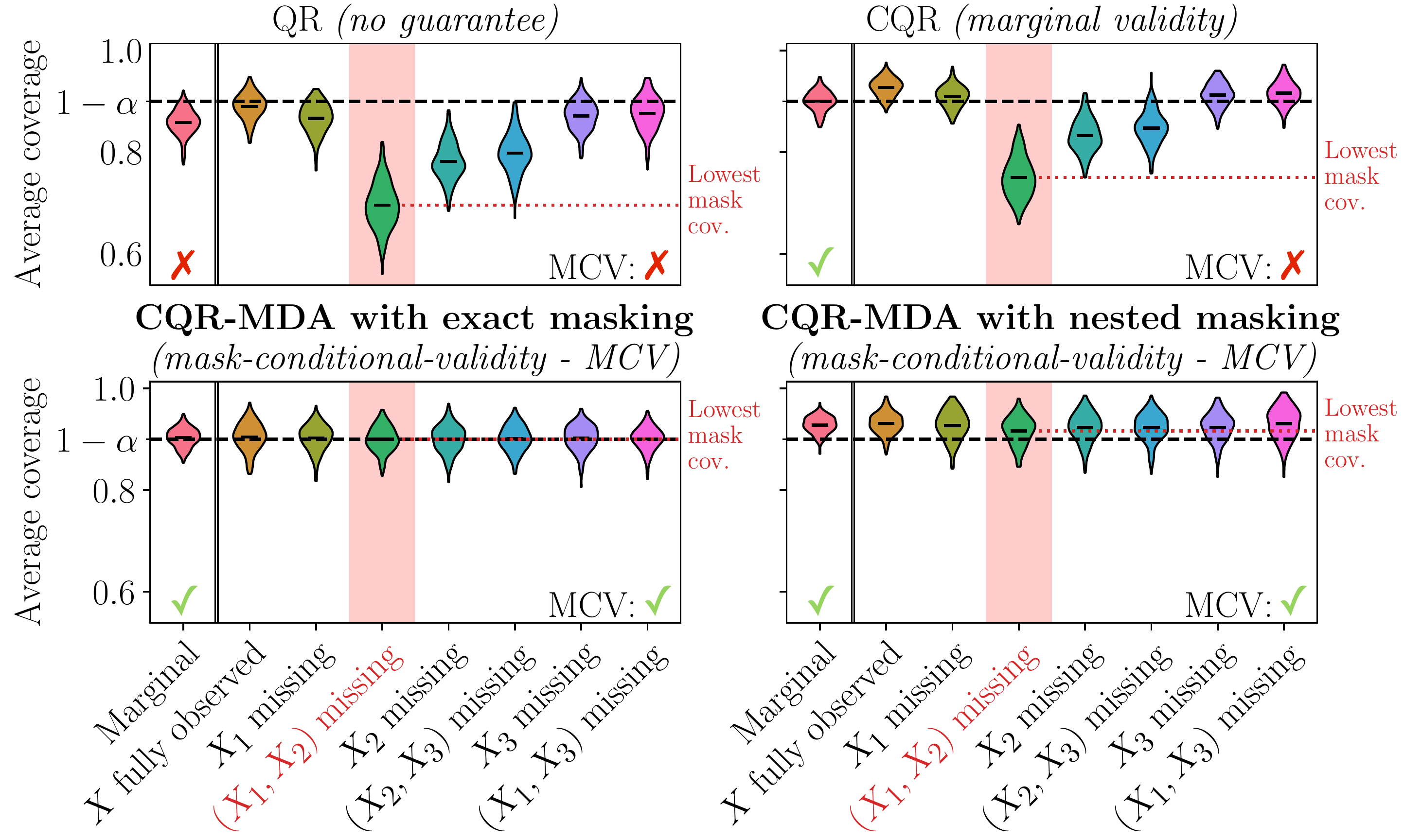}
    \end{center}
    \caption{
    Coverage of the predictive intervals depending on which features are missing, among the 3 features.  Evaluation over 200 runs.} 
    \label{fig:poc_toy}

    \end{subfigure}
    \hfill
    \begin{subfigure}{0.48\textwidth}
    \includegraphics[width=\textwidth]{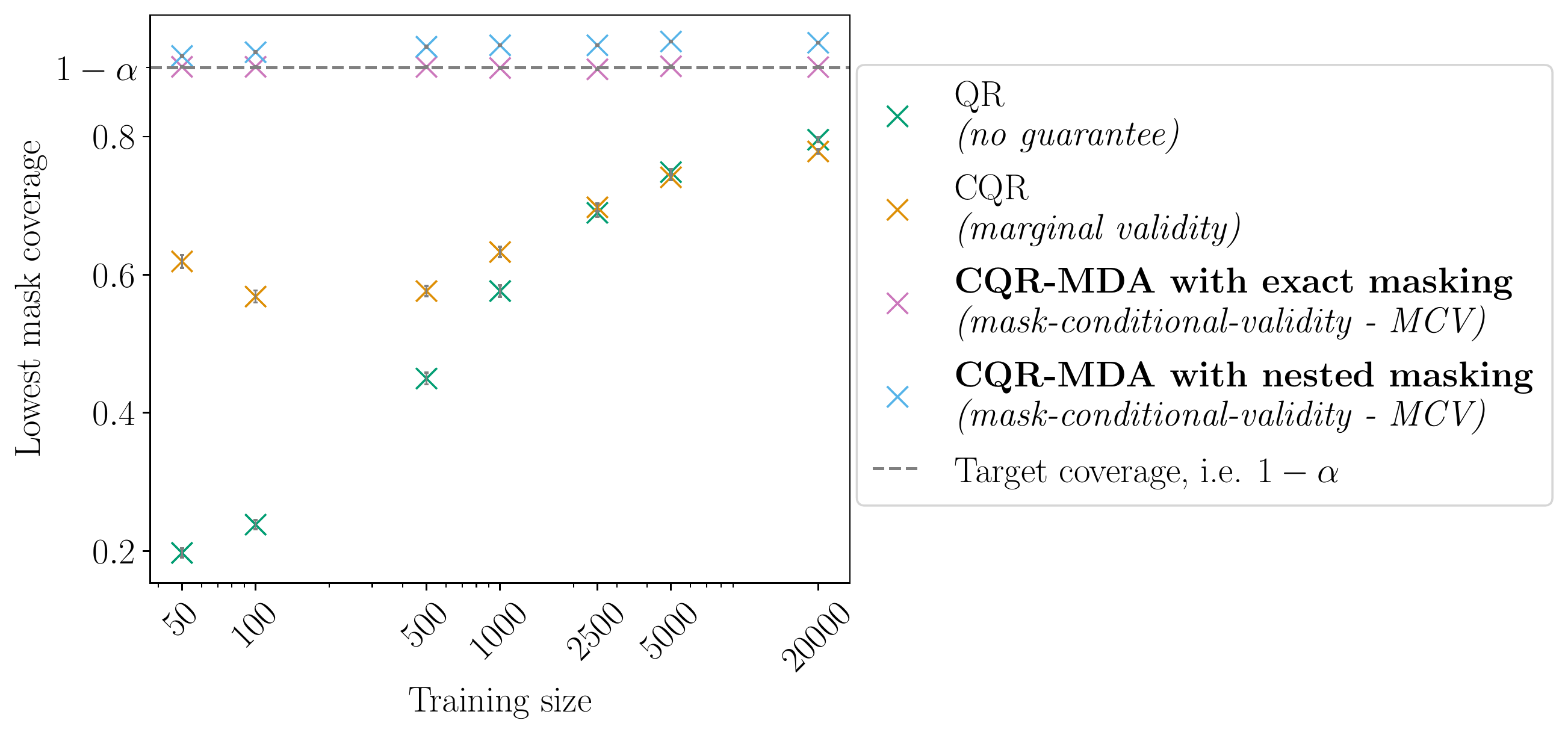}
    \caption{
    Lowest mask coverage as a function of the training size. Results evaluated over 100 repetitions, and the (tiny) error bars correspond to standard errors.
    }
    \label{fig:poc_masking}
    \end{subfigure}
\caption{Methods are Quantile Regression (QR), Conformalized Quantile Regression (CQR), and two novel procedures \textbf{\masksub} and \textbf{\mask}, on top of CQR. Settings are given in \Cref{sec:experiments}, in a nutshell: data follows a Gaussian linear model where missing values are independent of everything else and of proportion 20\%; the dimension of the problem is 3 in \Cref{fig:poc_toy} while in \ref{fig:poc_masking} it is 10.}
\label{fig:poc}
\end{figure*}

\subsection*{Contributions.}
We study CP with missing covariates. Specifically, we study downstream quantile regression (QR) based CP, like CQR \citep{romano_conformalized_2019}, on impute-then-predict~strategies. Still, the proposed approaches also encapsulate other regression basemodels, and even classification.

After setting background in \Cref{sec:framework}, our~first contribution is showing that CP on impute-then-predict~is \emph{marginally} valid regardless of the~model, missingness distribution, and imputation function (\Cref{sec:cp_na_marg}).

Then, we focus on the specificity of uncertainty quantification \emph{with missing values}. In Section~\ref{sec:th_glmdata}, we describe how different masks (i.e. the set of observed features) introduce additional heteroskedasticity: \textit{the uncertainty on the output strongly depends on the set of predictive features observed}. We therefore focus on achieving valid coverage \emph{conditionally on the mask}, coined MCV -- Mask-Conditional-Validity. MCV is desirable in practice, as occurrence of missing values are linked to important attributes (see \Cref{sec:methods}).

Traditional approaches such as QR and CQR fail to achieve MCV because they do not account for this core connection between missing values and uncertainty. This is illustrated on synthetic data in \Cref{fig:poc}. In \Cref{fig:poc_toy}, a toy example with only 3 features, thus $2^3-1=7$  possible masks, shows how the coverage of QR and CQR varies depending on the mask. Both methods dramatically undercover when the most important variable ($X_2$) is missing, and the loss of coverage worsens when additional features are missing. In particular, for each method, one mask ($X_1$ and $X_2$ missing, highlighted in \textcolor{tabred}{red}) leads to the \emph{lowest mask coverage}. Achieving MCV corresponds to a lowest mask coverage greater than $1-\alpha$. In \Cref{fig:poc_masking}, the dimension is 10: instead of the $2^{10}-1=1023$ different masks, we only report the lowest mask coverage for increasing sample sizes. It highlights that QR (\textcolor{blindgreen}{green $\times$}) and CQR (\textcolor{blindorange}{orange $\times$}) do not meet the lowest mask coverage target of 90\%, even for large sample sizes.

This motivates our second contribution: we show in \Cref{sec:methods} how to form prediction intervals that are MCV. This is highly challenging since there are exponentially many possible patterns to consider. Therefore, the naive solution to perform a calibration for each mask would fail as in finite samples, we often observe test samples with a mask that have low (or even null) frequency of appearance in the calibration set. To tackle this issue, we suggest two conformal methods that share the same core idea of missing data augmentation (MDA): the calibration data is artificially masked to match the mask of the point we consider at test time. The first method, \textit{CP-MDA with exact masking}, relies on building an ideal calibration set for which the data points have the exact same mask as of the test point.
We show its MCV under exchangeability and Missing Completely At Random assumptions. Our second method, \textit{CP-MDA with nested masking}, does not require such an ideal calibration set. Instead, we artificially construct a calibration set in which the data points have \textit{at least} the same mask as the test point, i.e., this artificial masking results in calibration points having possibly more missing values than the test point. We show the latter method also achieves the desired coverage conditional on the mask, but at the cost of an additional assumption for validity: stochastic domination of the quantiles. \Cref{fig:poc} illustrates those findings: both methods are MCV, as their lowest mask coverage is above $1-\alpha$.

Our third contribution further supports our design choice to use QR. We show that QR on impute-then-predict strategy is Bayes-consistent -- it can achieve the strongest form of coverage conditional on the observed test features (\Cref{sec:infinite}).
 
Lastly, we support our proposal using both (semi)-synthetic experiments and real medical data (\Cref{sec:experiments}). The code to reproduce our experiments is available on \href{ https://github.com/mzaffran/ConformalPredictionMissingValues}{GitHub}.

\section{Background}
\label{sec:framework}
\textbf{Background on missing values. } \label{sec:back_na} Consider a data set with $n$ exchangeable realizations of the random variable
$(X, M, Y) \in \mathds{R}^d \times \{0,1\}^d \times \mathds{R}$: $\left\{ \left(X^{(k)},M^{(k)},Y^{(k)} \right) \right\}_{k=1}^n$, where $X$ represents the features, $M$ the missing pattern, or mask,  and $Y$ an outcome to predict. 
For $j \in \llbracket 1,d \rrbracket$, $M_j = 0$ when $X_j$ is observed and $M_j = 1$ when $X_j$ is missing, i.e. \texttt{NA} (Not Available). We note $\mathcal{M} = \{0,1\}^d$ the set of masks.
For a pattern $m \in \mathcal{M}, X_{\text{obs}(m)}$ is the random vector of observed components, and $X_{\text{mis}(m)}$ is the random vector of unobserved ones. For example, if we observe $(\texttt{NA},6,2)$ then $m = (1,0,0)$ and $X_{\text{obs}(m)} = (6,2)$. Our goal is to predict a new outcome $Y^{(n+1)}$ given $X_{\text{obs}(M^{(n+1)})}^{(n+1)}$ and~$M^{(n+1)}$.
\begin{assumption}[exchangeability]
\label{ass:iid}
The random variables $\left( X^{(k)}, M^{(k)}, Y^{(k)} \right)_{k=1}^{n+1}$ are exchangeable.
\end{assumption}

Following \citet{rubin1976inference}, we consider three well-known missingness mechanisms.

\begin{definition}[Missing Completely At Random (MCAR)]
For any $m \in \mathcal{M}$,  $\mathds{P}\left(M = m | X\right) = \mathds{P}\left(M = m\right)$.
\end{definition}

\begin{definition}[Missing At Random (MAR)]
For any $m \in \mathcal{M}$, $\mathds{P}\left(M = m | X\right) = \mathds{P}\left(M = m | X_{\text{obs}(m)}\right)$.
\end{definition}

\begin{definition}[Missing Non At Random (MNAR)]
If the missing data is not  MAR, it is MNAR. Thus, its probability distribution depends on $X$, including the missing values.
\end{definition}

\textbf{Impute-then-predict.} As most predictive algorithms can not directly handle missing values, we impute the incomplete data using an imputation function $\Phi$ which maps observed values to themselves and missing values to a function of the observed values. With notations from \citet{lemorvan2021} we note $\phi^{m} : \mathds{R}^{|\obs(m)|} \rightarrow \mathds{R}^{|\mis(m)|}$ the imputation function which takes as input observed values and outputs imputed values, i.e. plausible values, given a mask $m \in \mathcal{M}$. Then, the imputation function $\Phi$ belongs to 
$\begin{aligned}
\mathcal{F}^I := & \left\{ \right. \Phi : \mathds{R}^d \times \mathcal{M} \rightarrow \mathds{R}^d : \forall j \in \llbracket 1,d \rrbracket, \\
& \left. \Phi_j \left( X, M \right) = X_j\mathds{1}_{M_j = 0} + \phi_j^{M}\left(X_{\text{obs}(M)}\right) \mathds{1}_{M_j = 1}  \right\}.
\end{aligned}$ 
Additionally, $\mathcal{F}^{I}_{\infty}$ is the restriction of $\mathcal{F}^{I}$ to $\mathcal{C}^{\infty}$ functions which include deterministic imputation, such as mean imputation or imputation by regression. The imputed data set is formed by the realizations of the $n$ random variables $\left(\Phi\left(X, M\right), M, Y\right)$.
In practice, $\Phi$ is obtained as the result of an algorithm $\mathcal{I}$ trained on $\left\{ \left(X^{(k)},M^{(k)} \right) \right\}_{k=1}^{n+1}$.

\begin{assumption}[Symmetrical imputation]
\label{ass:imp_sym}
The imputation function $\Phi$ is the output of an algorithm $\mathcal{I}$ treating its input data points symmetrically: $\mathcal{I}(( X^{(\sigma(k))}, M^{(\sigma(k))} )_{k=1}^{n+1}) \overset{(d)}{=} \mathcal{I}(( X^{(k)}, M^{(k)} )_{k=1}^{n+1})$
\textit{conditionally on}  $( X^{(k)}, M^{(k)} )_{k=1}^{n+1}$ and for any permutation $\sigma$ on~$\llbracket1,n+1\rrbracket$.
\end{assumption}

\Cref{ass:imp_sym} is very mild and satisfied by all existing imputation methods for exchangeable data. In particular, it is valid for iterative regression imputation which allows out-of-sample imputation. 

\textbf{Background on (split) conformal prediction.} Split, or inductive, CP (SCP) \citep{papadopoulos_inductive_2002,lei_distribution-free_2018} builds predictive regions  by first splitting the $n$ points of the training set into two disjoint sets $\rm{Tr}, \rm{Cal}\subset \llbracket 1,n \rrbracket$, to create a \textit{proper training set}, $\rm{Tr}$, and a \textit{calibration set}, $\rm{Cal}$. On the proper training set, a model $\hat f$ (chosen by the user) is fitted, and then used to predict on the calibration set. \textit{Conformity scores} 
$S_{\rm{Cal}}=  \{(s(X^{(k)},Y^{(k)}))_{k \in \rm{Cal}}\}$
are computed to assess how well the fitted model $\hat f$ predicts the response values of the calibration points. 
For example, Conformalized Quantile Regression \citep[CQR,][]{romano_conformalized_2019} fits two quantile regressions $\hat{q}_{\text{low}}$ and $\hat{q}_{\text{upp}}$, on the proper training set. The conformity scores are defined by $s(x,y) = \max(\hat{q}_{\text{low}}(x) - y, y - \hat{q}_{\text{upp}}(x))$. Finally, a corrected $(1-\tilde\alpha)$-th quantile  of these scores $\widehat{Q}_{1-\tilde\alpha}(S_{\rm{Cal}})$ is computed (called \textit{correction term}) to define the predictive region: 
$\widehat{C}_{\alpha} ( x ) := \{y \text{ such that } s( y, \hat{f} ( x ) ) \leq \widehat{Q}_{1-\tilde\alpha}(S_{\rm{Cal}})  \}$.\footnote{The correction $\alpha \to \tilde\alpha$ is needed because of the inflation of quantiles in finite sample (see Lemma 2 in \citet{romano_conformalized_2019} or Section 2 in \citet{lei_distribution-free_2018}).} An illustration of CQR is provided in \Cref{app:cqr}.

This procedure satisfies Eq. \eqref{eq:cp_marg} for any $\hat{f}$, any (finite) sample size $n$, as long as the data points are exchangeable.\footnote{Only the calibration and test data points need to be exchangeable.} Moreover, if the scores are almost surely distinct, the coverage holds almost exactly:
$\mathds{P}(Y \in \widehat C_{\alpha} (X)) \leq 1-\alpha+\frac{1}{\#\rm{Cal}+1}$.

For more details on SCP, we refer to \citet{angelopoulos-gentle,vovk_algorithmic_2005}, as well as to \citet{manokhin_valery_2022_6467205}.

\section{Warm-up: marginal coverage with \texttt{NA}s}
\label{sec:cp_na_marg}

A first idea to get valid predictive intervals $\widehat C_\alpha (X,M)$ in the presence of missing values $M$ is to apply CP in combination with impute-then-predict, which we refer to as \textit{impute-then-predict+conformalization}. More details on this approach are given in \Cref{app:itp+conf_alg} for both classification and regression tasks, although our main focus is regression.
It turns out that such a simple approach is marginally (exactly) valid. 

\begin{definition}[Marginal validity]
\label{def:marg_cov}
A method outputting intervals $\widehat{C}_{\alpha}$ is marginally valid if the \camerareadyrevisionlast{following} lower bound is satisfied, and exactly valid if the \camerareadyrevisionlast{following} upper bound is also satisfied:
\begin{align*}
1-\alpha \underset{\text{validity}}{\leq} \mathds{P}\left(  Y ^{(n+1)} \in \widehat{C}_{\alpha} \left( X^{(n+1)},M^{(n+1)} \right) \right) \\[-10pt]
\underset{\text{exact validity}}{\leq} 1 - \alpha + \frac{1}{\#\rm{Cal} + 1}.
\end{align*}

\end{definition}

Indeed, symmetric imputation preserves exchangeability.

\begin{lemma}[Imputation preserves exchangeability]
\label{lem:exch_imp}

Let \ref{ass:iid} hold. Then, for any missing mechanism, for any imputation function $\Phi$ satisfying \ref{ass:imp_sym}, the imputed random variables $\left(\Phi\left(X^{(k)},M^{(k)} \right),M^{(k)},Y^{(k)}\right)_{k=1}^{n+1}$ are exchangeable. 
\end{lemma}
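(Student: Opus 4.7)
The plan is to fix an arbitrary permutation $\sigma$ of $\llbracket 1, n+1 \rrbracket$ and verify that applying $\sigma$ to the imputed sequence preserves its joint law. Write $\mathcal{D} := (X^{(k)}, M^{(k)}, Y^{(k)})_{k=1}^{n+1}$, let $\sigma\mathcal{D}$ denote its reindexing, and let $\Phi_{\mathcal{D}}$ and $\Phi_{\sigma\mathcal{D}}$ be the outputs of $\mathcal{I}$ trained on the $(X,M)$-coordinates of $\mathcal{D}$ and $\sigma\mathcal{D}$ respectively. The missing-data mechanism plays no explicit role beyond dictating the joint distribution of $(X,M,Y)$, so only \ref{ass:iid} on this joint is needed.

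The first step is a plain application of \ref{ass:iid}: the map sending an ordered dataset to the corresponding sequence of imputed triples is a fixed measurable functional, so evaluating it on $\mathcal{D}$ and on the equidistributed $\sigma\mathcal{D}$ gives
\begin{align*}
&\bigl(\Phi_{\mathcal{D}}(X^{(k)},M^{(k)}), M^{(k)}, Y^{(k)}\bigr)_{k=1}^{n+1} \\
&\overset{(d)}{=} \bigl(\Phi_{\sigma\mathcal{D}}(X^{(\sigma(k))},M^{(\sigma(k))}), M^{(\sigma(k))}, Y^{(\sigma(k))}\bigr)_{k=1}^{n+1}.
\end{align*}
The only gap between the right-hand side and the permuted vector we want is that $\Phi_{\sigma\mathcal{D}}$ appears in place of $\Phi_{\mathcal{D}}$.

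The second step invokes \ref{ass:imp_sym}, which states precisely that $\Phi_{\sigma\mathcal{D}} \overset{(d)}{=} \Phi_{\mathcal{D}}$ \emph{conditionally on} $\mathcal{D}$. Applying this conditional equality with the measurable functional $\psi(\phi,\mathcal{D}) = (\phi(X^{(\sigma(k))},M^{(\sigma(k))}), M^{(\sigma(k))}, Y^{(\sigma(k))})_{k=1}^{n+1}$ promotes it to an unconditional identity in distribution, swapping $\Phi_{\sigma\mathcal{D}}$ for $\Phi_{\mathcal{D}}$ on the right-hand side above. Chaining the two identities yields exactly the desired exchangeability.

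The main subtlety, and the reason \ref{ass:imp_sym} is needed at all, is that $\Phi$ is itself a random function of the full dataset (and possibly of independent internal randomness of $\mathcal{I}$). Permuting the data indices while keeping $\Phi$ fixed is not literally the same operation as retraining $\mathcal{I}$ on the permuted data; the conditional-on-$\mathcal{D}$ formulation of \ref{ass:imp_sym} is precisely what allows the two operations to be identified in distribution jointly with the data, rather than only marginally. Once this is granted, the rest is routine book-keeping and the missingness mechanism never enters.
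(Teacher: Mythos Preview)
Your argument is correct and follows the same route as the paper: the paper's own proof is a two-line sketch that invokes \ref{ass:iid} and \ref{ass:imp_sym} and then defers to the proof of Theorem~1b in \citet{bacarati_nexp}, which is exactly the two-step decomposition you spell out (permute the data, then use symmetry of $\mathcal{I}$ to swap $\Phi_{\sigma\mathcal{D}}$ for $\Phi_{\mathcal{D}}$). One minor point: \ref{ass:imp_sym} states the conditional equality given $(X^{(k)},M^{(k)})_{k=1}^{n+1}$ rather than given the full $\mathcal{D}$, but since the internal randomness of $\mathcal{I}$ is independent of the $Y^{(k)}$'s this extends immediately to conditioning on $\mathcal{D}$, so your Step~2 goes through as written.
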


Note that if we replace \ref{ass:iid} by an i.i.d. assumption, the imputed data set is only exchangeable but not i.i.d. without further assumptions on $\mathcal{I}$. Indeed, even simple mean imputation breaks independence.

\begin{proposition}[(Exact) validity of impute-then-predict+conformalization]
\label{prop:marg_cp_na}
If \ref{ass:iid} and \ref{ass:imp_sym} are satisfied, impute-then-predict+conformalization is marginally valid. If moreover the scores are almost surely distinct, it is exactly valid.
\end{proposition}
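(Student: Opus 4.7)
The plan is to reduce Proposition~3.2 to the standard split conformal validity argument, with Lemma~3.1 doing the heavy lifting to handle the complication introduced by the imputation step.

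First, I would invoke Lemma~3.1 to assert that the imputed sequence $\bigl(\Phi(X^{(k)},M^{(k)}),\,M^{(k)},\,Y^{(k)}\bigr)_{k=1}^{n+1}$ is exchangeable. This is the crucial step because, unlike vanilla split CP where the raw data is exchangeable by assumption, here the imputation function $\Phi$ is fit on all $n+1$ incomplete feature--mask pairs, and naively this could introduce dependence that destroys exchangeability. Assumption~\ref{ass:imp_sym}, combined with \ref{ass:iid}, rules this out.

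Second, I would condition on the proper training indices $\text{Tr}$ and on the imputed training data $\{(\Phi(X^{(k)},M^{(k)}), M^{(k)}, Y^{(k)})\}_{k \in \text{Tr}}$. Because the full imputed sequence is exchangeable and $\text{Tr}$ is chosen in a data-independent way, the remaining imputed points indexed by $\text{Cal} \cup \{n+1\}$ are themselves exchangeable conditionally on this information. Moreover, the fitted regressor (or quantile regressors, in the CQR instantiation) $\hat f$ is now a deterministic function of the conditioning information. The conformity scores $\{s(X^{(k)}, Y^{(k)})\}_{k \in \text{Cal}}$ and the test score $s(X^{(n+1)}, Y^{(n+1)})$ --- both computed after applying $\Phi$ to the features --- are thus obtained by applying a fixed symmetric map to an exchangeable family, and so they form an exchangeable collection of $\#\text{Cal}+1$ real random variables.

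Third, with exchangeability of the scores in hand, the proof is concluded by the classical split CP quantile argument (e.g.\ Lemma~2 in Romano et al.\ 2019 or Section~2 in Lei et al.\ 2018): the rank of the test score among the calibration scores is stochastically no larger than uniform on $\{1,\ldots,\#\text{Cal}+1\}$, which yields
\[
\mathds{P}\bigl(Y^{(n+1)} \in \widehat C_\alpha(X^{(n+1)},M^{(n+1)})\bigr) \geq 1-\alpha.
\]
For the upper bound under almost surely distinct scores, the same rank is then exactly uniform on $\{1,\ldots,\#\text{Cal}+1\}$, giving the complementary $1-\alpha + 1/(\#\text{Cal}+1)$ bound. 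Finally, I would remove the conditioning on the training information by taking the outer expectation, which preserves both inequalities.

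The main obstacle I anticipate is not the CP mechanics but rather the bookkeeping around the imputation: one must be careful that $\Phi$ is built before the split, so the imputed calibration and test points are not independent even if the raw ones were i.i.d. Lemma~3.1 is exactly what sidesteps this, which is why the proof collapses to a short argument once it is in place; the remainder is just a clean application of the standard exchangeable-rank lemma.
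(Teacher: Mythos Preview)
Your proposal is correct and follows essentially the same route as the paper: invoke Lemma~3.1 to obtain exchangeability of the imputed sequence, then reduce to the standard split conformal validity argument (the paper simply cites \citet{papadopoulos_inductive_2002}, \citet{lei_distribution-free_2018}, and \citet{angelopoulos-gentle} for this last step, whereas you unpack the conditioning-on-$\rm{Tr}$ and uniform-rank argument explicitly). The only cosmetic difference is that the paper states and proves a slightly more general version in the appendix (allowing the symmetry assumption to hold only on a subset $\mathcal{T} \subset \llbracket 1,n+1\rrbracket$, which in particular covers the case where $\Phi$ is fit on $\rm{Tr}$ alone), but for the proposition as stated your argument is complete.
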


This is an important first positive result (proved in \Cref{app:itp+conf_proof}) showing that CP applied on an imputed data set has the same validity properties as on complete data, regardless of the missing value mechanism (MCAR, MAR or MNAR) and of the symmetric imputation scheme. Note that similar propositions could be derived for full CP \citep{vovk_algorithmic_2005} and Jackknife+ \citep{barber2021jackknife}. 

\Cref{prop:marg_cp_na} complements the work by \citet{yang2015features}, that also guarantees \emph{marginal} coverage for full CP, with the striking difference of having a complete training data.

\section{Challenge: \texttt{NA}s induce heteroskedasticity}
\label{sec:th_glmdata}
To better understand  the interplay between missing values and conditional coverage with respect to the mask, we consider an illustrative example of a Gaussian linear model. 

\begin{model}[Gaussian linear model]
\label{mod:glm}

The data is generated according to a linear model and the covariates are Gaussian conditionally to the pattern:
\begin{itemize}[topsep=0pt,noitemsep,leftmargin=*]
	\item $Y = \beta^T X + \varepsilon$, $\varepsilon \sim \mathcal{N}(0, \sigma^2_{\varepsilon}) \perp\!\!\!\!\perp (X,M)$, $\beta \in \mathds{R}^d$.
	\item for all $m \in \mathcal{M}$, there exist $\mu^{m}$ and $\Sigma^{m}$ such that
   $ X | (M = m) \sim \mathcal{N}(\mu^{m}, \Sigma^{m}).$
\end{itemize}

\end{model}

In particular, \Cref{mod:glm} is verified when $X$ is Gaussian and the missing data is MCAR. \Cref{mod:glm} is more general: it even includes MNAR examples \citep{ayme2022}.

\begin{proposition}[Oracle intervals]
\label{prop:oracle_intervals}

The oracle predictive interval is defined as the smallest valid interval knowing $X_{\obs(M)}$ and $M$. Under \Cref{mod:glm}, its length only depends on the mask. For any $m \in \mathcal{M}$ this oracle length is:
\begin{equation}
\mathcal{L}^*_{\alpha}(m) = 2 q^{\mathcal{N}(0,1)}_{1-\frac{\alpha}{2}} \sqrt{\beta^T_{\mis(m)} \Sigma^{m}_{\mis|\obs} \beta_{\mis(m)} + \sigma^2_{\varepsilon}}.
\label{eq:oracle_glm}
\end{equation}
See \Cref{app:glm} for the definition of $\mu^{m}_{\mis|\obs}$ and $\Sigma^{m}_{\mis|\obs}$ and the quantiles of $Y | (X_{\obs(m)}, M = m)$.
\end{proposition}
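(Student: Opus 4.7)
The plan is to compute the conditional distribution of $Y$ given $(X_{\obs(m)}, M = m)$ explicitly under \Cref{mod:glm}, show that it is Gaussian, and then invoke the standard fact that the shortest level-$(1-\alpha)$ interval for a Gaussian is the symmetric one around the mean.

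First, I would split the linear predictor along the mask $m$, writing
\begin{equation*}
Y = \beta_{\obs(m)}^T X_{\obs(m)} + \beta_{\mis(m)}^T X_{\mis(m)} + \varepsilon.
\end{equation*}
Since $\varepsilon \perp\!\!\!\!\perp (X,M)$, the conditional law of $\varepsilon$ given $(X_{\obs(m)}, M=m)$ remains $\mathcal{N}(0,\sigma_\varepsilon^2)$. By the Gaussian conditioning formulas applied to the joint law $X \mid (M = m) \sim \mathcal{N}(\mu^m,\Sigma^m)$, the conditional law $X_{\mis(m)} \mid (X_{\obs(m)}, M=m)$ is Gaussian with mean $\mu^m_{\mis|\obs}$ and covariance $\Sigma^m_{\mis|\obs}$ (these are the classical Schur-complement expressions recalled in \Cref{app:glm}). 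Combining these two facts and the independence of $\varepsilon$ from $X$, we obtain that
\begin{equation*}
Y \mid (X_{\obs(m)}, M=m) \sim \mathcal{N}\!\left(\beta_{\obs(m)}^T X_{\obs(m)} + \beta_{\mis(m)}^T \mu^m_{\mis|\obs},\; \beta_{\mis(m)}^T \Sigma^m_{\mis|\obs} \beta_{\mis(m)} + \sigma_\varepsilon^2\right).
\end{equation*}

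Next, I would invoke the elementary fact that for a Gaussian random variable, the shortest interval with probability mass $1-\alpha$ is symmetric around its mean and has half-width $q^{\mathcal{N}(0,1)}_{1-\alpha/2}$ times the standard deviation. Applying this to the conditional law above yields
\begin{equation*}
\mathcal{L}^*_\alpha(m) = 2\, q^{\mathcal{N}(0,1)}_{1-\alpha/2}\sqrt{\beta_{\mis(m)}^T \Sigma^m_{\mis|\obs}\beta_{\mis(m)} + \sigma_\varepsilon^2},
\end{equation*}
which indeed depends on $m$ but not on the realized value of $X_{\obs(m)}$, yielding the claim.

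There is no real obstacle here beyond bookkeeping; the only mildly delicate point is justifying that independence of $\varepsilon$ from $(X,M)$ survives after further conditioning on $X_{\obs(m)}$, which follows directly from $\varepsilon \perp\!\!\!\!\perp (X,M)$ by the tower/independence property. Everything else is a direct application of standard Gaussian algebra and the minimum-length property of symmetric Gaussian intervals.
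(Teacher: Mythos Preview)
Your proposal is correct and follows essentially the same route as the paper: split $Y$ along the mask, apply Gaussian conditioning to get the law of $X_{\mis(m)}$ given $X_{\obs(m)}$, combine with the independence of $\varepsilon$ to obtain the conditional Gaussian law of $Y$, and read off the interval length. The only minor addition you make beyond the paper is explicitly invoking the minimum-length property of symmetric Gaussian intervals (needed for the ``smallest valid interval'' phrasing), which the paper leaves implicit.
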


Eq. \eqref{eq:oracle_glm} stresses that even when the noise of the generative model is~homoskedastic,  \textit{missing values induce heteroskedasticity}. Indeed, the covariance of the conditional distribution of $Y | (X_{\obs(m)}, M = m)$ depends on $m$. Furthermore, the uncertainty increases when missing values are associated with larger regression coefficients (i.e. the most predictive variables): if $\beta_{\mis(m)}$ is large, then $\mathcal{L}^*_{\alpha}(m)$ is also large, as $\Sigma^{m}_{\mis|\obs}$ is positive. In the extreme case where all the variables are missing, i.e. $m = (1,\cdots,1)$, $\mathcal{L}^*_{\alpha}(m) = 2 q^{\mathcal{N}(0,1)}_{1-\frac{\alpha}{2}} \sqrt{ \beta \Sigma^{m} \beta^T + \sigma_\varepsilon^2} = q^{Y}_{1-\frac{\alpha}{2}} - q^{Y}_{\frac{\alpha}{2}}$. On the contrary, if $m = (0,\cdots,0)$ (that is all $X_j$ are observed), $\beta_{\mis(m)}$ is empty and $\mathcal{L}^*_{\alpha}(m) = 2 q^{\mathcal{N}(0,1)}_{1-\frac{\alpha}{2}} \sigma_\varepsilon = q^{\varepsilon}_{1-\frac{\alpha}{2}} - q^{\varepsilon}_{\frac{\alpha}{2}}$. We illustrate this induced heteroskedasticity and the impact of the predictive power in \Cref{fig:poc_toy}, and in \Cref{app:glm} along with a discussion emphasizing that even with the Bayes predictor for the conditional mean, mean-based CP does not yield intervals that are MCV. 

The above analysis motivates the following two design choices we make in this work. First, we advocate working with QR models rather than classic regression ones, as the former can handle heteroskedastic data. Second, we recommend providing the mask information to the model in addition to the input covariates, as the mask may further encourage the model to construct an interval with a length adaptive to the given mask. 
Therefore, we focus on CQR \citep{romano_conformalized_2019}\footnote{Note that our proposed framework is not based on CQR, this is only one instance of it.}, an adaptive version of SCP, and concatenate the mask to the features. However, the predictive intervals of this procedure may not necessarily provide valid coverage conditionally on the masks, especially in finite samples as shown in \Cref{fig:poc_masking} (\textcolor{blindorange}{orange crosses}). This is because the quality of the prediction at some $(X,M)$ depends strongly on $M$, as there is an exponential number of patterns ($2^d$) for a finite training size, whereas the correction term is calculated independently of the masks.

\section{Achieving mask-conditional-validity \camerareadyrevision{(MCV)}}
\label{sec:methods}
We now aim at achieving \textit{mask-conditional-validity} (MCV) defined as follows using an ordering on the masks.
\begin{definition}[Included masks] 
\label{def:included_masks}
Let $(\mathring{m},\breve{m}) \in \mathcal{M} ^2$,  $\mathring{m} \subset \breve{m}$ if for any $j \in \llbracket 1,d \rrbracket$ such that $\mathring{m}_j = 1$ then $\breve{m}_j = 1$, i.e. $\breve{m}$ includes at least the same missing values than $\mathring{m}$.
\end{definition}

\begin{definition}[MCV]
\label{def:cond_cov}
A method is MCV if for any $m \in \mathcal{M}$ the followinglower bound is satisfied, and exactly MCV if for any $m \in \mathcal{M}$ the followingupper bound is also satisfied:
\begin{align*}
1-\alpha \!\!\! \underset{\text{valid}}{\leq} \!\! \mathds{P}\left(  Y ^{(n+1)} \in \widehat{C}_{\alpha} \left( X^{(n+1)},m \right) | M^{(n+1)} = m \right) \\[-10pt]
\underset{\text{exactly valid}}{\leq} 1 - \alpha + \frac{1}{\#\rm{Cal}^{m} + 1},
\end{align*}
where $\rm{Cal}^{m} = \left\{ k \in \rm{Cal} \text{ such that } m^{(k)} \subset m \right\}$.
\end{definition}

\begin{figure}[!b]
    \centering
    \centerline{\includegraphics[width=0.55\textwidth]{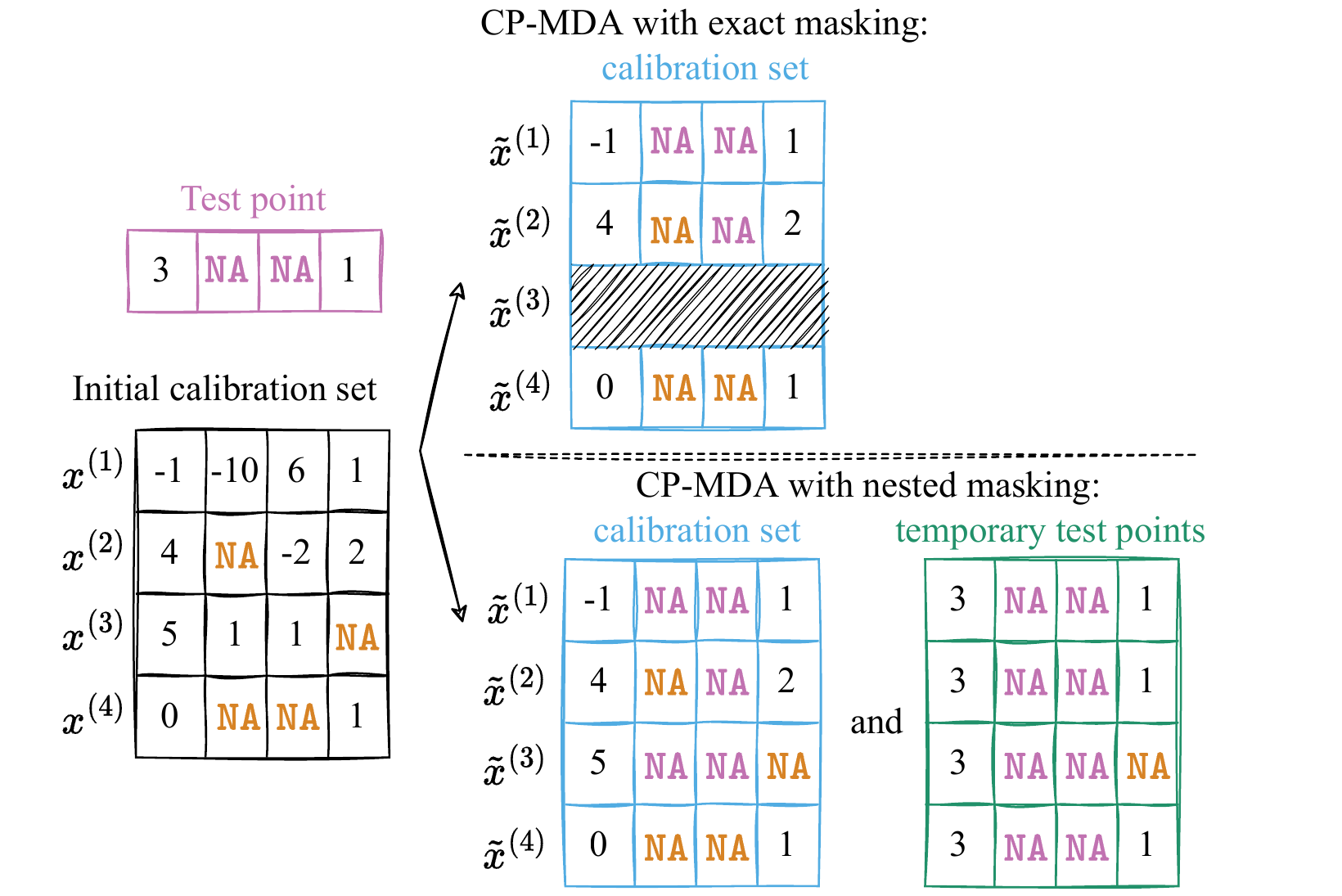}}
    \vspace{5pt}
    \caption{CP-MDA illustration. \textcolor{blindblue}{Augmented calibration set} according to one \textcolor{blindpurple}{test point}. For \mask, the \textcolor{blindblue}{augmented masks of the calibration set} are also \textcolor{blindgreen}{applied temporarily to the test point}.}
    \label{fig:algos_scheme}
\end{figure}

\paragraph{On the relevance of MCV.} In a medical application context, it is very common to have missing data completely at random (MCAR) when a measurement device fails or  the medical team forgot to fill out some forms. As a general rule, from an \emph{equity standpoint}, a patient whose data is missing should not be penalized (because of ``bad luck'') by being assigned a prediction interval that is less likely to include the true response than if the data were complete. 

Furthermore, the mask can also be linked to an external unobserved feature corresponding to a meaningful category. Consider the problem of predicting a disease among a population. Aggregating data from multiple hospitals with different practices and measurement devices can imply different features are observed for each patient. This can be viewed as a MCAR setting when \textit{identically distributed} patients\footnote{say, for example young children whose input/output distribution is \textit{not} dependent on the neighborhood.} are assigned an hospital at random. Patterns are then linked to the cities, that themselves are related to socio-economical data. 

Overall, the missing patterns form \textit{meaningful categories} and \textit{ensuring MCV yields more equitable treatment}. 
Therefore, a method achieving marginal coverage by systematically failing on a given pattern, even in a MCAR setting, is not suitable. Finally, in non-MCAR cases, the pattern may be exactly related to critical discriminating features.

\subsection{Missing Data Augmentation (MDA)}
\label{sec:methods_desc}

To obtain a MCV procedure, we suggest \textcolor{blindblue}{modifying the calibration set} according to the \textcolor{blindpurple}{mask of the test point}, while the training step is unchanged. More precisely, the mask of the test point is applied to the calibration set, as illustrated in \Cref{fig:algos_scheme}. The rationale is to mimic the missing pattern of the test point by artificially augmenting the calibration set with that mask. It ensures that the correction term is computed using data with (at least) the same missing values as the test point. We refer to this strategy as \textit{CP with Missing Data Augmentation} (CP-MDA), and derive two versions of it. \Cref{alg:cp_na_sub,alg:cp_na_jack} are written using CQR as the base conformal procedure, but they work with any conformal method as we describe in \Cref{app:methods_general}. \newline

\textbf{\Cref{alg:cp_na_sub} -- \masksub.} CP-MDA with \textit{exact masking} consists of keeping the \emph{artificially} \textcolor{blindblue}{masked calibration points} (l.~\ref{line_algo_1:mask_points_incal}) that have exactly the same missing pattern as the \textcolor{blindpurple}{test point} (l.~\ref{line_algo_1:compute_Caltest}). 
Then \Cref{alg:cp_na_sub} performs as impute-then-predict+conformalization: impute the calibration set (l.~\ref{line_algo_1:imput_cal}), predict on it and get the calibration scores (l.~\ref{line_algo_1:computescore}), compute their quantile to obtain the correction term (l.~\ref{line_algo_1:computequantile}), and finally impute and predict the test point with the fixed fitted model by adding and subtracting the correction term (l.~\ref{line_algo_1:compute_interval}) to the initial conditional quantile estimates. Note that \Cref{alg:cp_na_sub} is described for one test point for simplicity but extends easily to many test points. The computations are then shared: 
the training part (l.~\ref{line_algo_1:split}-\ref{line_algo_1:fit}) is common to any test point and the correction term (l.~\ref{line_algo_1:compute_Caltest}-\ref{line_algo_1:computequantile}) can be reused for any new test point with the same mask.

\begin{figure}[!t]
\vspace{-12pt}
\begin{algorithm}[H]
\caption{\masksub (with CQR)}
\label{alg:cp_na_sub}
\begin{algorithmic}[1] 
\REQUIRE Imputation algorithm $\mathcal{I}$, quantile regression algorithm $\mathcal{QR}$, significance level $\alpha$, training set $\left\{ \left(x^{(k)},m^{(k)},y^{(k)}\right)\right\}_{k=1}^n$, test point $\left( \textcolor{blindpurple}{ x^{(\text{test})},m^{(\text{test})} } \right)$
\ENSURE Prediction interval ${\widehat{C}}_{\alpha}\left( x^{(\text{test})},m^{(\text{test})} \right)$
\STATE \label{line_algo_1:split}Randomly split $\{1, \ldots, n\}$ into 2 disjoint sets $\rm{Tr}$ \& $\rm{Cal}$
\STATE Fit the imputation function: ${\Phi(\cdot) \leftarrow \mathcal{I}\left(\left\{ \left( x^{(k)}, m^{(k)} \right), k \in \rm{Tr}\right\}\right)}$ 
\STATE Impute the training set: ${\forall k \in \rm{Tr}}, x_{\text{imp}}^{(k)} = \Phi(x^{(k)},m^{(k)})$
\STATE \label{line_algo_1:fit} Fit $\mathcal{QR}$: 
\vspace{-8pt}
\begin{align*}
\hat{q}_{\frac{\alpha}{2}}(\cdot) & \leftarrow \mathcal{QR}\left(\left\{\left( x_{\text{imp}}^{(k)}, y^{(k)} \right), k \in \rm{Tr}\right\}, \alpha/2 \right) \\
\hat{q}_{1-\frac{\alpha}{2}}(\cdot) & \leftarrow \mathcal{QR}\left(\left\{\left( x_{\text{imp}}^{(k)}, y^{(k)} \right), k \in \rm{Tr}\right\}, 1-\alpha/2 \right)
\end{align*}  

\STATEx \textcolor{blindblue}{// Generate an augmented calibration set:} 
\STATE \label{line_algo_1:compute_Caltest}$\rm{Cal^{(\text{test})}} = \left\{ k \in \rm{Cal} \text{ such that } m^{(k)} \subset m^{(\text{test})} \right\}$
\FOR {$k \in \rm{Cal^{(\text{test})}}$}
\STATE \label{line_algo_1:mask_points_incal}$\widetilde m^{(k)} = \textcolor{blindpurple}{m^{(\text{test})}}$ \COMMENT {//Additional masking}

\ENDFOR \COMMENT{\textcolor{blindblue}{\;\; Augmented calibration set generated. //} }
\FOR {$k \in \rm{Cal^{(\text{test})}} $} 
\STATE \label{line_algo_1:imput_cal} Impute the calibration set: 
$x_{\text{imp}}^{(k)}=\Phi(x^{(k)},\widetilde m^{(k)})$
\STATE \label{line_algo_1:computescore}Set ${s^{(k)} = \max( \hat{q}_{\frac{\alpha}{2}}(x_{\text{imp}}^{(k)}) - y^{(k)} , y^{(k)} - \hat{q}_{1-\frac{\alpha}{2}}(x_{\text{imp}}^{(k)}) )}$
\ENDFOR
\STATE \label{line_algo_1:compute_score_set}Set $S = \{s^{(k)}, k \in \rm{Cal^{(\text{test})}}\}$
\STATE \label{line_algo_1:computequantile}Compute $\widehat Q_{1-\tilde\alpha}\left(S\right)$, the $1-\tilde\alpha$-th empirical quantile of $S$, with $1-\tilde\alpha := (1-\alpha)\left(1+{1}/{\#S}\right)$
\STATE \label{line_algo_1:compute_interval}Set  $\widehat{C}_{\alpha} ( \textcolor{blindpurple}{ x^{(\text{test})}, m^{(\text{test})} } ) =  \left[ \right. \hat{q}_{\frac{\alpha}{2}} \circ \Phi ( x^{(\text{test})}, m^{(\text{test})} ) - \widehat Q_{1-\tilde\alpha}\left(S\right) ; \left. \hat{q}_{1-\frac{\alpha}{2}} \circ \Phi ( x^{(\text{test})}, m^{(\text{test})} ) + \widehat Q_{1-\tilde\alpha}\left(S\right) \right]$
\end{algorithmic}
\end{algorithm}
\vspace{-20pt}
\end{figure}

In high dimensions, many calibration points may be discarded when applying \masksub since it is likely that their missing patterns would not be included in the one of the test point.\footnote{Yet, these discarded points could be used for training but this comes at the cost of fitting a different model for each pattern; such a path is reasonable if the data is scarce.} This limitation brings us to the second algorithm we propose, \mask.

\begin{figure}[t]
\vspace{-12pt}
\begin{algorithm}[H]
\caption{\mask (with CQR)}
\label{alg:cp_na_jack}
\begin{algorithmic}[1] 
\REQUIRE Same as \Cref{alg:cp_na_sub}
\ENSURE Same as \Cref{alg:cp_na_sub}
\STATE Compute lines \ref{line_algo_1:split} to \ref{line_algo_1:fit} of Algorithm~\ref{alg:cp_na_sub}
\STATEx \textcolor{blindblue}{// Generate an augmented calibration set:}  
\FOR {$k \in \rm{Cal}$}
\COMMENT {Additional nested masking}
\STATE \label{line_algo_2:mask_all}$\widetilde m^{(k)} = \max( \textcolor{blindpurple}{m^{(\text{test})}}, m^{(k)})$ 
\ENDFOR \COMMENT{\textcolor{blindblue}{\;\; Augmented calibration set generated. //} }
\FOR {$k \in \rm{Cal} $} 
\STATE \label{line_algo_2:impute_cal}Impute the calibration set: $x_{\text{imp}}^{(k)}:= \Phi\left(x^{(k)},\widetilde m^{(k)}\right)$
\STATE \label{line_algo_2:compute_score}Set ${s^{(k)} = \max( \hat{q}_{\frac{\alpha}{2}}(x_{\text{imp}}^{(k)}) - y^{(k)} , y^{(k)} - \hat{q}_{1-\frac{\alpha}{2}}(x_{\text{imp}}^{(k)}) )}$
\STATE \label{line_algo_2:pred_low}Set $z^{(k)}_{\frac{\alpha}{2}} = \hat{q}_{\frac{\alpha}{2}} \circ \Phi\left( \textcolor{blindgreen}{ x^{(\text{test})},\widetilde m^{(k)} }\right) - s^{(k)} $
\STATE \label{line_algo_2:pred_up}Set $z^{(k)}_{1-\frac{\alpha}{2}} = \hat{q}_{1-\frac{\alpha}{2}}\circ \Phi\left( \textcolor{blindgreen}{  x^{(\text{test})},\widetilde m^{(k)} } \right) + s^{(k)} $
\ENDFOR
\STATE \label{line_algo_2:bag_low}Set $Z_{\frac{\alpha}{2}} = \{z^{(k)}_{\frac{\alpha}{2}}, k \in \rm{Cal}\}$
\STATE \label{line_algo_2:bag_up}Set $Z_{1-\frac{\alpha}{2}} = \{z^{(k)}_{1-\frac{\alpha}{2}}, k \in \rm{Cal}\}$
\STATE \label{line_algo_2:quant_low}Compute $\widehat Q_{\tilde\alpha}\left(Z_{\frac{\alpha}{2}}\right)$
\STATE \label{line_algo_2:quant_up}Compute $\widehat Q_{1-\tilde\alpha}\left(Z_{1-\frac{\alpha}{2}}\right)$
\STATE \label{line_algo_2:compute_int}Set  $ {\widehat{C}}_{\alpha} \left( \textcolor{blindpurple}{ x^{(\text{test})}, m^{(\text{test})} } \right) = [ \widehat Q_{\tilde\alpha}\left(Z_{\frac{\alpha}{2}}\right) ; \widehat Q_{1-\tilde\alpha}\left(Z_{1-\frac{\alpha}{2}}\right) ]$
\end{algorithmic}
\end{algorithm}
\vspace{-20pt}
\end{figure}

\textbf{\Cref{alg:cp_na_jack} -- \mask.} CP-MDA with \textit{nested masking} avoids the removal of calibration points whose masks are not included in that of the test point. Instead, we \textcolor{blindblue}{apply} the \textcolor{blindpurple}{mask of the test point} to the \textcolor{blindblue}{calibration points}, and so \textcolor{blindblue}{we keep all the observations} (l.~\ref{line_algo_2:mask_all}). Next, we impute the \textcolor{blindblue}{masked calibration points} (l.~\ref{line_algo_2:impute_cal}) before computing their scores $s^{(k)}$ (l.~\ref{line_algo_2:compute_score}). Then, for each calibration point, the fitted quantile regressors are used to predict on the {\textcolor{blindgreen}{test point with a temporary mask}}, which matches \textcolor{blindgreen}{the mask of the given augmented calibration point}. These predictions are corrected with the score of the calibration point (l.~\ref{line_algo_2:pred_low}-\ref{line_algo_2:pred_up}) and stored in two bags $Z_{\frac{\alpha}{2}}$ for the lower interval boundary, and $Z_{1-\frac{\alpha}{2}}$ for the upper interval boundary (l.~\ref{line_algo_2:bag_low}-\ref{line_algo_2:bag_up}). The prediction is finally obtained by taking the $\alpha$ quantiles of the bags $Z$ (l.~\ref{line_algo_2:quant_low}-\ref{line_algo_2:compute_int}). 

The rationale for predicting on temporary test points with the mask of a given augmented calibration point is that we want to treat the test and calibration points in the same way.\footnote{This motivation is similar to the one of Jackknife+ \citep{barber2021jackknife} and out-of-bags methods \citep{gupta}.} We should note that this method may tend to achieve conservative coverage, since the augmented calibration set may have masks that overly include the missing pattern of the test point, i.e., the augmented points may have more missing values than the test point.

\subsection{Theoretical guarantees in finite sample}
\label{sec:methods_guar}

Let us consider the following assumptions. 
\begin{assumption}[$Y$ is not explained by $M$]
\label{ass:y_ind_m}
$( Y \ind M ) | X$.
\end{assumption}

\begin{assumption}[Stochastic domination of the quantiles]
\label{ass:sto_dom}
Let $(\mathring{m}, \breve{m}) \in \mathcal{M}^2$. If $\mathring{m} \subset \breve{m}$ then for any $\delta \in [0,0.5]$: 
\begin{itemize}[itemsep=0.5pt,topsep=0pt]
   \item  $q_{1-\delta/2}^{Y | (X_{\text{obs}\left(\mathring{m}\right)}, M = \mathring{m})} \leq q_{1-\delta/2}^{Y | (X_{\text{obs}\left(\breve{m}\right)}, M = \breve{m})}$,
    \item $q_{\delta/2}^{Y | (X_{\text{obs}\left(\mathring{m}\right)}, M = \mathring{m})} \geq q_{\delta/2}^{Y | (X_{\text{obs}\left(\breve{m}\right)}, M = \breve{m})}$.
\end{itemize}

\end{assumption}
\ref{ass:sto_dom} grasps the underlying intuition that the conditional distribution of $Y|(X_{\obs(m)}, M = {m})$ tends to have larger deviations when the number of observed variables is smaller, in concordance with the intuition that  observing predictive variables reduce the conditional randomness of~$Y|X_{\obs}$.

The following theorems (proved in \Cref{app:methods}) state the finite sample guarantees of CP-MDA.

\begin{theorem}[MCV of CP-MDA]
\label{prop:meth_cond}
Assume the missing mechanism is MCAR, and \ref{ass:iid} to \ref{ass:y_ind_m}. Then:
\begin{enumerate}[topsep=0pt,noitemsep,leftmargin=*,wide]
	\item \masksub is MCV;
	\item if the scores are almost surely distinct, \masksub is exactly MCV;
	\item if \ref{ass:sto_dom} also holds, \mask is MCV, up to a technical minor modification of the output. 
\end{enumerate}
\end{theorem}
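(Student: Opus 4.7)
I would fix a mask $m \in \mathcal{M}$ and work conditionally on the event $\{M^{(n+1)}=m\}$. The MCAR assumption gives $M \ind X$ and \ref{ass:y_ind_m} gives $Y \ind M \mid X$, so combined they yield the joint independence $M \ind (X,Y)$. By \ref{ass:iid}, the random filter $\mathrm{Cal}^{(\mathrm{test})} = \{k \in \mathrm{Cal} : M^{(k)} \subset m\}$ is therefore independent of the $(X^{(k)}, Y^{(k)})$'s, and conditionally on its value the family $\{(X^{(k)}, Y^{(k)})\}_{k \in \mathrm{Cal}^{(\mathrm{test})} \cup \{n+1\}}$ remains exchangeable. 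Once line~\ref{line_algo_1:mask_points_incal} of \Cref{alg:cp_na_sub} reassigns every mask in this family to the common value $m$, the symmetric-imputation hypothesis \ref{ass:imp_sym} preserves this exchangeability, and hence the conformity scores $s^{(k)}$ for $k \in \mathrm{Cal}^{(\mathrm{test})} \cup \{n+1\}$ are exchangeable. A standard split-CP rank argument with the inflated level $\tilde\alpha$ then gives $\mathds{P}(s^{(n+1)} \leq \widehat{Q}_{1-\tilde\alpha}(S) \mid M^{(n+1)}=m) \geq 1-\alpha$, which rearranges into the MCV lower bound; almost-sure distinctness of the scores supplies the matching upper bound exactly as in \citet{lei_distribution-free_2018}, producing parts~1 and~2.

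\textbf{Part 3 (\mask).} Here the augmented mask $\widetilde M^{(k)} = \max(m, M^{(k)}) \supset m$ varies with $k$, so the part~1 argument no longer applies directly. My plan is a two-stage reduction. Stage one, coverage of a \emph{virtual} response: for each $k$, introduce $\widetilde Y^{(k)}$ drawn from the conditional law of $Y$ given $(X_{\obs(\widetilde M^{(k)})}, M=\widetilde M^{(k)})$, so that the triples $(X^{(n+1)}, \widetilde M^{(k)}, \widetilde Y^{(k)})$ and $(X^{(k)}, \widetilde M^{(k)}, Y^{(k)})$ share the same mask. Applying the part~1 argument conditionally on the whole mask vector $(M^{(k)})_{k=1}^{n+1}$ then makes the relevant scores exchangeable, and an aggregation over $k$ through the bags $Z_{\alpha/2}$ and $Z_{1-\alpha/2}$ in the spirit of Jackknife+ (\citet{barber2021jackknife}) produces coverage of the virtual responses, up to the classical factor-of-two slack that I suspect is precisely what the ``technical minor modification of the output'' absorbs. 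Stage two, transfer via \ref{ass:sto_dom}: since $\widetilde M^{(k)} \supset m$, \ref{ass:sto_dom} places the central $\alpha$-quantiles of $Y \mid (X_{\obs(m)}, M=m)$ inside those of $Y \mid (X_{\obs(\widetilde M^{(k)})}, M=\widetilde M^{(k)})$, so any interval that covers $\widetilde Y^{(k)}$ with high probability also covers the true $Y^{(n+1)}$.

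\textbf{Main obstacle.} The delicate step is stage one: making the virtual-response coupling compatible with the global fit of $\Phi$ and with the fact that $\widetilde M^{(k)}$ is itself a function of $M^{(k)}$. The MCAR + \ref{ass:y_ind_m} identity $M \ind (X,Y)$ is precisely what allows the mask substitution to preserve the joint law after conditioning on the entire mask vector, and the exact form of the ``minor modification'' should fall out of this conditioning combined with \ref{ass:sto_dom} applied pointwise to the lower and upper quantile comparisons. The remaining ingredients -- preservation of exchangeability under symmetric imputation and the standard finite-sample quantile bound -- are routine and follow from the earlier results in the paper (\Cref{lem:exch_imp} and \Cref{prop:marg_cp_na}).
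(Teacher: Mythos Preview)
Your argument for parts~1 and~2 matches the paper's: MCAR together with \ref{ass:y_ind_m} gives $M\ind(X,Y)$, so the random subset $\Cal^{(\test)}$ is independent of the $(X,Y)$ pairs, exchangeability survives the common re-masking and the symmetric imputation, and the standard split-CP quantile inequality concludes.

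For part~3 your plan diverges from the paper in two substantive ways. First, you misidentify the ``technical minor modification'': it is \emph{not} a Jackknife+ factor-of-two slack. The paper partitions the bag by augmented mask, $Z_{\alpha/2}=\bigcup_{\tilde m\supset m} Z^{\tilde m}_{\alpha/2}$, and the modification is to output $[\widehat Q_{\tilde\alpha}(Z^{\tilde m}_{\alpha/2});\widehat Q_{1-\tilde\alpha}(Z^{\tilde m}_{1-\alpha/2})]$ for a \emph{single} $\tilde m\supset m$ (arbitrary or randomly selected), not for the full bag; MCV of the unmodified output is explicitly left as an open conjecture. Second, the proof that each sub-bag $Z^{\tilde m}$ yields a valid interval does not use virtual responses or a Jackknife+ aggregation---there is no leave-one-out structure here for the tournament counting of \citet{barber2021jackknife} to exploit, and pairwise exchangeability alone does not deliver the quantile bound. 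Instead, the paper applies part~1 with test mask $\tilde m$ to get coverage conditional on $M^{(\test)}=\tilde m$, then conditions further on $(X^{(\test)}_{\obs(\tilde m)},S^{\tilde m})$ so that the interval endpoints become measurable, rewrites the conditional non-coverage as a difference of the CDF values $F_{Y|(X_{\obs(\tilde m)},M=\tilde m)}$ at those endpoints, and invokes \ref{ass:sto_dom} \emph{directly on these CDF values} to replace the conditioning event $M=\tilde m$ by $M=m$. Your stage-two intuition about \ref{ass:sto_dom} is therefore pointed in the right direction, but the mechanism is a CDF comparison after conditioning, not a coupling of virtual responses, and no aggregation across different $\tilde m$ is ever performed.
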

The challenge in proving MCV of \mask is that the augmented calibration and test points are not exchangeable conditional on the mask and thus may result in under-coverage. 
However, by imposing \ref{ass:sto_dom} we prove that this violation of exchangeability still leads to MCV (and often conservative MCV) (see Lemma~\ref{lem:aux}).  
We conjecture that \mask attains MCV (without any modification), as also supported by  experiments. However, we~could not prove it without making an independence assumption which we prefer to avoid as exchangeability is key to imputation methods. Instead, we prove in \Cref{th:precise_conservative} the MCV of any variant outputting  $[ \widehat Q_{\tilde\alpha}(Z^{\tilde{m}}_{\frac{\alpha}{2}}) ; \widehat Q_{1-\tilde\alpha}(Z^{\tilde m}_{1-\frac{\alpha}{2}}) ]$ for $Z^{\tilde{m}}_{\frac{\alpha}{2}} $ the subset of  $Z_{\frac{\alpha}{2}}$  composed with points using mask $\tilde m$ at l.~\ref{line_algo_2:impute_cal}-\ref{line_algo_2:pred_up}.

\begin{theorem}[Marginal validity of CP-MDA]
\label{prop:meth_marg}
Under then same assumptions as \Cref{prop:meth_cond} (i) \masksub is marginally valid; (ii) if \ref{ass:sto_dom} also holds, \mask is marginally valid (with the same caveats as in \Cref{prop:meth_cond}).

\end{theorem}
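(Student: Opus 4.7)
The plan is to obtain marginal validity as an immediate corollary of the mask-conditional validity established in \Cref{prop:meth_cond}, by averaging the conditional guarantee over the distribution of $M^{(n+1)}$ via the tower property.

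For claim (i) on \masksub, I would fix a mask $m \in \mathcal{M}$ with $\mathds{P}(M^{(n+1)} = m) > 0$, and invoke \Cref{prop:meth_cond}(1) to get
\[
\mathds{P}\bigl(Y^{(n+1)} \in \widehat{C}_\alpha(X^{(n+1)}, m) \,\big|\, M^{(n+1)} = m\bigr) \geq 1-\alpha.
\]
Then summing (or integrating) against the marginal law of $M^{(n+1)}$,
\[
\mathds{P}\bigl(Y^{(n+1)} \in \widehat{C}_\alpha\bigr) = \sum_{m \in \mathcal{M}} \mathds{P}(M^{(n+1)} = m)\, \mathds{P}\bigl(Y^{(n+1)} \in \widehat{C}_\alpha \,\big|\, M^{(n+1)} = m\bigr) \geq 1-\alpha,
\]
which is the desired marginal validity from \Cref{def:marg_cov}. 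For claim (ii) on \mask under the additional \ref{ass:sto_dom}, the argument is identical, simply replacing the use of \Cref{prop:meth_cond}(1) by \Cref{prop:meth_cond}(3) (with the same technical modification of the output that is already described there).

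The main subtlety to address is the boundary case where $\rm{Cal}^{m} = \emptyset$ for some mask $m$ that has positive probability under $M^{(n+1)}$, since then MCV is vacuous: either one conditions further on $|\rm{Cal}^{m}| \geq 1$ and handles the empty event by setting $\widehat{C}_\alpha(X^{(n+1)}, m) = \mathds{R}$ by convention (which covers trivially and leaves the total probability lower bound intact), or one absorbs the empty case into the exchangeability argument of \Cref{prop:meth_cond}. Beyond this, the main obstacle is purely notational/measurability, as all the probabilistic content lives in \Cref{prop:meth_cond}; the passage to marginal coverage itself uses only the law of total probability, so no further assumption (in particular, no MCAR-specific manipulation beyond what is already assumed) is needed.
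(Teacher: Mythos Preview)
Your proposal is correct and matches the paper's own approach: the paper states that marginal validity of both \masksub and \mask follows ``directly by marginalizing'' the mask-conditional result of \Cref{prop:meth_cond} (respectively \Cref{th:precise_conservative}), which is exactly your tower-property argument. Your remark on the degenerate case $\rm{Cal}^{m}=\emptyset$ is a reasonable aside but not something the paper addresses explicitly.
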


\section{Towards asymptotic individualized coverage}
\label{sec:infinite}
Achieving validity conditionally on the mask is an important step towards conditional coverage: in practice one aims at the strongest coverage conditional on \textit{both} $X$ and $M$. \citet{lei_distribution-free_2014,vovk_conditional_2012,barber_limits_2021} studied a related question (without considering missing patterns) and concluded that it is impossible to achieve \textit{informative} intervals satisfying conditional coverage, $\mathds{P}( Y \in \widehat{C}_\alpha (x) | X = x ) \geq 1 - \alpha$ for any $x \in \mathcal{X}$ in the distribution-free and finite samples setting. Still, we can analyze the asymptotic regime, similarly to Theorem 1 of \citet{sesia2020}, which proves the asymptotic conditional validity of CQR (without the presence of missing values) under consistency assumptions on the underlying quantile regressor. Here, by contrast, we study the asymptotic conditional validity of the impute-then-predict+conformalization procedure, by analyzing the consistency of impute-then-regress in Quantile Regression (QR). That is, we aim at showing that we satisfy the required assumption of consistency to invoke Theorem 1 of \citet{sesia2020}. The proofs of this section are given in \Cref{app:infinite_data}.

To analyze the consistency of impute-then-predict procedures for QR, we extend the work of \citet{lemorvan2021} on mean regression. QR with missing values, for a quantile level $\beta$, aims at solving
\begin{equation}
\label{eq:risk_minimization}
 \min_{f : \mathcal{X} \times \mathcal{M} \rightarrow \mathds{R}} \mathcal{R}_{\ell_\beta}(f) := \mathds{E}\left[ \ell_\beta \left(Y, f\left(X,M\right)\right) \right],
\end{equation}
with $\ell_\beta$ the pinball loss $\ell_\beta(y, \hat y) = \rho_\beta(y-\hat y)$ and  ${\rho_\beta(u) = \beta|u|\mathds{1}_{\{u \geq 0\}} + (1-\beta)|u|\mathds{1}_{\{u \leq 0\}}}$.

An associated $\ell_\beta$-Bayes predictor minimizes Eq. \eqref{eq:risk_minimization}. Its risk is called the $\ell_\beta$-Bayes risk, noted $\mathcal{R}_{\ell_\beta}^*$. 
Impute-then-predict procedure in QR aims at solving
\begin{equation}
\label{eq:risk_minimization_imp}
\min_{g : \mathcal{X} \rightarrow \mathds{R}} \mathcal{R}_{\ell_\beta,\Phi}(g) := \mathds{E}\left[ \ell_\beta \left(Y, g \circ \Phi\left( X,M \right) \right) \right],
\end{equation}
for $\Phi$ any imputation. Let $ g^*_{\ell_\beta, \Phi} \in \argmin_{g} \; \; \mathcal{R}_{\ell_\beta,\Phi}(g) $.
The following proposition states that $\mathcal{R}_{{\ell_\beta},\Phi}(g^*_{{\ell_\beta}, \Phi}) = \mathcal{R}_{\ell_\beta}^*$  and the consistency of a universal learner.
\begin{proposition}[$\ell_\beta$-consistency of an universal learner]
\label{prop:qr_bayes}
Let $\beta \in [0,1]$. If $X$ admits a density on $\mathds{R}^d$, then, for almost all imputation function $\Phi \in \mathcal{F}^{I}_{\infty}$,  (i) $g^*_{{\ell_\beta}, \Phi} \circ \Phi$ is  $\ell_\beta$-Bayes-optimal (ii) any universally consistent algorithm for QR trained on the data imputed by $\Phi$ is $\ell_\beta$-Bayes-consistent (i.e., asymptotically in the training set size).
\end{proposition}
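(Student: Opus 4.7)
The plan is to follow the blueprint of \citet{lemorvan2021} for mean regression, replacing conditional expectations by conditional $\beta$-quantiles throughout. Since the pinball loss $\ell_\beta$ is a proper scoring rule for the $\beta$-quantile, the unrestricted minimizer of \eqref{eq:risk_minimization} is
\[
 f^*(x,m) \;=\; q_\beta\!\bigl(Y \,\big|\, X_{\obs(m)} = x_{\obs(m)},\, M = m\bigr),
\]
which is measurable with respect to $\sigma(X_{\obs(M)}, M)$. The strategy is therefore to show that, for a generic $\Phi \in \mathcal{F}^I_\infty$, the single random vector $\Phi(X,M)$ already encodes the pair $(X_{\obs(M)}, M)$ almost surely, so that $f^*$ factors as $g \circ \Phi$ for some measurable $g$, which then necessarily attains the infimum in \eqref{eq:risk_minimization_imp}.

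The technical core, which I would lift essentially verbatim from the mask-separation argument underlying Proposition~4.1 of \citet{lemorvan2021}, is the following claim. For $\Phi \in \mathcal{F}^I_\infty$ and $m \in \mathcal{M}$, let $A_m := \{\Phi(x,m) : x \in \mathds{R}^d\}$; then for almost all such $\Phi$, the sets $A_m$ and $A_{m'}$ intersect in a Lebesgue-null set whenever $m \neq m'$. Since $X$ admits a density, this produces a measurable left-inverse $\Psi$ satisfying $\Psi(\Phi(X,M)) = (X_{\obs(M)}, M)$ $\mathds{P}$-almost surely. Setting $g := f^* \circ \Psi$ then gives $g \circ \Phi = f^*$ a.s., hence $\mathcal{R}_{\ell_\beta, \Phi}(g) = \mathcal{R}_{\ell_\beta}^*$; since $g^*_{\ell_\beta,\Phi}$ is by definition a minimizer of $\mathcal{R}_{\ell_\beta, \Phi}$ and by \eqref{eq:risk_minimization_imp}--\eqref{eq:risk_minimization} cannot achieve a risk smaller than $\mathcal{R}_{\ell_\beta}^*$, claim (i) follows.

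For (ii), let $\hat g_n$ be universally consistent for the pinball loss, so that $\mathcal{R}_{\ell_\beta,\Phi}(\hat g_n) \to \mathcal{R}_{\ell_\beta,\Phi}(g^*_{\ell_\beta,\Phi})$ as $n \to \infty$. Combining with (i) yields $\mathcal{R}_{\ell_\beta}(\hat g_n \circ \Phi) \to \mathcal{R}_{\ell_\beta}^*$, which is exactly $\ell_\beta$-Bayes consistency. The main obstacle is the mask-separation step: I would have to verify that the genericity argument of \citet{lemorvan2021}, which is a geometric and measure-theoretic statement about the family of smooth imputations, goes through unchanged in our setting; I expect it to, precisely because it is independent of the loss and relies only on the density of $X$ and the $\mathcal{C}^\infty$ structure of $\mathcal{F}^I_\infty$. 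Once that is in hand, substituting the conditional mean by the conditional $\beta$-quantile in the remaining steps is essentially a matter of vocabulary, since the tower property and the Bayes-optimal characterization transfer cleanly from $\mathds{E}[\,\cdot\,]$ to the $\beta$-quantile functional.
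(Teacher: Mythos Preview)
Your proposal is correct and follows essentially the same route as the paper: both transplant the loss-independent mask-separation argument from \citet{lemorvan2021} (the paper quotes their Lemmas A.1 and A.2 verbatim) to conclude that $\Phi(X,M)$ determines $(X_{\obs(M)},M)$ almost surely, then compose the unrestricted Bayes predictor with the resulting recovery map. The only cosmetic difference is that you package this as a left-inverse $\Psi$ giving $g\circ\Phi=f^*$ a.s.\ directly, whereas the paper constructs $g$ iteratively over masks ordered by number of missing entries and then invokes sublinearity of $\rho_\beta$ to pass from a.e.\ equality to risk equality---a detour your formulation renders unnecessary.
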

Note that this QR case does not require $\mathds{E}\left[ \varepsilon | X_{\obs(M)}, M\right] = 0$, contrary to the quadratic loss  case~\citep{lemorvan2021}.

We conclude our asymptotic analysis of conditional coverage with \Cref{cor:asymp_cond}.
\begin{corollary}
\label{cor:asymp_cond}
For any missing mechanism, for almost all imputation function $\Phi \in \mathcal{F}^{I}_{\infty}$, if $F_{Y|(X_{\obs(M)},M)}$ is continuous, a universally consistent quantile regressor trained on the imputed data set yields asymptotic conditional coverage.
\end{corollary}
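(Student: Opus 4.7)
The corollary is obtained by composing \Cref{prop:qr_bayes} with Theorem 1 of \citet{sesia2020}. The overall idea is that CQR fed with the imputed covariates $(\Phi(X,M),M)$ is effectively a CQR procedure whose ``features'' capture all the information in $(X_{\obs(M)},M)$ (for almost all $\Phi \in \mathcal{F}^I_\infty$), so if the underlying quantile regressor is consistent for the conditional quantiles of $Y$ given these features, Sesia et al.'s result directly gives asymptotic conditional coverage.

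The first step is to identify the Bayes targets. For almost all $\Phi \in \mathcal{F}^I_\infty$, the map $(X,M) \mapsto (\Phi(X,M),M)$ is injective in the sense that $(\Phi(X,M),M)$ recovers $(X_{\obs(M)},M)$; consequently the conditional distribution of $Y$ given $(\Phi(X,M),M)$ coincides with the conditional distribution of $Y$ given $(X_{\obs(M)},M)$, and by \Cref{prop:qr_bayes} the Bayes-optimal predictor for the pinball loss $\ell_\beta$ on the imputed data is exactly $(x,m) \mapsto q_\beta^{Y \mid (X_{\obs(m)},M = m)}(x_{\obs(m)},m)$. Applying \Cref{prop:qr_bayes} twice, at levels $\beta = \alpha/2$ and $\beta = 1-\alpha/2$, yields that any universally consistent quantile regressor trained on the imputed data set is $\ell_\beta$-Bayes-consistent at both levels.

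The second step translates this risk-consistency into convergence of the estimated quantiles $\hat q_{n,\beta}$ to the true conditional quantiles $q_\beta^{Y \mid (X_{\obs(M)},M)}$, in a mode strong enough to invoke Sesia et al.'s theorem (typically $L^1$ convergence in $(X,M)$). Under continuity of $F_{Y \mid (X_{\obs(M)},M)}$, standard identifiability arguments for the pinball loss give that the excess pinball risk $\mathcal{R}_{\ell_\beta}(\hat q_{n,\beta}) - \mathcal{R}^*_{\ell_\beta}$ tending to zero implies $\hat q_{n,\beta} \to q_\beta^{Y \mid (X_{\obs(M)},M)}$ in probability. Once this consistency is in hand, Theorem 1 of \citet{sesia2020} applied with covariates $(\Phi(X,M),M)$ and response $Y$ yields the asymptotic conditional coverage claim.

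\textbf{Main obstacle.} The delicate point is the second step: upgrading the convergence of risks provided by \Cref{prop:qr_bayes} into the mode of quantile convergence required by Sesia et al.'s theorem. Mere continuity of $F_{Y \mid (X_{\obs(M)},M)}$ suffices for identifiability of the Bayes quantile, but quantitative control of $\|\hat q_{n,\beta} - q_\beta\|$ by the excess risk typically needs the conditional density to be positive at the target quantile. I would handle this by either (i) stating the corollary under the mild additional regularity implicitly assumed in \citet{sesia2020}, or (ii) using a classical argument that bounds $L^1$ deviations of a quantile estimate by a modulus-of-continuity term applied to the excess pinball risk, so that $\ell_\beta$-Bayes consistency transfers directly to $L^1$ consistency of $\hat q_{n,\beta}$.
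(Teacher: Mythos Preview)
Your route differs from the paper's. The paper's proof does \emph{not} invoke Theorem~1 of \citet{sesia2020}; that theorem appears only as context in the main text. Instead, the appendix argues directly: for fixed $(x,m)$ it differentiates $z\mapsto \mathds{E}[\rho_\beta(Y-z)\mid X_{\obs(m)}=x_{\obs(m)},M=m]$, obtains $H'_{x,m}(z)=\beta-\mathds{P}(Y\le z\mid X_{\obs(m)},M=m)$, and concludes that under continuity of $F_{Y\mid(X_{\obs(M)},M)}$ every $\ell_\beta$-Bayes predictor $f_\beta^*$ satisfies $\mathds{P}(Y\le f_\beta^*(X,M)\mid X_{\obs(M)},M)=\beta$. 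Hence the oracle interval $[f_{\alpha/2}^*,f_{1-\alpha/2}^*]$ already has exact conditional coverage, and \Cref{prop:qr_bayes} supplies the ``asymptotic'' part. This is more elementary than your plan: no external theorem, no separate step converting risk consistency into $L^1$ quantile convergence, and it targets plain QR---which is what the corollary literally states---rather than CQR.

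The ``main obstacle'' you flag---passing from Bayes-risk convergence to actual coverage---is real, and the paper's proof is equally informal on it: it too jumps from ``achieves the Bayes risk asymptotically'' plus ``the Bayes predictor has exact conditional coverage'' to ``asymptotic conditional coverage'' without spelling out the transfer. So your caution is well-placed; it is just that the paper chooses the shorter path of characterizing the pinball minimizer directly rather than routing through \citet{sesia2020}.
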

In words, the intervals obtained by taking Bayes predictors of levels $\alpha/2$ and $1-\alpha/2$ are exactly valid conditionally to both the mask $M$ and the observed variables $X_{\obs(M)}$, if $F_{Y|(X_{\obs(M)},M)}$ is continuous. 
Importantly, while this result is asymptotic, it holds for \textit{any} missing mechanism and it considers individualized conditional~coverage. 

\section{Empirical study}
\label{sec:experiments}
\textbf{Setup.} In all experiments, the data are imputed using iterative regression (\texttt{iterative ridge} implemented in Scikit-learn, \citet{scikit-learn}).\footnote{Theoretical results hold for any symmetric imputation.  In practice, constant, mean and MICE imputations gave similar results.} We compare the performance of our CQR-MDA-Exact and CQR-MDA-Nested (that is CP-MDA based on CQR) to CQR as well as to a vanilla QR (without any calibration). The predictive models are fitted on the imputed data concatenated with the mask. Without concatenating the mask to the features, the mask-conditional coverage of QR is worsened, as demonstrated in \Cref{sec:th_glmdata}. The prediction algorithm is a Neural Network (NN), fitted to minimize the pinball loss \citep[see \Cref{app:exp_set} for details]{chr}. For the vanilla QR, we use both the training and calibration sets for training.

\textbf{Synthetic and semi-synthetic experiments.} We designed the training and calibration data to have 20\% of MCAR values. To evaluate the test marginal coverage $\mathds{P} ( Y \in \widehat C_\alpha(X,M) )$, missing values are introduced in the test set according to the same distribution as on the training and calibration sets. Then, to compute an estimator of $\mathds{P} (Y \in \widehat C_\alpha(X,m) | M = m )$ for each $m \in \mathcal{M}$, we fix to a constant the number of observations per pattern,  to ensure that the variability in coverage is not impacted by $\mathds{P}\left( M = m \right)$. All experiments are repeated 100 times with different splits.

\subsection{Synthetic experiments: Gaussian linear data}

\textbf{Data generation.} The data is generated with $d=10$ according to \Cref{mod:glm}, with $X \sim \mathcal{N}\left(\mu, \Sigma \right)$, $\mu = (1,\cdots,1)^T$ and $\Sigma = \varphi (1,\cdots,1)^T(1,\cdots,1)+(1-\varphi)I_d$, $\varphi=0.8$, Gaussian noise $\varepsilon \sim \mathcal{N}(0,1)$ and the following regression coefficients $\beta = (1, 2, -1, 3, -0.5, -1, 0.3, 1.7, 0.4, -0.3)^T$\footnote{For dimension 3, in \Cref{fig:poc_toy}, the same model is used, keeping only the 3 first features and their associated parameters.}. Here, the oracle intervals are known (\Cref{prop:oracle_intervals}).

\textbf{Lowest and highest mask coverage, and associated length.} \Cref{fig:poc_masking,fig:poc_masking_worst} (\Cref{app:exp_synth}) and \Cref{fig:poc_masking_best} (\Cref{app:exp_synth}) show the lowest and highest mask coverage and their associated length as a function of the  training set size. The calibration size is fixed to 1000 and the test set contains 2000 points with the mask leading to the lowest coverage (here it corresponds to cases where only $X_4$ is observed) and 2000 points with the mask leading to the highest coverage (here it corresponds to all the variables observed). These figures highlight that:
\begin{itemize}[topsep=0pt,noitemsep,leftmargin=*]
	\item \textbf{CQR} and \textbf{QR} conditional coverage improve when the training size increases (\Cref{cor:asymp_cond});
	\item \textbf{Both versions of CQR-MDA} are MCV (\Cref{prop:meth_cond});
	\item \textbf{CQR-MDA-Exact} is exactly MCV as highest and lowest mask coverage are exactly 90\% (\Cref{prop:meth_cond});
    \item \textbf{CQR-MDA-Exact}'s lengths converge to the oracle~ones with increasing training size, showing it is not conservative, while \textbf{CQR-MDA-Nested} is overly conservative.
\end{itemize}

\begin{figure*}[!t]
    \centering
    \vspace{-8pt}
    \includegraphics[width=0.65\textwidth]{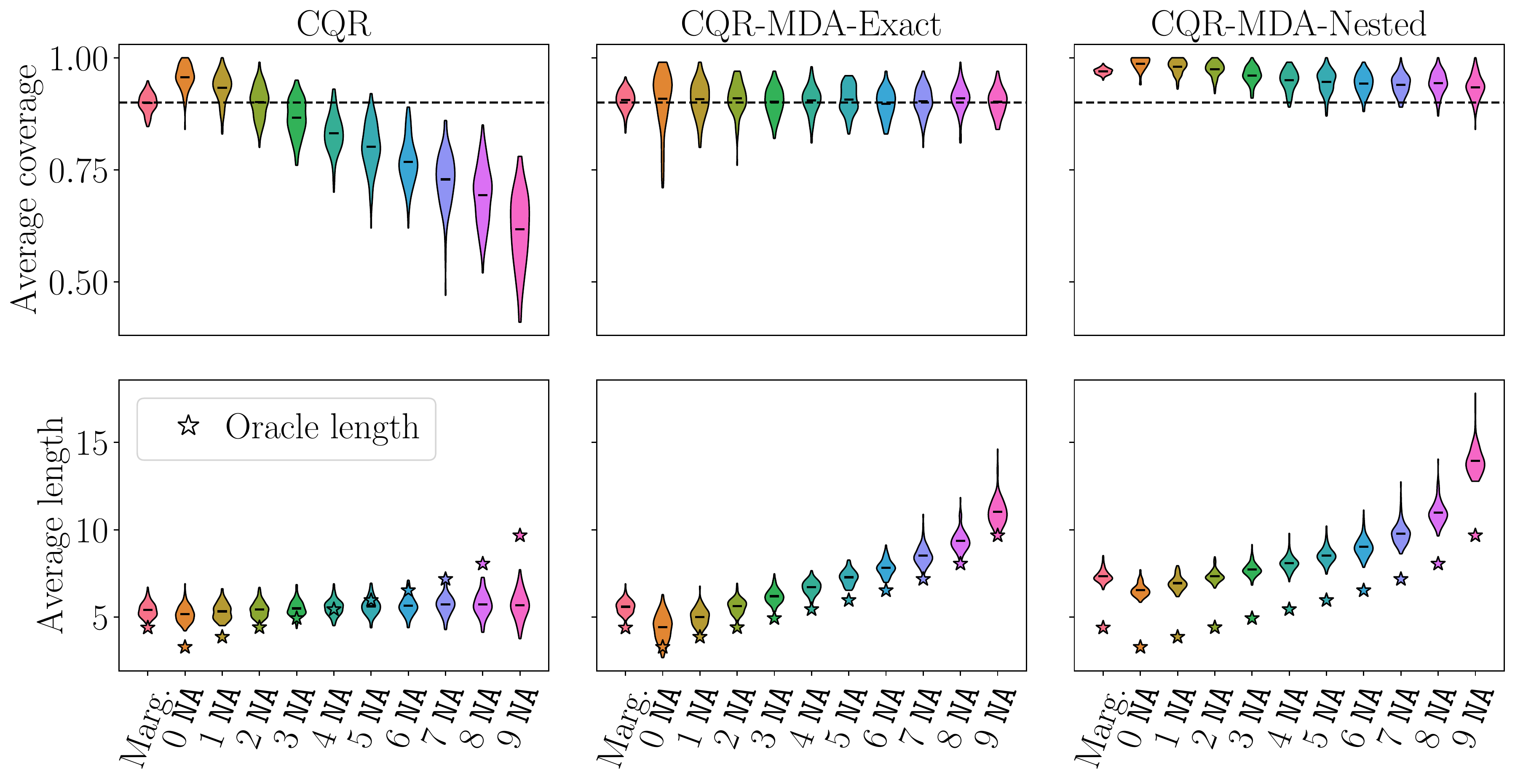}
    \vspace{-5pt}
    \caption{Average coverage (top) and length (bottom) as a function of the number of missing values (\texttt{NA}). The first violin plot shows the marginal coverage.
    $\#\Tr = 500$ and $\#\Cal = 250$. The marginal test set includes 2000 observations. The mask-conditional test set includes 100 individuals for each missing data pattern size.}
    \label{fig:d10_cov_len}
    \vspace{-7pt}
\end{figure*}

\textbf{Coverage and length by mask size.}
\Cref{fig:d10_cov_len} displays the average coverage and intervals' length as a function of the pattern size, i.e., the performance metrics are aggregated by the masks with the same number of missing variables; the first violin plot of each panel corresponds to the marginal coverage (see \Cref{app:exp_synth} for QR results). Note that only the pattern sizes are presented and not the patterns themselves as there are $2^d = 1024$ possible masks.\footnote{Note that in practice the  relationship between the coverage and the number of missing values is not necessarily monotonic as a mask with only one missing value can lead to more uncertainty than a mask with many missing values, see \Cref{app:glm}.} For each pattern size, 100 observations are drawn according to the distribution of $M | \text{size}(M)$ in the test set.
The training and calibration sizes are respectively 500 and 250 (\Cref{fig:d10_cov_len_1000} contains the results for other sizes). \Cref{fig:d10_cov_len} shows that:
\begin{itemize}[topsep=0pt,noitemsep,leftmargin=*]
	\item\textbf{CQR} is marginally valid (\Cref{prop:marg_cp_na});
	\item \textbf{CQR} and \textbf{QR} undercover with an increasing number of missing values. This can be explained because their length nearly does not vary with the size of the missing pattern, despite having the mask concatenated with the features;
	\item \textbf{Both versions of CQR-MDA} are marginally valid (Th.~\ref{prop:meth_marg}) and mask(-size)-conditionally-valid (Th.~\ref{prop:meth_cond});
	\item \textbf{CQR-MDA-Exact} is exactly mask(-size)-conditionally-valid (\Cref{prop:meth_cond}) and its length is close to the oracle ones. It has more variability for the patterns with few missing values as for these masks $\rm{Cal}^{(\text{test})}$ is smaller.
\end{itemize}

Similar experiments with 40\% of missing values are available in \Cref{app:exp_more_na}. Briefly, it corresponds to a setting where \mask is preferable over \masksub as the former outputs smaller intervals and is less variable.

\subsection{Semi-synthetic experiments}

\begin{figure*}[!t]
    \centering
    \includegraphics[width=0.98\textwidth]{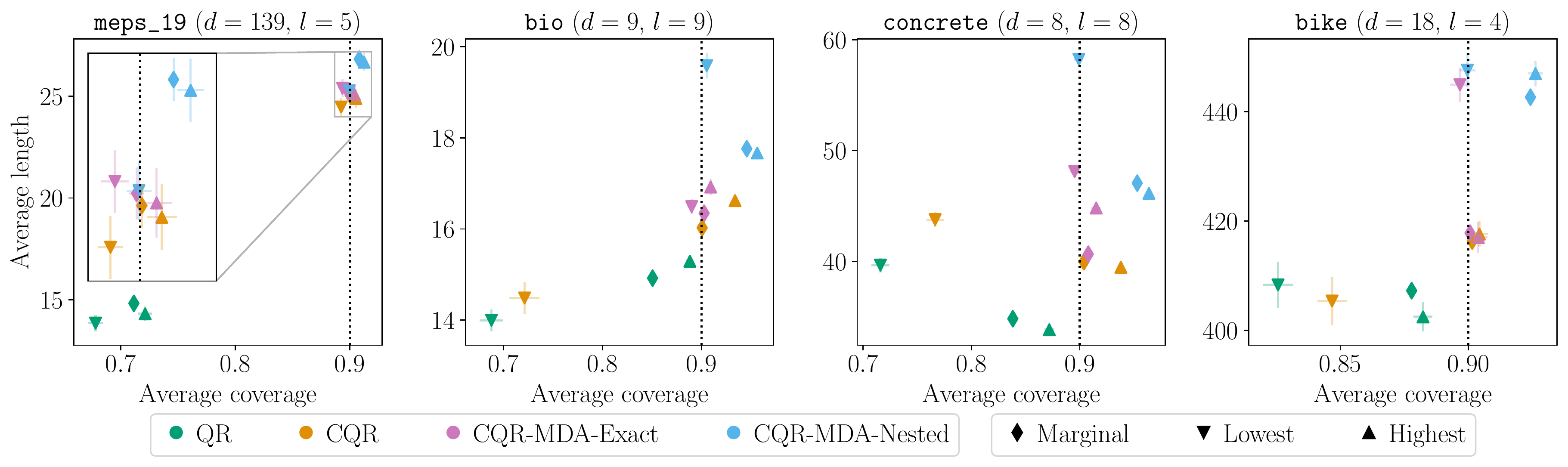}
    \vspace{-5pt}
    \caption{Validity and efficiency with missing values for 4 data sets (panels) with $d$ features, including $l$ quantitative ones in which missing values are introduced with probability 0.2. Colors represent the methods. Diamonds ($\blacklozenge$) represent marginal coverage while the patterns giving the lowest and highest mask coverage are represented with triangles ($\blacktriangledown$ and $\blacktriangle$). Vertical dotted lines represent the target coverage.}
    \label{fig:semi_synth}
\end{figure*}

We consider 6 benchmark real data sets for regression: \texttt{meps_19}, \texttt{meps_20}, \texttt{meps_21} \citep{MEPS}, \texttt{bio}, \texttt{bike} and \texttt{concrete} \citep{dua}, where we introduce missing values in their quantitative features, each of them having a probability 0.2 of being missing (i.e. it is a MCAR mechanism), as in the synthetic experiments.  Note that therefore some patterns have a  low (or null) frequency of appearance in the training sets of \texttt{bio} and \texttt{concrete}. The sample sizes for training, calibration, and testing, and simulation details are provided in \Cref{app:exp_semi_synth}, along with results for smaller training and calibration sets.
 
\Cref{fig:semi_synth} depicts the results by combining \textit{validity} and \textit{efficiency} (length) for \texttt{meps_19}, \texttt{bio}, \texttt{concrete}, and \texttt{bike}, where this graph follows the visualization used in \citet{pmlr-v162-zaffran22a}.
The results for \texttt{meps_20} and \texttt{meps_21} are given in \Cref{app:exp_semi_synth}, as they are similar to \texttt{meps_19}.

Each of the panels in \Cref{fig:semi_synth} summarizes the results for one data set, with the average coverage shown in the $x$-axis and the average length in the $y$-axis. A method is mask-conditionally-valid if all the markers of its color are at the right of the vertical dotted line (90\%). The design of \Cref{fig:semi_synth} requires a different interpretation than \Cref{fig:d10_cov_len} (or the subsequent \Cref{fig:real_data}). For each method we report, for the pattern having the highest (or lowest) coverage, its length and coverage. However, as  this pattern may depend on the method, the length for the highest/lowest should not be directly compared between methods.
We observe that:
\begin{itemize}[topsep=0pt,noitemsep,leftmargin=*,wide]
	\item \textbf{CQR} is marginally valid (\textcolor{blindorange}{orange $\blacklozenge$}, \Cref{prop:marg_cp_na}), but not MCV as the lowest mask coverage (\textcolor{blindorange}{orange $\blacktriangledown$}) is far below 90\% (\texttt{bio}, \texttt{concrete}, and \texttt{bike} data sets);
    \item \textbf{CQR-MDA-Exact} is marginally valid (\textcolor{blindpurple}{purple $\blacklozenge$}, \Cref{prop:meth_marg}). It is also exactly MCV, as the lowest (\textcolor{blindpurple}{purple $\blacktriangledown$}) and highest (\textcolor{blindpurple}{purple $\blacktriangle$}) mask coverages are about 90\% (\Cref{prop:meth_cond});
    \item \textbf{CQR-MDA-Nested} is marginally valid (\textcolor{blindblue}{blue $\blacklozenge$}, \Cref{prop:meth_marg}). It is also MCV, as the lowest (\textcolor{blindblue}{blue $\blacktriangledown$}) mask coverage is larger than 90\% (\Cref{prop:meth_cond}).
\end{itemize}

\subsection{Predicting the level of platelets for trauma patients}

\begin{figure}[!b]
\vspace{-12pt}
\begin{center}
\includegraphics[width=0.4\textwidth]{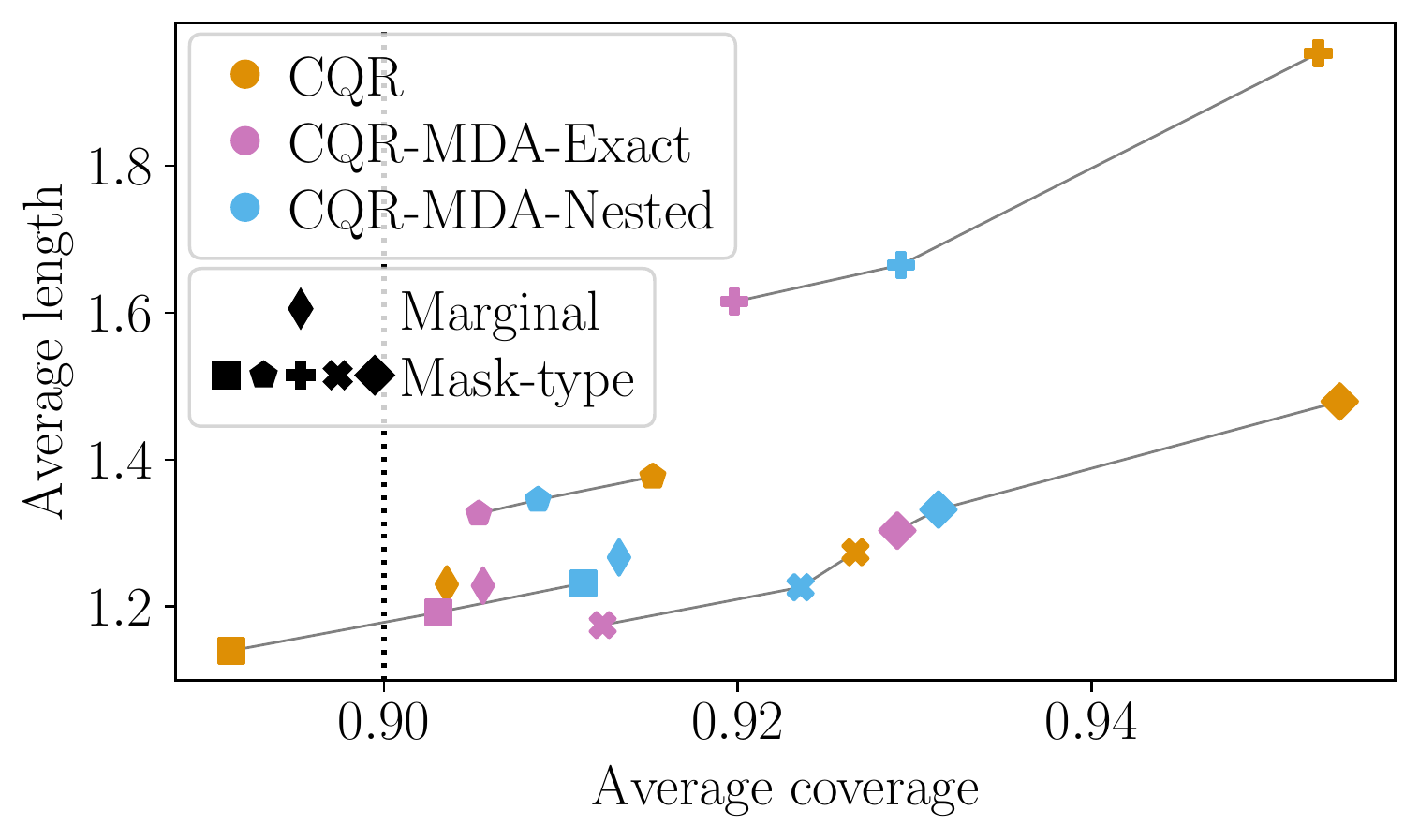}
\end{center}
\vspace{-14pt}
\caption{Average coverage and length on the TraumaBase® analysis. See the caption of \Cref{fig:semi_synth} for details. Other symbols than diamond correspond to computing the average per mask. Each individual's prediction is obtained by using 15390 observations for training, and 7694 for calibration.}
\label{fig:real_data}
\end{figure}

We study the applicability and robustness of CPMDA on the critical care TraumaBase® data. We focus on predicting the level of platelets of severely injured patients upon arrival at the hospital. This level is directly related to the occurrence of hemorrhagic shock and is difficult to obtain in real-time: predicting it accurately could be crucial to anticipate the need for transfusion and blood resources. In addition, this prediction task appears to be challenging as \citet{wei} achieved an average relative prediction error (${\Vert \hat y - y \Vert^2}/{\Vert y \Vert^2}$) that is no lower than 0.23. This highlights the need for reliable uncertainty quantification.

After applying inclusion and exclusion criteria obtained by medical doctors and following the pipeline of \citet{sportisse} described in \Cref{app:exp_real}, we left with a subset of 28855 patients and 7 features. Missing values vary from 0\% to 24\% by features, with a total average of~7\%.

\textbf{Results.} The results are summarized in \Cref{fig:real_data}, where we use different markers to denote the different masks. To ensure a fair comparison between the conformal methods, we only keep the missing patterns for which there are more than 200 individuals; this excludes 7 patterns.  
Finally, since we found that the vanilla QR tends to be overly conservative, we refer to \Cref{app:exp_real} for its results. \Cref{fig:real_data} shows that all conformal approaches achieve marginal coverage higher than the desired 90\% level (diamonds $\blacklozenge$). Furthermore, for each mask (each set of linked markers) \textbf{CQR-MDA} improves coverage compared to \textbf{CQR} by approaching 90\%, and efficiency by reducing the average length. Noticeably, for the pattern corresponding to all features observed (squares $\blacksquare$), \textbf{CQR-MDA} has a coverage rate above 90\% while \textbf{CQR} is below the target level. Therefore, we believe \textbf{CQR-MDA} should be recommended as it improves upon the vanilla impute-then-regress+CQR approach.

\section{Conclusion and perspectives}
In this paper, we study the interplay between uncertainty quantification and missing values. We show that missing values introduce heteroskedasticity in the prediction task. This brings challenges on how to provide uncertainty estimators that are valid conditionally on the missing patterns, which are addressed by this work.
Our analysis leaves several directions open: (1) obtaining results \textit{beyond the MCAR assumption} for CP-MDA, both theoretically and numerically, (2) extending the (numerical) analysis to non-split approaches, (3) investigating the numerical performances of other conditional CP approaches (such as \citet{sesia2020,izbicki,izbicki_jmlr,lin}), (4) studying the impact of the imputation on QR with finite samples. A more detailed discussion on these directions is provided in \Cref{app:perspectives}.

\section*{Acknowledgements}
We thank Baptiste Goujaud for fruitful discussions. We sincerely thank anonymous reviewers for their feedbacks which improved the paper. This work was supported by a public grant as part of the Investissement d'avenir project, reference ANR-11-LABX-0056-LMH, LabEx LMH. M. Zaffran has been awarded the 2022 Scholarship for Mathematics granted by the Séphora Berrebi Foundation which she gratefully thanks for its support. The work of A. Dieuleveut is partially supported by ANR-19-CHIA-0002-01/chaire SCAI and Hi!~Paris. The work of J. Josse is partially supported by ANR-16-IDEX-0006. Y. Romano was supported by the ISRAEL SCIENCE FOUNDATION (grant No. 729/21). He also thanks the Career Advancement Fellowship, Technion, for providing additional research support.

\newpage

\bibliography{cp_na_cleaned}
\bibliographystyle{apalike}

\newpage

\appendix

\onecolumn

\part*{Appendices}

The appendices are organized as follows. 

\Cref{app:perspectives} provides a more detailed discussion  on open directions and perspectives.

\Cref{app:cqr} describes CQR, used in the paper.

\Cref{app:itp+conf} provides an explicit description of impute-then-predict+conformalization (\Cref{app:itp+conf_alg}), along with its proof of validity, that is the proofs for \Cref{sec:cp_na_marg} (\Cref{app:itp+conf_proof}). 

Then, \Cref{app:glm} contains the proofs for the Gaussian linear model oracle intervals presented in \Cref{sec:th_glmdata} (\Cref{app:glm_oracle}), along with the discussion on how mean-based approaches fail (\Cref{app:glm_discussion}).

\Cref{app:methods} gives the general statement of \masksub (\Cref{app:methods_general}), and the proofs of the validity theorems for \masksub (\Cref{app:methods_exact}), along with the theoretical study of \mask (\Cref{app:methods_nested}).

\Cref{app:infinite_data} provides all the proofs about consistency and asymptotic conditional coverage presented in \Cref{sec:infinite}.

Finally, \Cref{app:exp} contains all the details for the experimental study and additional results completing \Cref{sec:experiments}. More precisely, \Cref{app:exp_set} gives more details about the settings. \Cref{app:exp_synth} contains results on synthetic data \camerareadyrevisionlast{with 20\% of MCAR missing values, while \Cref{app:exp_more_na} shows the results on synthetic data when the proportion of MCAR missing values is 40\%}. \Cref{app:exp_semi_synth} describes the real data sets used for the semi-synthetic experiments, and presents the remaining results. \Cref{app:exp_real} presents the real medical data set (TraumaBase®), the pipeline and settings used and the results obtained by QR on this data set.

\camerareadyrevision{\section{Detailed perspective discussion}
\label{app:perspectives}
First, obtaining results \textit{beyond the MCAR assumption} for CP-MDA. On the numerical side, preliminary experiments show promising results, indicating CP-MDA's robustness, but a detailed numerical study is needed. On the theoretical side, understanding the  limits of CP-MDA validity is of high importance. Results without assumptions on the missingness distribution seem impossible to obtain. Even with MAR data, the task of pointwise prediction can be very challenging if the output distribution strongly depends on the pattern \citep{ayme2022}. As the impossibility results of conditional validity \citep{lei_distribution-free_2014,vovk_conditional_2012,barber_limits_2021}, assumptions on the missing mechanism are needed.

Second, extending the (numerical) analysis to non-split approaches (e.g., based on the Jackknife) would be relevant, as it could improve the base model and therefore how the heteroskedasticity is taken into account. Note that CP-MDA can be written to take into account this splitting strategy, and thus our theoretical results on MCV would directly extend.

Third, investigating the numerical performances of other conditional CP approaches (such as \citet{sesia2020,izbicki,izbicki_jmlr,lin}) within the MDA framework is of interest. In this paper, we analyze empirically the instance of CP-MDA on top of CQR as it is the simplest version of QR based CP, but the theory and motivation of this work is not specific to CQR. Exactly as CQR, none of the aforementioned methods would provide MCV if used out of the box. But if combined with CP-MDA, then all of them will be granted MCV.

Finally, while our approach is to be agnostic to the imputation chosen (similarly to CP being agnostic to the underlying model), an interesting research path is to study the impact of the imputation on QR with finite samples. }

\section{Illustration and details on CQR \citep{romano_conformalized_2019} procedure}
\label{app:cqr}

\Cref{fig:cqr_scheme} provides a visualization and step by step description of  CQR. 

\begin{figure}

    \begin{minipage}{0.49\textwidth}
        \includegraphics[scale=0.45]{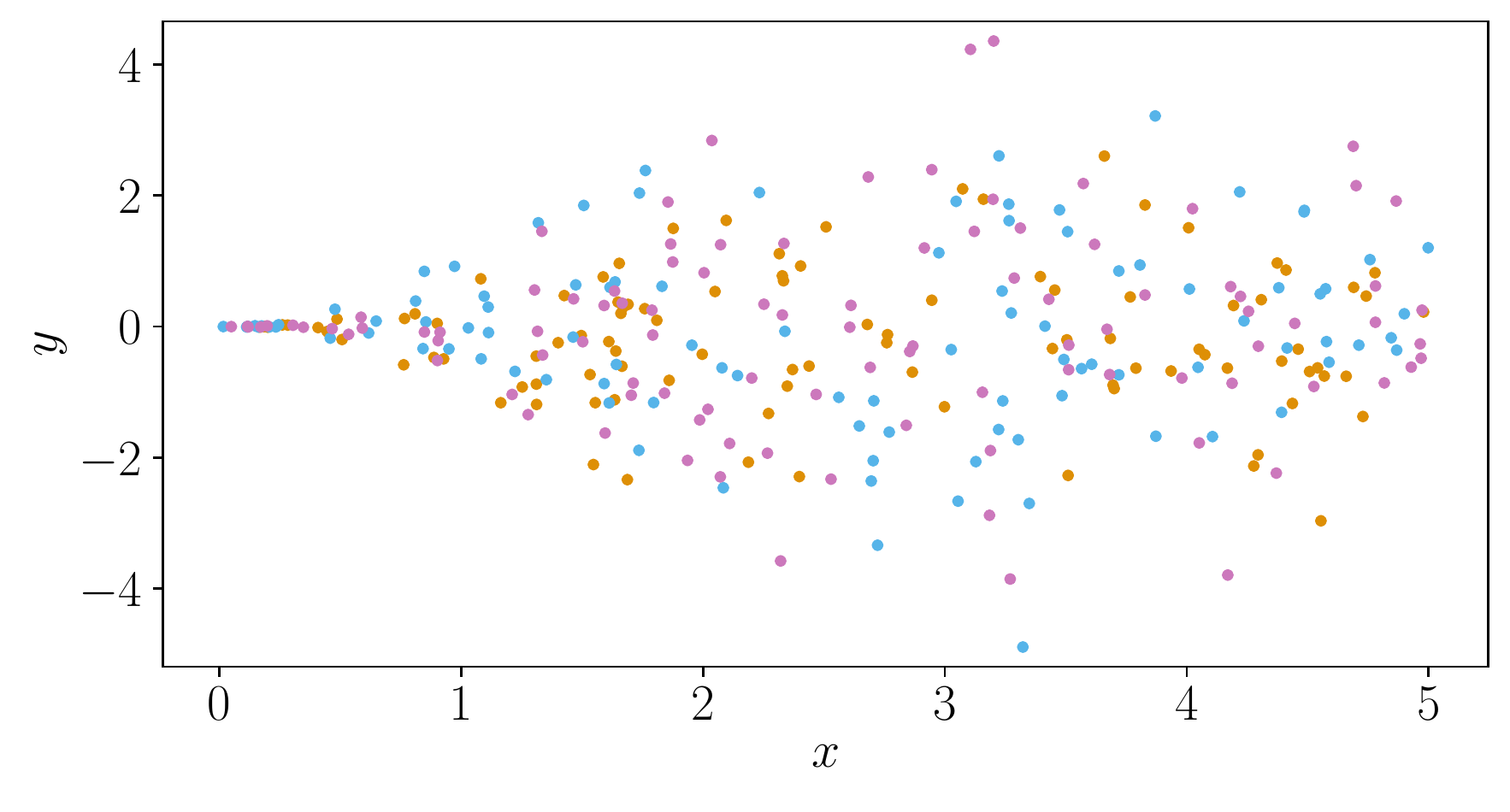}
    \end{minipage}
    \hfill
    \begin{minipage}{0.5\textwidth}
        \begin{itemize}
            \item[$\blacktriangleright$] Create a \textcolor{blindorange}{proper training set}, a \textcolor{blindblue}{calibration set}, and keep your \textcolor{blindpurple}{test set}, by randomly splitting your data set.
        \end{itemize}
    \end{minipage}
    
    \begin{minipage}{0.06\textwidth}

    \textbf{Step 1}
    \end{minipage}
    \hfill
    \begin{minipage}{0.43\textwidth}
         \includegraphics[scale=0.5]{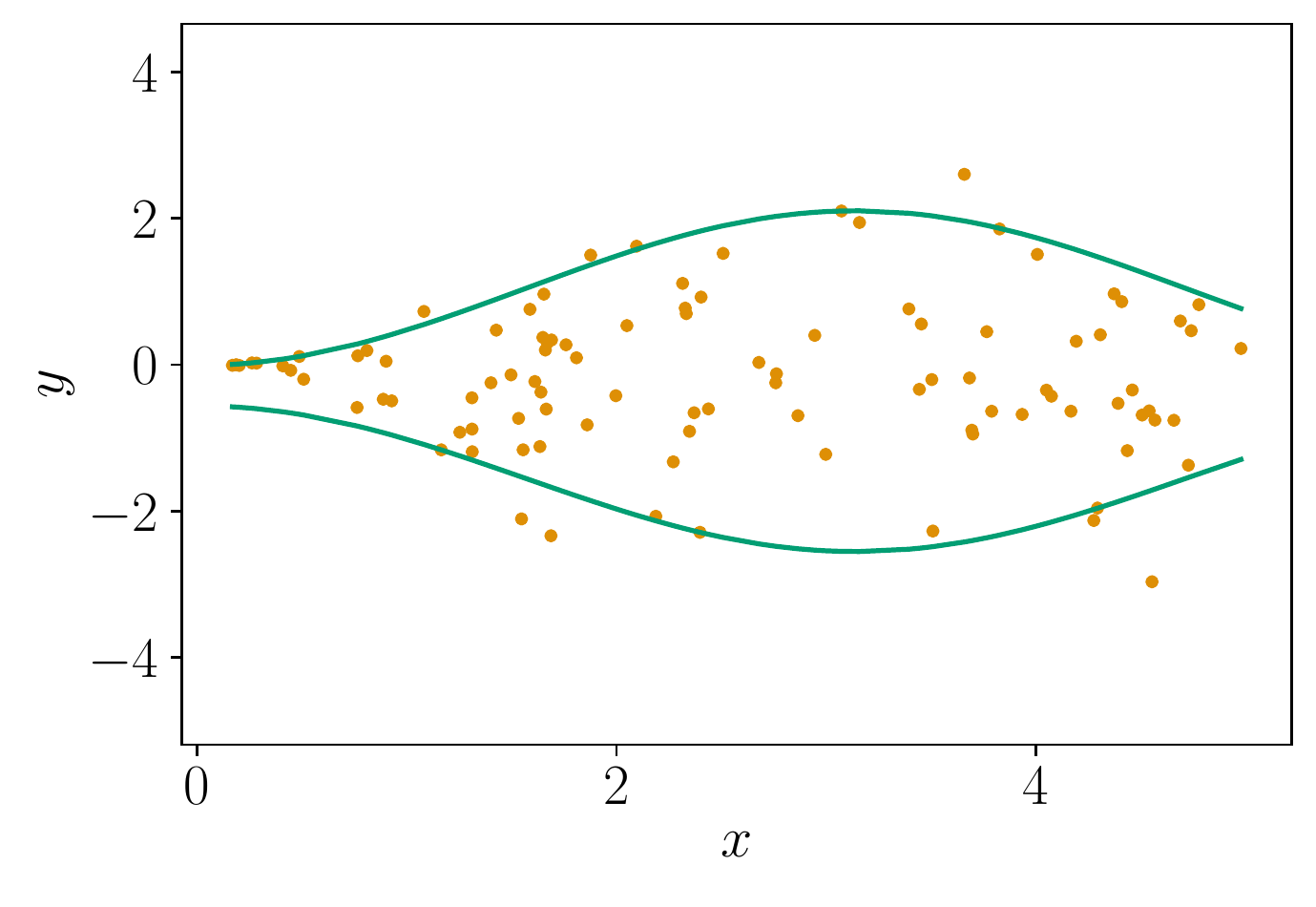}
    \end{minipage}
    \hfill
    \begin{minipage}{0.5\textwidth}
        On the \textcolor{blindorange}{proper training set}:
        \begin{itemize}
        \item[$\blacktriangleright$] Learn \textcolor{blindgreen}{$\hat{q}_{\text{low}}$} and \textcolor{blindgreen}{$\hat{q}_{\text{upp}}$} 
    \end{itemize}
    \end{minipage}

    \begin{minipage}{0.06\textwidth}
    \textbf{Step 2}
    \end{minipage}
    \hfill
    \begin{minipage}{0.43\textwidth}
         \includegraphics[scale=0.335]{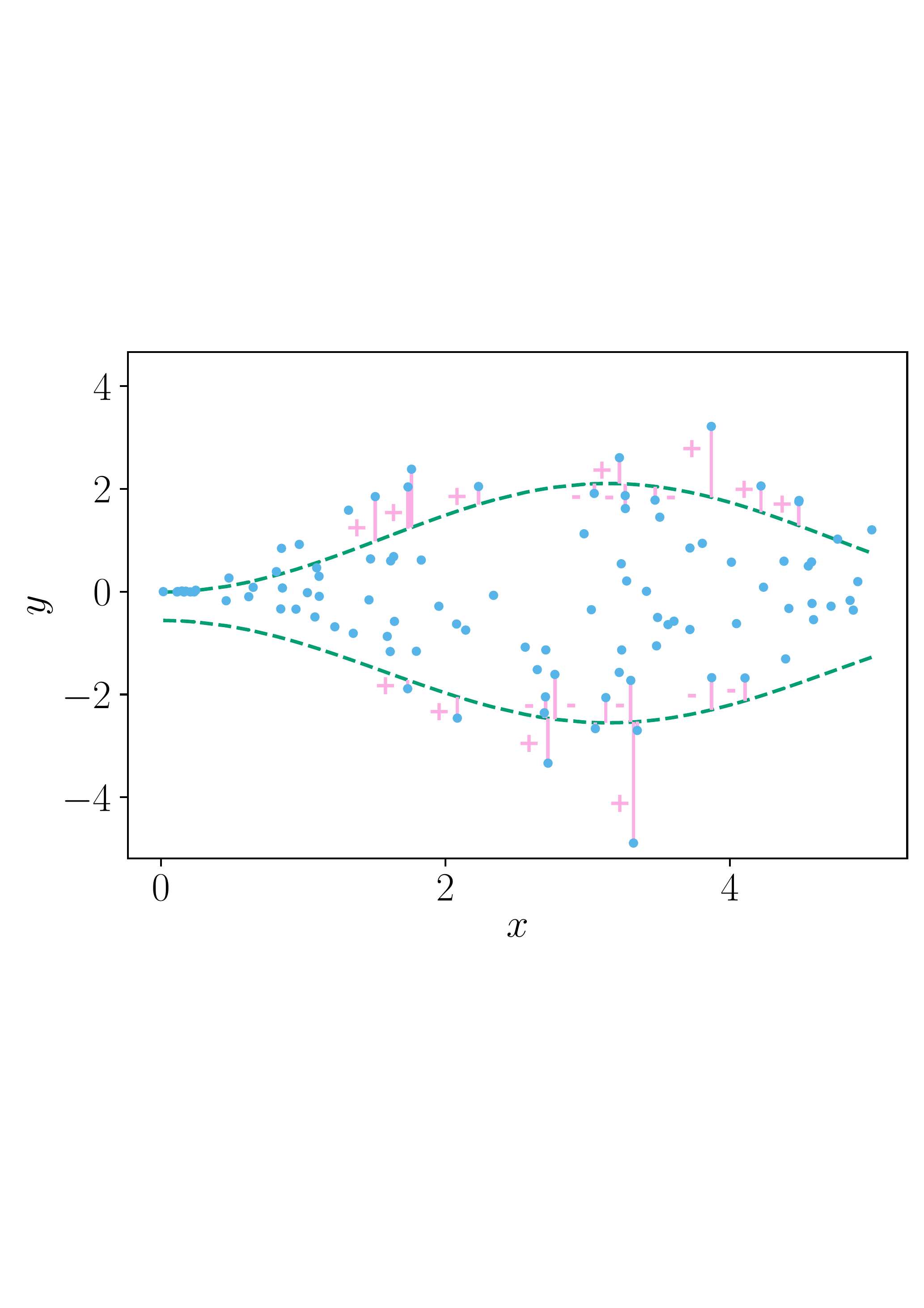}
    \end{minipage}
    \hfill
    \begin{minipage}{0.5\textwidth}
        On the \textcolor{blindblue}{calibration set}:
        \begin{itemize}
        \item[$\blacktriangleright$] Predict with \textcolor{blindgreen}{$\hat{q}_{\text{low}}$} and \textcolor{blindgreen}{$\hat{q}_{\text{upp}}$} 
        \item[$\blacktriangleright$] Get the scores \\ ${\textcolor{blindpink}{s^{(k)}} = \max\left\{\textcolor{blindgreen}{\hat{q}_{\text{low}}}\left(x^{(k)}\right) - y^{(k)}, y^{(k)} - \textcolor{blindgreen}{\hat{q}_{\text{upp}}}\left(x^{(k)}\right) \right\}}$
        \item[$\blacktriangleright$] Compute the $(1-\alpha)\times(1+\frac{1}{\#\rm{Cal}})$ empirical quantile of the \textcolor{blindpink}{$s^{(k)}$}, noted \textcolor{blindpink}{$\widehat Q_{1-\hat\alpha}\left(S\right)$}
    \end{itemize}
    \end{minipage}
 
    \begin{minipage}{0.06\textwidth}
    \textbf{Step 3}
    \end{minipage}
    \hfill   
    \begin{minipage}{0.43\textwidth}
         \includegraphics[scale=0.5]{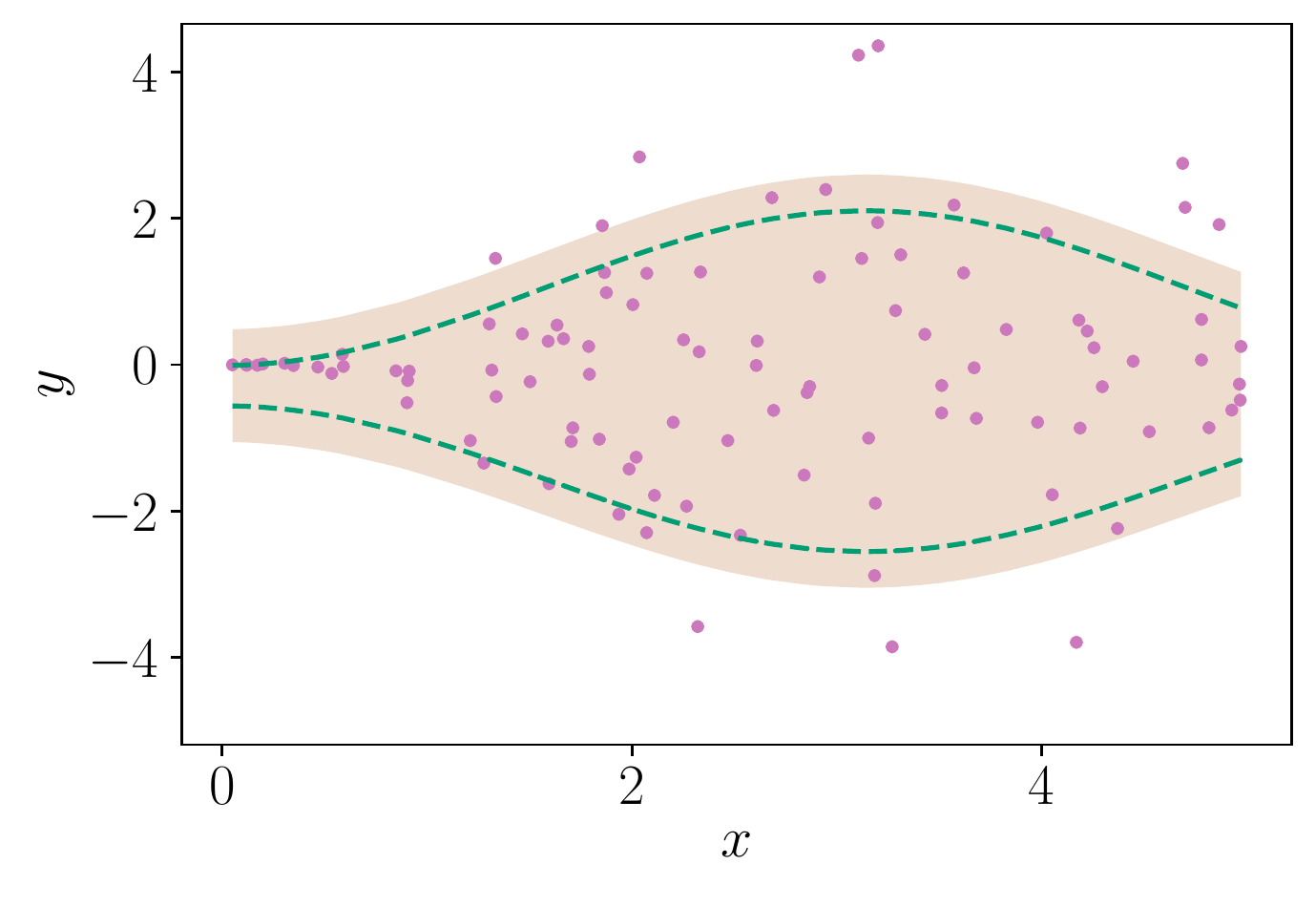}
    \end{minipage}
    \hfill
    \begin{minipage}{0.5\textwidth}
        On the \textcolor{blindpurple}{test set}:
        \begin{itemize}
        \item[$\blacktriangleright$] Predict with \textcolor{blindgreen}{$\hat{q}_{\text{low}}$} and \textcolor{blindgreen}{$\hat{q}_{\text{upp}}$} 
        \item[$\blacktriangleright$] Build $\hat{C}_{\hat{\alpha}}(x)$: \textcolor{blindbrown}{$[  \hat{q}_{\text{low}}(x) - \widehat Q_{1-\hat\alpha}\left(S\right), \hat{q}_{\text{upp}}(x) + \widehat Q_{1-\hat\alpha}\left(S\right)]$}
    \end{itemize}
    \end{minipage}
    
    \caption{Schematic illustration of Conformalized Quantile Regression (CQR) \citep{romano_conformalized_2019}.}
    \label{fig:cqr_scheme}

\end{figure}

\section{Impute-then-predict+conformalization}
\label{app:itp+conf}

\subsection{Description of the algorithm}
\label{app:itp+conf_alg}
\begin{algorithm}
\caption{SCP on impute-then-predict}
\label{alg:cp_na}
\begin{algorithmic}[1] 
\REQUIRE Imputation algorithm $\mathcal{I}$, predictive algorithm $\mathcal{A}$, conformity score function $s$, significance level $\alpha$, training set $\left\{ \left(X^{(1)},M^{(1)},Y^{(1)}\right), \cdots, \left(X^{(n)},M^{(n)},Y^{(n)} \right) \right\}$.
\ENSURE Prediction interval $\widehat{{C}}_{\alpha}\left( X,M \right)$. 
\STATE Randomly split $\{1, \ldots, n\}$ into two disjoint sets $\rm{Tr}$ and $\rm{Cal}$. 
\STATE Fit the imputation function: $\Phi(\cdot) \leftarrow \mathcal{I}\left(\left\{ \left( X^{(k)}, M^{(k)} \right), k \in \rm{Tr}\right\}\right)$ 
\STATE Impute the data set: $\left\{ X_{\text{imp}}^{(k)} \right\}_{k=1}^n := \left\{  \Phi\left(X^{(k)},M^{(k)}\right) \right\}_{k=1}^n$
\STATE Fit algorithm $\mathcal{A}$: $\hat{g}(\cdot) \leftarrow \mathcal{A}\left(\left\{\left( X_{\text{imp}}^{(k)}, Y^{(k)} \right), k \in \rm{Tr}\right\}\right)$ 
\FOR {$k \in \rm{Cal} $} 
\STATE Set $S^{(k)} = s \left( Y^{(k)}, \hat{g}\left(X_{\text{imp}}^{(k)}\right) \right)$, the \textit{conformity scores}
\ENDFOR
\STATE Set $\mathcal{S}_{\rm{Cal}} = \{S^{(k)}, k \in \rm{Cal}\}$
\STATE Compute $\widehat Q_{1-\alpha^{\rm{SCP}}}\left(\mathcal{S}_{\rm{Cal}}\right)$, the $1-\alpha^{\rm{SCP}}$-th empirical quantile of $\mathcal{S}_{\rm{Cal}}$, with $1-\alpha^{\rm{SCP}} := (1-\alpha)\left(1+1/\#\rm{Cal}\right)$.
\STATE Set  $\widehat{{C}}_{\alpha}\left( X,M \right) = \left\{y \text{ such that } s\left( y, \hat{g} \circ \Phi \left( X, M \right) \right) \leq \widehat{Q}_{1-\alpha^{\rm{SCP}}}\left(\mathcal{S}_{\rm{Cal}}\right)  \right\}$.
\end{algorithmic}
\end{algorithm}

Similarly, \Cref{alg:cp_na_sub} can be written to include any underlying predictive algorithm (regression or 
classification) and any score function.

\newpage
\subsection{Proof of exchangeability after imputation}
\label{app:itp+conf_proof}

In this subsection, we provide a more formal  statement of \Cref{lem:exch_imp} and \Cref{prop:marg_cp_na} in respectively \Cref{lem:exch_imp_T} and \Cref{prop:marg_cp_na_T}. To that end, we introduce a notion of symmetrical imputation \textit{on a set $\mathcal T$}, for $\mathcal T \subset \llbracket 1, n+1\rrbracket$.

\begin{assumption}[Symmetrical imputation on a set $\mathcal T$]
\label{ass:imp_sym_T}
For a given set of points $\{X^{(k)}, M^{(k)}, Y^{(k)}\}_{k\in \mathcal T} $ the imputation function $\Phi$ is the output of an algorithm $\mathcal{I}$ that treats the  data points in $\mathcal T$ symmetrically: $\mathcal{I}(\{X^{(k)}, M^{(k)}, Y^{(k)}\}_{k\in \mathcal T}) \overset{(d)}{=} \mathcal{I}(\{X^{(\sigma(k))}, M^{(\sigma(k))}, Y^{(\sigma(k))}\})_{k\in \mathcal T} $ 
conditionally to $\{X^{(k)}, M^{(k)}, Y^{(k)}\}_{k\in \mathcal T} $ and for any permutation $\sigma$ on~$\llbracket 1,\# \mathcal T \rrbracket$. 
\end{assumption}

\begin{lemma}[Imputation preserves exchangeability]
\label{lem:exch_imp_T}

Let \ref{ass:iid} hold. Then, for any missing mechanism, for any imputation function $\Phi$ satisfying \ref{ass:imp_sym_T}, the imputed random variables $\left(\Phi\left(X^{(k)},M^{(k)} \right),M^{(k)},Y^{(k)}\right)_{k\in \mathcal T}$ are exchangeable. 

\end{lemma}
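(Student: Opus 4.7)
The plan is to establish, for any permutation $\sigma$ of $\mathcal T$, the distributional identity
\[
\bigl(\Phi(X^{(k)},M^{(k)}),M^{(k)},Y^{(k)}\bigr)_{k\in\mathcal T}\overset{(d)}{=}\bigl(\Phi(X^{(\sigma(k))},M^{(\sigma(k))}),M^{(\sigma(k))},Y^{(\sigma(k))}\bigr)_{k\in\mathcal T}.
\]
The subtlety is that the random function $\Phi$ is itself built from the same data it is subsequently evaluated on, so one cannot just invoke ``a function of exchangeable variables is exchangeable.'' I would instead keep $\Phi$ and its inputs as a single joint random object. Write $Z^{(k)}=(X^{(k)},M^{(k)},Y^{(k)})$, and, to accommodate possibly stochastic imputation algorithms, realize $\mathcal I$ as a deterministic map of its training inputs together with an auxiliary randomness $U$ taken independent of $(Z^{(k)})_{k\in\mathcal T}$. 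Denote $\Phi:=\mathcal I\bigl((X^{(k)},M^{(k)})_{k\in\mathcal T};U\bigr)$ and $\Phi^\sigma:=\mathcal I\bigl((X^{(\sigma(k))},M^{(\sigma(k))})_{k\in\mathcal T};U\bigr)$.

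The key step combines the two available ingredients. By \ref{ass:iid} and the independence of $U$, applying the same deterministic functional to two inputs with the same joint law gives
\[
\bigl(\Phi,(Z^{(k)})_{k\in\mathcal T}\bigr)\overset{(d)}{=}\bigl(\Phi^\sigma,(Z^{(\sigma(k))})_{k\in\mathcal T}\bigr).
\]
Separately, by \ref{ass:imp_sym_T}, one has $\Phi^\sigma\overset{(d)}{=}\Phi$ conditionally on $(X^{(k)},M^{(k)})_{k\in\mathcal T}$, hence also conditionally on $(Z^{(k)})_{k\in\mathcal T}$; integrating over the conditional law yields $\bigl(\Phi^\sigma,(Z^{(\sigma(k))})_{k\in\mathcal T}\bigr)\overset{(d)}{=}\bigl(\Phi,(Z^{(\sigma(k))})_{k\in\mathcal T}\bigr)$. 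Chaining the two identities gives $\bigl(\Phi,(Z^{(k)})_{k\in\mathcal T}\bigr)\overset{(d)}{=}\bigl(\Phi,(Z^{(\sigma(k))})_{k\in\mathcal T}\bigr)$.

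To conclude, I would push both sides through the deterministic coordinate-wise map $(\phi,(x,m,y))\mapsto(\phi(x,m),m,y)$, which preserves distributional equality and yields exactly the claimed exchangeability of the imputed tuples. The main obstacle is the careful bookkeeping around the random function $\Phi$: one must distinguish ``training $\mathcal I$ on the permuted data'' (the object $\Phi^\sigma$) from ``evaluating $\Phi$ on permuted inputs,'' and the coupling through the common auxiliary randomness $U$ is what lets one collapse $\Phi^\sigma$ back onto $\Phi$ via the symmetry assumption. Notably, nothing in the argument touches the missingness mechanism, which is why the statement holds uniformly for MCAR, MAR, and MNAR.
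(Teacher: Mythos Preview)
Your argument is correct and is essentially a self-contained expansion of what the paper's own proof leaves to citation: the paper simply invokes the symmetric-algorithm exchangeability argument from the proof of Theorem~1b in \citet{bacarati_nexp}, whereas you unpack that argument explicitly via the auxiliary randomness $U$ and the two-step chain $(\Phi,(Z^{(k)}))\overset{(d)}{=}(\Phi^\sigma,(Z^{(\sigma(k))}))\overset{(d)}{=}(\Phi,(Z^{(\sigma(k))}))$. One minor remark: \ref{ass:imp_sym_T} as stated in the paper already gives $\Phi^\sigma\overset{(d)}{=}\Phi$ conditionally on the full tuples $(X^{(k)},M^{(k)},Y^{(k)})_{k\in\mathcal T}$, so your ``hence also conditionally on $(Z^{(k)})_{k\in\mathcal T}$'' is in fact the direct content of the assumption and does not require the extra step through $(X^{(k)},M^{(k)})_{k\in\mathcal T}$.
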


\begin{proposition}[(Exact) validity of impute-then-predict+conformalization]
\label{prop:marg_cp_na_T}
If \ref{ass:iid} is  satisfied, then we have the following three results.
\begin{enumerate}
    \item \textbf{Full CP:} if \ref{ass:imp_sym_T} is satisfied for $\mathcal T = \llbracket 1, n+1 \rrbracket$ (i.e., the imputation algorithm treats \textit{all} points symmetrically), then 
impute-then-predict+Full CP is marginally valid. If moreover the scores are almost surely distinct, it is exactly valid.

OR

\item \textbf{Jackknife+} if \ref{ass:imp_sym_T} is satisfied for $\mathcal T = \llbracket 1, n+1 \rrbracket$ (i.e., the imputation algorithm treats \textit{all} points symmetrically), then 
impute-then-predict+Jackknife+ is marginally valid (of level $1-2\alpha$). 

    OR
    
    \item \textbf{SCP} with the split  $ \llbracket 1, n+1 \rrbracket = \rm{Tr} \bigcup \rm{Cal} \bigcup \rm{Test} $ and if \ref{ass:imp_sym_T} is satisfied for $\mathcal T = \rm{Cal} \bigcup \rm{Test}$ (i.e., the imputation treats all points in $\rm{Cal}\bigcup \rm{Test}$  symmetrically) then 
    impute-then-predict+conformalization is marginally valid. If moreover the scores are almost surely distinct, it is exactly valid.

\end{enumerate}
\end{proposition}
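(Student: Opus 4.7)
The plan is to reduce all three cases to the standard validity theorems for Full CP, Jackknife+ and SCP by first establishing Lemma~\ref{lem:exch_imp_T}: on the relevant index set $\mathcal{T}$, the imputed tuples $(\Phi(X^{(k)},M^{(k)}),M^{(k)},Y^{(k)})_{k\in\mathcal{T}}$ are exchangeable. Once this is in place, each case follows by citing the known conformal guarantee applied to the imputed data and the relevant conformity score.

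For the lemma, fix any permutation $\sigma$ of $\mathcal{T}$ and set $\Phi = \mathcal{I}((X^{(k)},M^{(k)})_{k\in\mathcal{T}})$ and $\Phi_\sigma = \mathcal{I}((X^{(\sigma(k))},M^{(\sigma(k))})_{k\in\mathcal{T}})$. Exchangeability of the raw tuples (Assumption~\ref{ass:iid}) gives $(X^{(k)},M^{(k)},Y^{(k)})_{k\in\mathcal{T}} \stackrel{d}{=} (X^{(\sigma(k))},M^{(\sigma(k))},Y^{(\sigma(k))})_{k\in\mathcal{T}}$, and symmetry of $\mathcal{I}$ (Assumption~\ref{ass:imp_sym_T}) gives $\Phi \stackrel{d}{=} \Phi_\sigma$ conditionally on the inputs. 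Combining the two yields
\[
(\Phi(X^{(k)},M^{(k)}),M^{(k)},Y^{(k)})_{k\in\mathcal{T}} \stackrel{d}{=} (\Phi(X^{(\sigma(k))},M^{(\sigma(k))}),M^{(\sigma(k))},Y^{(\sigma(k))})_{k\in\mathcal{T}},
\]
which is exactly exchangeability of the imputed sample.

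With this lemma in hand, each claim reduces to a standard argument. For SCP (case~3), the model $\hat g$ (or the two quantile regressors in CQR) is trained only on $\rm{Tr}$ and thus acts as a frozen function when computing scores on $\rm{Cal}\cup\rm{Test}$; exchangeability of the imputed tuples on $\rm{Cal}\cup\rm{Test}$ transfers to the conformity scores, and the standard quantile lemma of \citet{romano_conformalized_2019,lei_distribution-free_2018} delivers validity, with exactness when the scores are almost surely distinct. For Full CP (case~1), exchangeability on $\llbracket 1,n+1\rrbracket$ lets the usual transductive argument go through: for each candidate $y$ the augmented conformity scores are exchangeable, so the $1-\alpha$ quantile of the imputed-data scores yields valid coverage. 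For Jackknife+ (case~2), applying the leave-one-out construction on the exchangeable imputed sample allows a direct appeal to Theorem~1 of \citet{barber2021jackknife}, giving the $1-2\alpha$ guarantee.

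The main conceptual subtlety is that the imputed entries across different indices $k$ are \emph{not} independent, since the fitted $\Phi$ couples them through its training. Assumption~\ref{ass:imp_sym_T} is precisely what is needed to make the map sending the raw inputs on $\mathcal{T}$ to the imputed tuples on $\mathcal{T}$ a symmetric function of its arguments, so permutation symmetry transfers from the raw sample to the imputed one. This also explains why symmetry is needed only on the subset actually involved in the score comparison: $\rm{Cal}\cup\rm{Test}$ for SCP, and $\llbracket 1,n+1\rrbracket$ for Full CP and Jackknife+. A minor technicality to verify in the SCP case is that the internal randomness of $\mathcal{I}$ on $\rm{Cal}\cup\rm{Test}$ may be taken independent of the fit of $\hat g$ on $\rm{Tr}$, so that $\hat g$ indeed behaves as a deterministic frozen function when producing calibration and test scores.
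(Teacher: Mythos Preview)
Your proposal is correct and follows essentially the same approach as the paper: first establish Lemma~\ref{lem:exch_imp_T} (symmetric imputation preserves exchangeability on the relevant index set $\mathcal{T}$), then invoke the standard validity results for Full CP, Jackknife+, and SCP on the imputed data. The paper's proof is terser---it simply cites \citet{bacarati_nexp} for the symmetric-function-preserves-exchangeability argument and then lists the three references---but your added detail on why the map is symmetric and why only $\rm{Cal}\cup\rm{Test}$ needs symmetry in the SCP case is accurate and does not deviate from the paper's route.
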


\begin{remark}[Imputation choices for SCP]
In the latter case, for SCP, the coverage result can be derived conditionally on $\rm{Tr}$, thus the coverage results holds for:
(i) any deterministic imputation function (conditionally on $\Tr$) (that is any arbitrary function of $\Tr$), or (ii) any stochastic imputation function treating $\Cal$ and $\Test$ symmetrically (iii) any combination of both.
\end{remark}

\begin{proof}[Proof of \Cref{lem:exch_imp_T}]

$\Phi$ is the output of an imputing algorithm $\mathcal{I}$ trained on $\left\{ \left(X^{(k)},M^{(k)},Y^{(k)} \right)_{k \in \mathcal{T}} \right\}$.

Assume $\left(X^{(k)},M^{(k)},Y^{(k)} \right)_{k \in \mathcal{T}}$ are exchangeable (\ref{ass:iid}).

Thus, if $\mathcal{I}$ treats the data points in $\mathcal{T}$ symmetrically, $\left( \Phi(X^{(k)},M^{(k)}),M^{(k)},Y^{(k)} \right)_{k \in \mathcal{T}}$ are exchangeable (see proof of Theorem 1b in \citep{bacarati_nexp} for example).

\end{proof}

\begin{proof}[Proof of \Cref{prop:marg_cp_na_T}]
\Cref{prop:marg_cp_na_T} is a consequence of \Cref{lem:exch_imp_T} with different choices of $\mathcal T$, that enable to apply the following results:
\begin{enumerate}[noitemsep,topsep=0pt]
    \item Full CP: \citet{vovk_algorithmic_2005}, also re-stated in \citet{bacarati_nexp}
    \item Jackknife+: \citet{barber2021jackknife}
    \item SCP: \citet{lei_distribution-free_2018} or \citet{papadopoulos_inductive_2002} and \citet{angelopoulos-gentle} for a generic version with any score function (note that the coverage is proved conditionally on $\Tr$).
\end{enumerate}
\end{proof}

\section{Gaussian linear model}
\label{app:glm}

\subsection{Distribution of $Y | (X_{\obs(m)}, M)$ and oracle intervals}
\label{app:glm_oracle}

\begin{proposition}[Distribution of $Y | ( X_{\obs(M)},M )$ \citep{lemorvan2020}]

Under \Cref{mod:glm}, for any $m \in \{0,1\}^d$:

\begin{equation*}
Y | ( X_{\obs(m)},M = m ) \sim \mathcal{N}\left( \tilde\mu^m, \widetilde\Sigma^m \right),
\end{equation*}
with:
\begin{align*}
\tilde\mu^m = & \; \beta^T_{\obs(m)} X_{\obs(m)} + \beta^T_{\mis(m)} \mu^m_{{\mis}|{\obs}} \\
\mu^m_{{\mis}|{\obs}} = & \; \mu^m_{\mis(m)} +  \Sigma^m_{\mis(m),\obs(m)} ({\Sigma^m_{\obs(m),\obs(m)}})^{-1}(X_{\obs(m)} - \mu^m_{\obs(m)}), \\
\widetilde\Sigma^m = & \; \beta^T_{\mis(m)} \Sigma^m_{{\mis}|{\obs}} \beta_{\mis(m)} + \sigma^2_{\varepsilon} \\
\Sigma^m_{{\mis}|{\obs}}= & \; \Sigma^m_{\mis(m),\mis(m)} - \Sigma^m_{\mis(m),\obs(m)} ({\Sigma^m_{\obs(m),\obs(m)}})^{-1}\Sigma^m_{\obs(m),\mis(m)}.
\end{align*}

\end{proposition}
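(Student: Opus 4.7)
The statement is a direct consequence of the standard conditioning formula for multivariate Gaussians combined with the independence of the noise $\varepsilon$ from $(X,M)$. My plan is to decompose $Y$ along the observed/missing coordinates, condition the Gaussian $X$ on $X_{\obs(m)}$ within the event $\{M=m\}$, and then add the independent Gaussian noise.

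\textbf{Step 1: Decompose $Y$.} From Model~\ref{mod:glm}, write
\[
Y = \beta^T X + \varepsilon = \beta^T_{\obs(m)} X_{\obs(m)} + \beta^T_{\mis(m)} X_{\mis(m)} + \varepsilon.
\]
Conditionally on $\{M=m\}$, the term $\beta^T_{\obs(m)} X_{\obs(m)}$ is measurable with respect to $(X_{\obs(m)}, M)$, so it will contribute deterministically to the conditional mean.

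\textbf{Step 2: Condition the Gaussian $X$.} Under Model~\ref{mod:glm}, $X \mid (M=m) \sim \mathcal{N}(\mu^m, \Sigma^m)$. The classical conditioning formula for multivariate Gaussians (applied to the block split between $\obs(m)$ and $\mis(m)$) gives
\[
X_{\mis(m)} \mid (X_{\obs(m)}, M=m) \sim \mathcal{N}\bigl(\mu^m_{\mis\mid\obs},\; \Sigma^m_{\mis\mid\obs}\bigr),
\]
where $\mu^m_{\mis\mid\obs}$ and $\Sigma^m_{\mis\mid\obs}$ are exactly the expressions in the statement. This is the only nontrivial ingredient, and it is a textbook result (no obstacle).

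\textbf{Step 3: Add the independent noise.} Since $\varepsilon \sim \mathcal{N}(0,\sigma^2_\varepsilon)$ is independent of $(X,M)$, conditionally on $(X_{\obs(m)}, M=m)$ it remains $\mathcal{N}(0,\sigma^2_\varepsilon)$ and independent of $X_{\mis(m)}$. Therefore
\[
Y \mid (X_{\obs(m)}, M=m) = \underbrace{\beta^T_{\obs(m)} X_{\obs(m)}}_{\text{constant given }X_{\obs(m)}} + \beta^T_{\mis(m)} X_{\mis(m)} + \varepsilon
\]
is a linear combination of independent Gaussians (once we condition), hence Gaussian with mean $\tilde\mu^m = \beta^T_{\obs(m)} X_{\obs(m)} + \beta^T_{\mis(m)} \mu^m_{\mis\mid\obs}$ and variance $\widetilde\Sigma^m = \beta^T_{\mis(m)} \Sigma^m_{\mis\mid\obs} \beta_{\mis(m)} + \sigma^2_\varepsilon$, matching the claim.

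\textbf{Main difficulty.} There is essentially no conceptual obstacle: the only subtlety is being careful about what is random versus what is measurable under the conditioning $\sigma(X_{\obs(m)}, M)$, and making sure the use of $\varepsilon \perp\!\!\!\!\perp (X,M)$ (rather than merely $\varepsilon \perp\!\!\!\!\perp X$) is what allows the independence to survive the conditioning on the event $\{M=m\}$. Once these points are made explicit, the derivation is a direct application of Gaussian conditioning.
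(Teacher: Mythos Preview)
Your proposal is correct and follows essentially the same approach as the paper: decompose $Y$ along the observed/missing blocks, apply the standard multivariate Gaussian conditioning formula to obtain the law of $X_{\mis(m)}\mid (X_{\obs(m)},M=m)$, and then add the independent Gaussian noise $\varepsilon$. Your remark that $\varepsilon\perp\!\!\!\!\perp (X,M)$ (not merely $\varepsilon\perp\!\!\!\!\perp X$) is what preserves independence under conditioning on $\{M=m\}$ is exactly the subtlety the paper handles as well.
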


\begin{proposition}[Oracle intervals]

Under \Cref{mod:glm}, for any $m \in \{0,1\}^d$, for any $\delta \in (0,1)$:
\begin{align*}
    q^{Y | (X_{\obs(m)}, M = m)}_\delta & = \beta^T_{\obs(m)} X_{\obs(m)} + \beta^T_{\mis(m)} \mu^{m}_{\mis|\obs} + q^{\mathcal{N}(0,1)}_\delta\sqrt{\beta^T_{\mis(m)} \Sigma^{m}_{\mis|\obs} \beta_{\mis(m)} + \sigma^2_{\varepsilon}},
\end{align*}
and the oracle predictive interval length is given by:
\begin{equation}
\mathcal{L}^*_{\alpha}(m) = 2 q^{\mathcal{N}(0,1)}_{1-\frac{\alpha}{2}} \sqrt{\beta^T_{\mis(m)} \Sigma^{m}_{\mis|\obs} \beta_{\mis(m)} + \sigma^2_{\varepsilon}}.
\end{equation}

\end{proposition}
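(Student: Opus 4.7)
The plan is to leverage the preceding proposition which identifies the conditional distribution of $Y$ given $(X_{\obs(m)}, M=m)$ as a univariate Gaussian with mean $\tilde\mu^m$ and variance $\widetilde\Sigma^m = \beta_{\mis(m)}^T \Sigma^m_{\mis|\obs}\beta_{\mis(m)} + \sigma_\varepsilon^2$. Once this Gaussian form is established, both assertions of the proposition follow from elementary facts about univariate Gaussians, so the argument is essentially a bookkeeping step.

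First, I would rewrite the conditional quantile using the standard identity $q_\delta^{\mathcal{N}(\mu,\sigma^2)} = \mu + \sigma \, q_\delta^{\mathcal{N}(0,1)}$, which is an immediate consequence of the location-scale property of the Gaussian family (if $Z\sim\mathcal{N}(0,1)$ then $\mu+\sigma Z \sim \mathcal{N}(\mu,\sigma^2)$ and the CDF is strictly increasing). Substituting $\tilde\mu^m$ and $\sqrt{\widetilde\Sigma^m}$ yields exactly the displayed expression for $q_\delta^{Y|(X_{\obs(m)},M=m)}$.

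For the oracle length, I would argue that among all Borel sets $A\subset\mathds{R}$ with $\mathds{P}(Y\in A \mid X_{\obs(m)}, M=m)\geq 1-\alpha$, the Lebesgue-measure-minimizing one is an interval symmetric around $\tilde\mu^m$. This uses that the density of $Y\mid (X_{\obs(m)},M=m)$ is log-concave, unimodal and symmetric about $\tilde\mu^m$: standard arguments (superlevel sets of a unimodal symmetric density are symmetric intervals, and by a water-filling / Neyman--Pearson type argument any minimum-measure set with prescribed mass must be such a superlevel set) imply that the optimal interval is $[\tilde\mu^m - \sqrt{\widetilde\Sigma^m}\,q^{\mathcal{N}(0,1)}_{1-\alpha/2},\; \tilde\mu^m + \sqrt{\widetilde\Sigma^m}\,q^{\mathcal{N}(0,1)}_{1-\alpha/2}]$. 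Its length is therefore $2\,q^{\mathcal{N}(0,1)}_{1-\alpha/2}\sqrt{\widetilde\Sigma^m}$, and crucially does not depend on the realised $X_{\obs(m)}$: all dependence on the observed features collapses into the mean $\tilde\mu^m$, which only shifts the interval and cancels when taking the length.

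I do not expect any genuine obstacle here; the only subtlety worth spelling out is the optimality step for the shortest-interval characterisation, which should be stated explicitly so that the reader understands why the answer depends solely on $m$ (through $\beta_{\mis(m)}$, $\Sigma^m_{\mis|\obs}$ and $\sigma_\varepsilon^2$) and not on the particular value of $X_{\obs(m)}$. This mask-only dependence is precisely the point the proposition is intended to make, since it is what fuels the subsequent heteroskedasticity discussion.
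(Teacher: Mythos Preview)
Your proposal is correct and follows essentially the same route as the paper: identify the conditional law of $Y$ given $(X_{\obs(m)}, M=m)$ as univariate Gaussian and read off the quantiles via the location--scale property. The paper's proof actually re-derives the Gaussian conditional distribution from scratch (rather than invoking the preceding proposition as you do) and then simply writes down the length as $q_{1-\alpha/2}-q_{\alpha/2}$ without spelling out the shortest-interval optimality argument; your treatment of that point is slightly more complete.
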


\begin{proof}

Using multivariate Gaussian conditioning \citep{cond_gaussian}, for any subset of indices $L \in \llbracket 1, d\rrbracket$:
\begin{equation}
    X_K | (X_L, M) \sim \mathcal{N}(\mu^M_{K|L}, \Sigma^M_{K|L}), \label{eq:cond_gaussian_dist}
\end{equation}
with $K = \bar{L}$ (the complement indices) and:
\begin{align*}
    \mu^M_{K|L} & = \mu^M_K + \Sigma^M_{K,L} {\Sigma^M_{L,L}}^{-1}(X_L - \mu^M_L), \\
    \Sigma^M_{K|L} & = \Sigma^M_{K,K} -\Sigma^M_{K,L} {\Sigma^M_{L,L}}^{-1}\Sigma^M_{L,K}.
\end{align*}

Given that $Y = \beta^T X + \varepsilon$, with $\varepsilon \sim \mathcal{N}(0, \sigma^2_{\varepsilon}) \perp\!\!\!\!\perp (X, M)$, the following holds:
\begin{align*}
    Y | (X_L, M) & \overset{(d)}{=} (\beta^T X + \varepsilon) | (X_L, M) 
    \overset{(d)}{=} \beta^T_L X_L + (\varepsilon + \beta^T_{K} X_{K}) | (X_L, M)
\end{align*}
and by \Cref{eq:cond_gaussian_dist},  $\beta^T_{K} X_{K} | (X_L, M) \sim \mathcal{N}(\beta^T_K \mu^M_{K|L}, \beta^T_K \Sigma^M_{K|L} \beta_K)$, and $(\varepsilon |(X_L, M)) \sim \mathcal{N}(0, \sigma^2_{\varepsilon}) $, and $(\beta^T_{K} X_{K} \perp\!\!\!\!\perp \varepsilon) | (X_L, M)$ .
Thus:
\begin{equation*}
    Y | (X_L, M) \sim \mathcal{N}(\beta^T_L X_L + \beta^T_K \mu^M_{K|L}, \beta^T_K \Sigma^M_{K|L} \beta_K +  \sigma^2_{\varepsilon}).
\end{equation*}

Consequently, for any $\delta \in (0,1)$:
\begin{equation}
    q^{Y | (X_L, M)}_\delta = \beta^T_L X_L + \beta^T_K \mu^M_{K|L} + q^{\mathcal{N}(0,1)}_\delta\sqrt{\beta^T_K \Sigma^M_{K|L} \beta_K +  \sigma^2_{\varepsilon}}. \label{eq:quantiles_gaussian_KL}
\end{equation}

For any pattern $m \in \{0,1\}^d$, applying \Cref{eq:quantiles_gaussian_KL} with   $K= \mis(m)= \overline{\obs(m)}$, $L=\obs(m)$, we have, for any $\delta \in (0,1)$:
\begin{align*}
    q^{Y | (X_{\obs(m)}, M = m)}_\delta = & \beta^T_{\obs(m)} X_{\obs(m)} + \beta^T_{\mis(m)} \mu^m_{\mis|{\obs}} + q^{\mathcal{N}(0,1)}_\delta\sqrt{\beta^T_{\mis(m)} \Sigma^m_{{\mis}|{\obs}} \beta_{\mis(m)} +  \sigma^2_{\varepsilon}},
\end{align*}
and:
\begin{equation*}
    \mathcal{L}^*_{\alpha}(m) = 2 \times q^{\mathcal{N}(0,1)}_{1-\alpha/2} \times \sqrt{\beta^T_{\mis(m)} \Sigma^m_{{\mis}|{\obs}} \beta_{\mis(m)} +  \sigma^2_{\varepsilon}},
\end{equation*}
with:
\begin{align*}
 \mu^m_{\mis|\obs} & = \mu^m_{\mis(m)} + \Sigma^m_{\mis(m),\obs(m)} ({\Sigma^m_{\obs(m),\obs(m)}})^{-1}(X_{\obs(m)} - \mu^m_{\obs(m)}), \\
    \Sigma^m_{\mis|\obs} & = \Sigma^m_{\mis(m),\mis(m)} -\Sigma^m_{\mis(m),\obs(m)}( {\Sigma^m_{\obs(m),\obs(m)}})^{-1}\Sigma^m_{\obs(m),\mis(m)}.
\end{align*}
\end{proof}

\subsection{Discussion on how mean-based approaches fail}
\label{app:glm_discussion}

Under \Cref{mod:glm}, the Bayes predictor for a quadratic loss in presence of missing values -- $\mathds{E}\left[ Y | \left( X_{\obs(M)}, M \right) \right]$ -- is fully characterized \citep{lemorvan2020,lemorvan2020neumiss,ayme2022}. 
\Cref{fig:oracles_3d} is obtained by generating the data according to \Cref{mod:glm} with $d = 3$, $\beta = (1, 2, -1)^T$ and $\sigma_\varepsilon = 1$, with multivariate Gaussian $X$ and MCAR mechanism ($X \perp\!\!\!\!\perp M$) (which is a particular case of  \Cref{mod:glm} with $\mu^m \equiv \mu$ and $\Sigma^m \equiv \Sigma$).
The left panel represents the method \textit{Oracle mean + SCP} where SCP is applied on the regressor being the Bayes predictor for the mean with absolute residuals as the score function. The first violin plot represents the marginal coverage whereas the other 7  represent conditional coverage with respect to the different possible patterns: conditional on observing all the variables, on observing all the variables except $X_1$, except $X_2$ etc (see \Cref{sec:experiments} for details on the simulation process). 

\begin{figure}[!h]
    \centering
    \includegraphics[width=0.48\textwidth]{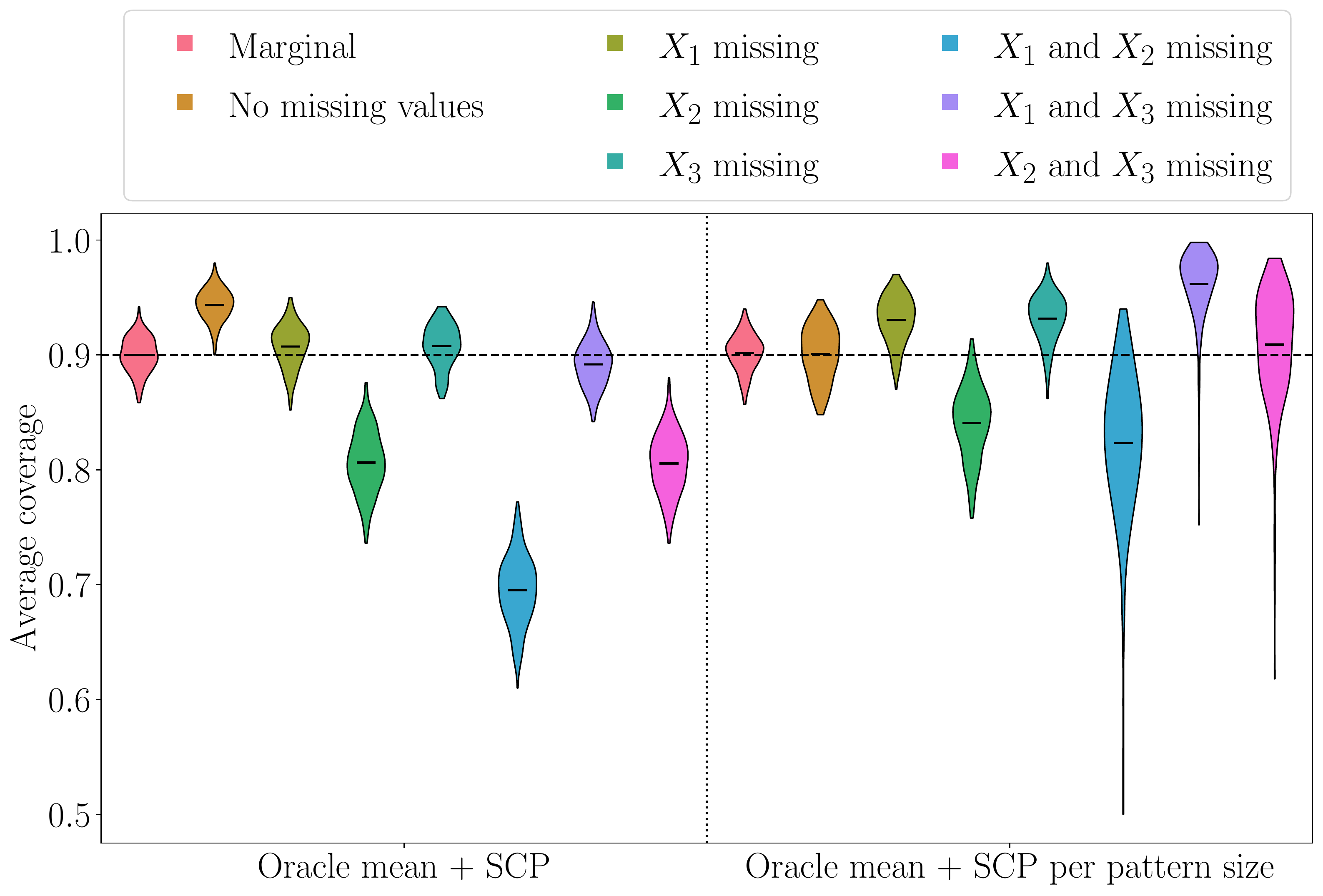}
    \caption{Calibration set contains 500 points. Test size for each pattern is of 500 individuals and for marginal is of 2000. 200 repetitions allow to display violin plots, the horizontal black line representing the mean.}
    \label{fig:oracles_3d}
\end{figure}

\paragraph{SCP on a (oracle) mean regressor lacks of conditional coverage with respect to the mask.} \Cref{fig:oracles_3d} (left) highlights that even with the best mean regressor (the Bayes predictor) and an homoskedastic noise, usual SCP intervals: 
\begin{itemize}[topsep=0pt,noitemsep]
    \item  over-cover when there are no missing values;
    \item  cover less for a mask $\breve{m}$ than for a mask $\mathring{m}$ when $\mathring{m} \subset \breve{m}$ (e.g. $\mathring{m} = (1,0,0)$ only $X_1$ is missing, $\breve{m} = (1,1,0)$ that is $X_1$ and $X_2$ are missing);
    \item cover less when the most informative variable ($X_2$) is missing.
\end{itemize}
To tackle this issue, one could calibrate conditionally to the missing data patterns. This is in the same vein as calibrating conditionally to the categories of a categorical variable or to different groups \citep{romano_malice_2020}. This strategy is not viable as there are $2^d$ patterns: the number of subsets grows exponentially with the dimension, implying the creation of subsets with too little data to perform the~calibration. As an alternative, one could consider to perform calibration conditionally to the pattern size (e.g. when $d=3$, either 0 missing value, 1 or 2). This is possible as there are only $d$ different pattern sizes.  

\paragraph{Calibrating by pattern size does not provide validity conditionally to the missing data patterns.}  \Cref{fig:oracles_3d} (right) shows the coverages of \textit{Oracle mean + SCP per pattern size} where SCP is applied on the Bayes predictor for the mean and the calibration is protected by pattern size. The previous statements still hold with this strategy, even if the coverage disparities are smaller. Therefore, it is not enough to calibrate per pattern size. 

\section{Finite sample algorithms}

\label{app:methods}

\subsection{General statement of \Cref{alg:cp_na_sub}}
\label{app:methods_general}
We provide in \Cref{alg:CPMDA-ex} a general statement of \masksub handling any learning algorithm (both regression and classification) and any score function.

\begin{algorithm}[!h]
\caption{CP-MDA-Exact}
\label{alg:CPMDA-ex}
\begin{algorithmic}[1] 
\REQUIRE Imputation algorithm $\mathcal{I}$, predictive algorithm $\mathcal{A}$, conformity score function $s_g$ paramatrized by a model $g$, significance level $\alpha$, training set $\left\{ \left(X^{(k)},M^{(k)},Y^{(k)}\right)\right\}_{k=1}^n$, test point $\left( X^{(\text{test})},M^{(\text{test})} \right)$.
\ENSURE Prediction interval ${\widehat{C}}_{\alpha}\left( x^{(\text{test})},m^{(\text{test})} \right)$. 
\STATE Randomly split $\{1, \ldots, n\}$ into two disjoint sets $\rm{Tr}$ and $\rm{Cal}$. 
\STATE Fit the imputation function: $\Phi(\cdot) \leftarrow \mathcal{I}\left(\left\{ \left( X^{(k)}, M^{(k)} \right), k \in \rm{Tr}\right\}\right)$ 
\STATE Impute the training set: $\left\{ X_{\text{imp}}^{(k)} \right\}_{k \in \rm{Tr}} := \left\{ \Phi\left(X^{(k)},M^{(k)}\right) \right\}_{k \in \rm{Tr}}$
\STATE Fit algorithm $\mathcal{A}$: $\hat{g}(\cdot) \leftarrow \mathcal{A}\left(\left\{\left( X_{\text{imp}}^{(k)}, Y^{(k)} \right), k \in \rm{Tr}\right\}\right)$ 
\STATEx \textcolor{blindblue}{// Generate an augmented calibration set:}  
\STATE $\rm{Cal^{(\text{test})}} = \left\{ k \in \rm{Cal} \text{ such that } M^{(k)} \subset M^{(\text{test})} \right\}$
\FOR {$k \in \rm{Cal^{(\text{test})}}$}
\STATE $\widetilde M^{(k)} = M^{(\text{test})}$ \COMMENT {Additional masking}

\ENDFOR 
\STATEx \textcolor{blindblue}{Augmented calibration set generated. //} 
\STATE Impute the calibration set: $\left\{ X_{\text{imp}}^{(k)} \right\}_{k \in \rm{Cal^{(\text{test})}}} := \left\{ \Phi\left(X^{(k)},\widetilde M^{(k)}\right) \right\}_{k \in \rm{Cal^{(\text{test})}}}$
\FOR {$k \in \rm{Cal^{(\text{test})}} $} 
\STATE Set $S^{(k)} = s_{\hat g} \left( Y^{(k)}, X_{\text{imp}}^{(k)} \right)$, the \textit{conformity scores}
\ENDFOR
\STATE Set $\mathcal{S}_{\rm{Cal}} = \{S^{(k)}, k \in \rm{Cal^{(\text{test})}}\}$
\STATE Compute $\widehat Q_{1-\tilde\alpha}\left(\mathcal{S}_{\rm{Cal}}\right)$, the $1-\tilde\alpha$-th empirical quantile of $\mathcal{S}_{\rm{Cal}}$, with $1-\tilde\alpha := (1-\alpha)\left(1+1/\#\mathcal{S}_\Cal\right)$.
\STATE Set  ${\widehat{C}}_{\alpha} \left( X^{(\text{test})}, M^{(\text{test})} \right) = \left\{y \text{ such that } s_{\hat g} \left( y, \Phi \left(X^{(\text{test})},M^{(\text{test})} \right) \right) \leq \widehat{Q}_{1-\hat\alpha}\left(\mathcal{S}_{\rm{Cal}}\right)  \right\}$.
\end{algorithmic}
\end{algorithm}

\subsection{Mask-conditional valitidy of \masksub}

\label{app:methods_exact}

Before proving the results, we introduce a slightly stronger notion of mask-conditional-validity, when the calibration set is itself of random cardinality.

\begin{definition}[Mask-conditional-validity-random-calibration-size]
\label{def:cond_cov_rv}
A method is mask-conditionally-valid with a random calibration size $\#\rm{Cal}$ if for any $m \in \mathcal{M}$, the lower bound is satisfied, and exactly mask-conditionally-valid if for any $m \in \mathcal{M}$, $1\le c\le n$, the upper bound is also satisfied:
\begin{align*}
1-\alpha \!\!\! \underset{\text{valid}}{\leq} \!\! \mathds{P}\left(  Y ^{(n+1)} \in \widehat{C}_{\alpha} \left( X^{(n+1)},m \right) | M^{(n+1)} = m,  \#\rm{Cal} = c \right)\underset{\text{exactly valid}}{\leq} 1 - \alpha + \frac{1}{c + 1}.
\end{align*}

\end{definition}

We start by proving \Cref{prop:meth_cond_2} that implies the result on \masksub in \Cref{prop:meth_cond}.

\begin{theorem}\label{prop:meth_cond_2}[Conditional validity of \masksub with calibration of random cardinality]
Assume the missing mechanism is MCAR, and that \Cref{ass:iid,ass:y_ind_m,ass:imp_sym} hold. Then:
\begin{itemize}[topsep=0pt,noitemsep,leftmargin=*,wide]
	\item \masksub is valid with a random calibration size $\#\rm{Cal}$ conditionally to the missing patterns;
	\item if the scores $S^{(k)}$ are almost surely distinct, \masksub is exactly mask-conditionally-valid with a random calibration size $\#\rm{Cal}$.
\end{itemize}
\end{theorem}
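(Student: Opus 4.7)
The plan is to reduce the claim to the standard quantile bound of split CP, by showing that, after conditioning on the relevant mask configuration and on the training split, the augmented calibration scores and the test score form an exchangeable sequence. Concretely I fix $m\in\mathcal{M}$ and condition on $\Tr$, on the full mask vector $(M^{(k)})_{k\in\Cal\cup\{n+1\}}$ being any realisation with $M^{(n+1)}=m$, and with the random index set $\Cal^{(\text{test})}=\{k\in\Cal:M^{(k)}\subset m\}$ equal to some prescribed subset of size $c$. Under this conditioning, both the imputer $\Phi$ (well-defined as a deterministic function of $\Tr$ by \Cref{ass:imp_sym}) and the quantile regressors $\hat q_{\alpha/2},\hat q_{1-\alpha/2}$ are fixed.

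The key step is to establish exchangeability of the scores. MCAR gives $M\ind X$, which combined with \Cref{ass:y_ind_m} ($Y\ind M\mid X$) yields $M\ind(X,Y)$. Hence the conditional distribution of $(X^{(k)},Y^{(k)})_{k\in\Cal\cup\{n+1\}}$ given the entire mask vector coincides with the unconditional one, and by \Cref{ass:iid} this unconditional joint distribution is exchangeable; restriction to the subset $\Cal^{(\text{test})}\cup\{n+1\}$ preserves exchangeability. In \Cref{alg:cp_na_sub}, every such index has its mask overridden by the common value $m$, so the score $s^{(k)}=\max\bigl(\hat q_{\alpha/2}(\Phi(X^{(k)},m))-Y^{(k)},\,Y^{(k)}-\hat q_{1-\alpha/2}(\Phi(X^{(k)},m))\bigr)$ is the same deterministic function of $(X^{(k)},Y^{(k)})$ for every $k\in\Cal^{(\text{test})}\cup\{n+1\}$. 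Thus $(s^{(k)})_{k\in\Cal^{(\text{test})}\cup\{n+1\}}$ is exchangeable.

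I then apply the standard split CP quantile lemma to these $c+1$ exchangeable scores to obtain $\mathds{P}(s^{(n+1)}\le\widehat Q_{1-\tilde\alpha}(S))\ge 1-\alpha$ with the algorithmic choice $1-\tilde\alpha=(1-\alpha)(1+1/c)$; if the scores are almost surely distinct, the rank of $s^{(n+1)}$ is uniform on $\{1,\dots,c+1\}$, yielding the matching upper bound $1-\alpha+1/(c+1)$. By the construction of the CQR interval on line~\ref{line_algo_1:compute_interval}, the event $\{Y^{(n+1)}\in\widehat C_\alpha(X^{(n+1)},m)\}$ is exactly $\{s^{(n+1)}\le\widehat Q_{1-\tilde\alpha}(S)\}$, and marginalising the conditioning over $\Tr$ and over mask configurations compatible with $M^{(n+1)}=m$ and $\#\Cal^{(\text{test})}=c$ yields the statement of \Cref{def:cond_cov_rv}.

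The main obstacle is that the conditioning event is not symmetric across the indices of $\Cal^{(\text{test})}\cup\{n+1\}$: the test mask must equal $m$ exactly, whereas the calibration masks need only be subsets of $m$. Without further assumptions this asymmetry would destroy conditional exchangeability of $(X^{(k)},Y^{(k)})$. MCAR together with \Cref{ass:y_ind_m} is precisely what decouples $M$ from $(X,Y)$ so that the asymmetric mask conditioning leaves the joint law of the $(X^{(k)},Y^{(k)})$ untouched; this is where those assumptions are genuinely used, and it is also the reason the argument does not immediately extend beyond MCAR, motivating the more delicate treatment of \mask that relies on \Cref{ass:sto_dom}.
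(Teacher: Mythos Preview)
Your proof is correct and follows essentially the same route as the paper's: both use MCAR together with \Cref{ass:y_ind_m} to obtain $M\ind(X,Y)$, which makes the (asymmetric) mask conditioning irrelevant for the joint law of $(X^{(k)},Y^{(k)})_{k\in\Cal^{(\text{test})}\cup\{n+1\}}$, after which applying the common score map and the standard split-CP quantile inequality finishes the argument. Minor cosmetic differences are that you condition on the entire mask vector while the paper conditions only on $\#\Cal^m$ and $M^{(\text{test})}$, and you treat $\Phi$ as fixed given $\Tr$ (which follows from line~2 of \Cref{alg:cp_na_sub}, not from \Cref{ass:imp_sym} as you parenthetically suggest) whereas the paper routes this through \Cref{lem:exch_imp_T}.
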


\begin{proof}[\underline{Proof of  \Cref{prop:meth_cond_2}}]
Let $\Tr$ and $\Cal$ be two disjoint sets on $\llbracket 1,n \rrbracket$. Let $\hat g$ be some model.
Given \ref{ass:iid}, the sequence $\left\{ \left( X^{(k)}, M^{(k)}, Y^{(k)} \right)_{k \in \rm{Cal}}, \left( X^{(\test)}, M^{(\test)}, Y^{(\test)} \right) \right\}$ is exchangeable. 
Therefore, the sequence $\left\{ \left( X^{(k)}, Y^{(k)} \right)_{k \in \rm{Cal}}, \left( X^{(\test)}, Y^{(\test)} \right) \right\}$ is also exchangeable.

Let $m$ in $\mathcal{M}$.  
We define $\Cal^m = \left\{ k \in \rm{Cal} \text{ such that } M^{(k)} \subset m \right\}$.

Let $c \in \llbracket 1,\#\Cal \rrbracket$.

As the $M \ind X$ (missingness is MCAR) and $(M \ind Y)\vert X$ (\Cref{ass:y_ind_m}), then $M \ind (X,Y)$, and $\#\Cal^m \ind \left( X^{(k)}, Y^{(k)} \right)_{k \in \rm{Cal}}, \left( X^{(\test)}, Y^{(\test)} \right)$. It follows that the sequence $\left\{ \left( X^{(k)}, Y^{(k)} \right)_{k \in \textcolor{blindblue}{\Cal^m}}, \left( X^{(\test)}, Y^{(\test)} \right) \right\}$ is exchangeable conditionally to $\#\Cal^m=c$.

Similarly,  $M^{(\test)} \ind \left( X^{(k)}, Y^{(k)} \right)_{k \in \rm{Cal}}, \left( X^{(\test)}, Y^{(\test)} \right)$. Thus
the sequence $\lbrace \left( X^{(k)}, M^{(\test)}, Y^{(k)} \right)_{k \in \textcolor{blindblue}{\Cal^m}}, \left( X^{(\test)}, M^{(\test)}, Y^{(\test)} \right) \rbrace$ is exchangeable conditionally to $\#\Cal^m=c$ and $M^{(\test)}=m$. 

Therefore, we can now invoke \Cref{prop:marg_cp_na} in combination with Lemma 1 of \citet{romano_malice_2020} to conclude the proof. But we can state a more rigorous version here, since in fact $\Cal^m$ is a random variable (as discussed in \Cref{def:cond_cov_rv}).

Since the algorithm $\mathcal{I}$ treats the calibration and test data points symmetrically (\ref{ass:imp_sym_T} with $\mathcal T = \rm{Cal} \bigcup \rm{Test}$), \ref{ass:imp_sym_T} also holds for any $\mathcal{T}' \subset \mathcal{T}$. Therefore, by \Cref{lem:exch_imp_T} the sequence $\left\{ \left( \Phi(X^{(k)}, M^{(\test)}), M^{(\test)}, Y^{(k)} \right)_{k \in \Cal^m}, \left( \Phi(X^{(\test)}, M^{(\test)}), M^{(\test)}, Y^{(\test)} \right) \right\}$ is exchangeable conditionally to $\#\Cal^m=c$ and $M^{(\test)}=m$.

The conclusion follows from usual arguments \citep{papadopoulos_inductive_2002,lei_distribution-free_2018,angelopoulos-gentle}.

Precisely, $\left\{ \left( s_{\hat g}(Y^{(k)} , \Phi(X^{(k)}, M^{(\test)})) \right)_{k \in \Cal^m},  s_{\hat g}(Y^{(\test)} , \Phi(X^{(\test)}, M^{(\test)}))\right\}$ is exchangeable conditionally to $\#\Cal^m = c$ and ${M^{(\test)}=m}$.
Therefore,
$$ \mathds{P}\left(s_{\hat g}(Y^{(\test)} , \Phi(X^{(\test)}, M^{(\test)})) \leq \widehat Q_{1-\tilde\alpha}( ( s_{\hat g}(Y^{(k)} , \Phi(X^{(k)}, M^{(\test)})) )_{k \in \Cal^m} ) \Big \vert M^{(\test)}=m, \#\Cal^m = c \right) \geq 1 - \alpha,$$
and if the $\left( \left( s_{\hat g}(Y^{(k)} , \Phi(X^{(k)}, M^{(\test)})) \right)_{k \in \Cal^m},  s_{\hat g}(Y^{(\test)} , \Phi(X^{(\test)}, M^{(\test)}))\right)$ are almost surely distinct (i.e. have a continuous distribution) then \citep{lei_distribution-free_2018,romano_conformalized_2019}:
$\resizebox{\textwidth}{!}{$\mathds{P}\left(s_{\hat g}(Y^{(\test)} , \Phi(X^{(\test)}, M^{(\test)})) \leq \widehat Q_{1-\tilde\alpha}\big( ( s_{\hat g}(Y^{(k)} , \Phi(X^{(k)}, M^{(\test)})) )_{k \in \Cal^m} \big) \Big  \vert M^{(\test)}=m, \#\Cal^m = c \right) \leq 1 - \alpha + \frac{1}{c+1}.$}$

This proves the first two points (with respect to \Cref{def:cond_cov_rv}) of \Cref{prop:meth_cond}, by observing that $\left\{ Y^{(\test)} \in \widehat C_{\alpha}(X^{(\test)}, M^{(\test)}) \right\} = \left\{ s_{\hat g}(Y^{(\test)} , \Phi(X^{(\test)}, M^{(\test)})) \leq \widehat Q_{1-\tilde\alpha}\left( \left( s_{\hat g}(Y^{(k)} , \Phi(X^{(k)}, M^{(\test)})) \right)_{k \in \Cal^m}\right) \right\}$.

\end{proof}

Then, the proof of \Cref{prop:meth_marg} (marginal validity of the \masksub) is direct by marginalizing the result of \Cref{prop:meth_cond}. \qed

\subsection{Validities of \mask.}
\label{app:methods_nested}
Next, we give more details on the results on \mask.

\subsubsection{Mask-conditional-validity of \mask.}

Let $m \in \mathcal{M}$.

 We start by describing the links between \mask and \masksub. \masksub can be re-written in the same way as \mask, but keeping the subselection step of l.~\ref{line_algo_1:compute_Caltest}. 

Indeed, first mention that the output of \Cref{alg:cp_na_sub} can be written in the following ways:
\begin{itemize}[noitemsep,leftmargin=*]
    \item 
$\widehat{C}_{\alpha} ( \textcolor{blindpurple}{ X^{(\test)}, m^{(\test)} } ) =  \left[  \hat{q}_{\frac{\alpha}{2}} \circ \Phi ( X^{(\test)}, m^{(\test)} ) - \widehat Q_{1-\tilde\alpha}\left(S\right) ;  \hat{q}_{1-\frac{\alpha}{2}} \circ \Phi ( X^{(\test)}, m^{(\test)} ) + \widehat Q_{1-\tilde\alpha}\left(S\right) \right]$
\item $\widehat{C}_{\alpha} ( \textcolor{blindpurple}{ X^{(\test)}, m^{(\test)} } ) =  \left[  \widehat Q_{\tilde\alpha}\left( \hat{q}_{\frac{\alpha}{2}} \circ \Phi ( X^{(\test)}, m^{(\test)} ) - S_{\Cal^{(\test)}}\right) ;   \widehat Q_{1-\tilde\alpha}\left(\hat{q}_{1-\frac{\alpha}{2}} \circ \Phi ( X^{(\test)}, m^{(\test)} ) + S_{\Cal^{(\test)}} \right) \right]$
\item $\widehat{C}_{\alpha} ( \textcolor{blindpurple}{ X^{(\test)}, m^{(\test)} } ) =  \left[  \widehat Q_{\tilde\alpha}\left( Z^{m^{(\test)}}_{\frac{\alpha}{2}} \right) ;   \widehat Q_{1-\tilde\alpha}\left( Z^{m^{(\test)}}_{1-\frac{\alpha}{2}} \right) \right]$.
\end{itemize}

With $Z^m_{\frac{\alpha}{2}} := \{z^{(k)}_{\frac{\alpha}{2}}, k \in \rm{Cal} \text{ and } \widetilde M^{(k)} = m \}$, and similarly for the upper bag. 
Recall that we have:
$    z^{(k)}_{\frac{\alpha}{2}} = \hat{q}_{\frac{\alpha}{2}} \circ \Phi\left( \textcolor{blindgreen}{ x^{(\text{test})},\widetilde m^{(k)} }\right) - s^{(k)}.
$

On the other hand, the output predictive interval of \Cref{alg:cp_na_jack} is then written as:
\begin{itemize}[noitemsep,leftmargin=*]
    \item 
$ {\widehat{C}}_{\alpha} \left( \textcolor{blindpurple}{ X^{(\test)}, m^{(\test)} } \right) = [ \widehat Q_{\tilde\alpha}\left(Z_{\frac{\alpha}{2}}\right) ; \widehat Q_{1-\tilde\alpha}\left(Z_{1-\frac{\alpha}{2}}\right) ]$.
\end{itemize}

With these notations, $Z_{\frac{\alpha}{2}}$ can be partitioned as \begin{equation}\label{eq:partition}
Z_{\frac{\alpha}{2}} = Z^m_{\frac{\alpha}{2}}\bigcup \Bigg( \bigcup\limits_{\widetilde m ^{(k)} \supset m} Z^{\widetilde m^{(k)}}_{\frac{\alpha}{2}} \Bigg) .
\end{equation}

With 
\begin{align*}
Z_{\frac{\alpha}{2}} &= \{Z^{(k)}_{\frac{\alpha}{2}}, k \in \rm{Cal}\}\\
Z^{(k)}_{\frac{\alpha}{2}} &= \hat{q}_{\frac{\alpha}{2}} \circ \Phi\left( \textcolor{blindgreen}{ X^{(\text{test})},\widetilde M^{(k)} }\right) - S^{(k)} \\
s^{(k)} &= 
 \max( \hat{q}_{\frac{\alpha}{2}}(x_{\text{imp}}^{(k)}) - y^{(k)} , y^{(k)} - \hat{q}_{1-\frac{\alpha}{2}}(x_{\text{imp}}^{(k)}) ).
\end{align*}

The result of \Cref{alg:cp_na_sub} implies that for any mask $m\in \mathcal{M}$, we have :
\begin{align*}
    \mathds P\left(Y^{(\test)} \in  {\widehat{C}}_{\alpha} \left( \textcolor{blindpurple}{ X^{(\test)}, m } \right) | M^{(\test) }= m \right) \geq 1-\alpha ,
\end{align*}
i.e.
\begin{align}
    \mathds P\left(Y^{(\test)} \notin  \left[  \hat{q}_{\frac{\alpha}{2}} \circ \Phi ( X^{(\test)}, m ) - \widehat Q_{1-\tilde\alpha}\left(S^{m}\right) ;  \hat{q}_{1-\frac{\alpha}{2}} \circ \Phi ( X^{(\test)}, m ) + \widehat Q_{1-\tilde\alpha}\left(S^m\right) \right] | M^{(\test) }= m \right) &\leq \alpha.  \label{eq:cov_interval_algo1_m} 
\end{align}
Where: $Q_{1-\tilde\alpha}\left(S\right)$ is the $(1-\alpha)(1+1/\#S)$-quantile of $S$ and $S^m= \{s^{(k)} \text{ for } k \in \rm{Cal} \text{ and } \widetilde M^{(k)} = m \}$.
Equivalently:
\begin{align}\label{eq:pattern_exact}
\mathds P\left(Y^{(\test)} \in   \left[  \widehat Q_{\tilde\alpha}\left( Z^m_{\frac{\alpha}{2}} \right) ;   \widehat Q_{1-\tilde\alpha}\left( Z^{m}_{1-\frac{\alpha}{2}} \right) \right] | M^{(\test) }= m \right) &\geq 1-\alpha .
\end{align}

In the following Lemma, we show that for $\tilde m \supset m$ the result extends under \Cref{ass:sto_dom}.

\begin{lemma}\label{lem:aux}
Assume~\Cref{ass:sto_dom}. For any $m\in \mathcal M$, for any $\tilde m \supset m$
\begin{align}\label{eq:pattern_other_conservative}
 \mathds P \left[  \left(Y^{(\test)} \in   \left[  \widehat Q_{\tilde\alpha}\left( Z^{\tilde m}_{\frac{\alpha}{2}} \right) ;   \widehat Q_{1-\tilde\alpha}\left( Z^{\tilde m}_{1-\frac{\alpha}{2}} \right) \right]  \right) | M^{(\test) }= m\right] &\geq 1-\alpha. 
\end{align}
This inequality shows the conservativeness of the quantiles of the bags resulting from larger missing patterns $\tilde m$ than $m$ when the construction of the output of \Cref{alg:cp_na_jack}. 

While inequality \Cref{eq:pattern_exact} is ``tight'' in the sense that the probability is almost exactly $1-\alpha$ (item 2 of \Cref{prop:meth_cond}), the proof hereafter shows that \Cref{eq:pattern_other_conservative} can be pessimistic in terms of actual coverage, as  one may have $\mathds P [  (Y^{(\test)} \textcolor{blindorange}{\notin}   [  \widehat Q_{\tilde\alpha}( Z^{\tilde m}_{\frac{\alpha}{2}} ) ;   \widehat Q_{1-\tilde\alpha}( Z^{\tilde m}_{1-\frac{\alpha}{2}} ) ]  ) | M^{(\test) }= m] \ll \alpha$. 

More precisely, we have the following inequality:
{
\begin{align}\label{eq_lem:main}
  \resizebox{\linewidth}{!}{ $ {\mathds E} \left[ \mathds P\left(Y^{(\test)} \notin  \left[  \hat{q}_{\frac{\alpha}{2}} \circ \Phi ( X^{(\test)}, \tilde m ) - \widehat Q_{1-\tilde\alpha}\left(S^{\tilde m}\right) ;  \hat{q}_{1-\frac{\alpha}{2}} \circ \Phi ( X^{(\test)}, \tilde m ) + \widehat Q_{1-\tilde\alpha}\left(S^{\tilde m}\right) \right] \bigg | M^{(\test) }= m, X^{(\test)}_{\obs(m)} \right)  \bigg | M^{(\test) }= m \right ] \leq \alpha $ }.
\end{align}
}

\end{lemma}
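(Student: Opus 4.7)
The plan is to reduce the coverage event to a conformity score inequality, couple the real test point (whose mask is $m$) with a virtual test point carrying mask $\tilde m$ via MCAR, and use \Cref{ass:sto_dom} to transfer the resulting CP coverage guarantee from the virtual to the real outcome.

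First I would rewrite the interval in conformity-score form. Since each $z^{(k)}_{\alpha/2} \in Z^{\tilde m}_{\alpha/2}$ has the form $\hat q_{\alpha/2} \circ \Phi(X^{(\test)}, \tilde m) - s^{(k)}$, with the first summand common across the subset, the empirical quantile factors as $\widehat Q_{\tilde\alpha}(Z^{\tilde m}_{\alpha/2}) = \hat q_{\alpha/2} \circ \Phi(X^{(\test)}, \tilde m) - \widehat Q_{1-\tilde\alpha}(S^{\tilde m})$, and symmetrically for the upper endpoint. The non-coverage event in \Cref{eq_lem:main} is therefore equivalent to $\{s^{(\test)}_{\tilde m} > \widehat Q_{1-\tilde\alpha}(S^{\tilde m})\}$, where $s^{(\test)}_{\tilde m}$ is the CQR score at the test point evaluated under augmented mask $\tilde m$.

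Second, I would introduce a virtual test pair $(X^{(\test)}, Y^{(\text{v})})$ obtained by redrawing $Y^{(\text{v})}$ from $Y \mid X_{\obs(\tilde m)} = X^{(\test)}_{\obs(\tilde m)}, M = \tilde m$, and write $\Cal^{\tilde m} := \{k \in \Cal : \max(m, M^{(k)}) = \tilde m\}$. Under MCAR combined with \Cref{ass:y_ind_m}, $M \ind (X, Y)$ jointly, so $\Cal^{\tilde m}$ is measurable with respect to $(M^{(k)})_k$ only and hence independent of every $(X^{(k)}, Y^{(k)})$ and of $(X^{(\test)}, Y^{(\text{v})})$. Combined with \Cref{ass:iid} and \Cref{ass:imp_sym}, the scores $\{s^{(k)} : k \in \Cal^{\tilde m}\}$ together with $s^{(\text{v})}_{\tilde m}$ form an exchangeable family conditional on $\#\Cal^{\tilde m}$. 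The quantile argument underlying the proof of \Cref{prop:meth_cond_2} then yields
\begin{equation*}
\mathds{P}\!\left(s^{(\text{v})}_{\tilde m} \leq \widehat Q_{1-\tilde\alpha}(S^{\tilde m}) \,\big|\, M^{(\text{v})} = \tilde m\right) \geq 1 - \alpha.
\end{equation*}

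The third and main step is to transfer this virtual bound to the real outcome $Y^{(\test)}$, and is where \Cref{ass:sto_dom} enters. Conditioning on the calibration and on $X^{(\test)}$ makes the endpoints $L, U$ of the interval deterministic; writing $F_{\tilde m}(y) := \mathds{P}(Y \leq y \mid X_{\obs(\tilde m)} = X^{(\test)}_{\obs(\tilde m)}, M = \tilde m)$ and $F_m$ similarly with $m$ in place of $\tilde m$, the virtual bound reads $F_{\tilde m}(U) - F_{\tilde m}(L^-) \geq 1 - \alpha$ in expectation. By \Cref{ass:sto_dom} applied with $m \subset \tilde m$, the $\alpha/2$ and $1-\alpha/2$ quantiles of $F_m$ lie strictly inside those of $F_{\tilde m}$, so on the favorable event $\{L \leq q^{\tilde m}_{\alpha/2}\} \cap \{U \geq q^{\tilde m}_{1-\alpha/2}\}$ the real tail probabilities are each immediately bounded by $\alpha/2$. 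The main obstacle is the complementary, anti-conservative event where one endpoint falls inside the virtual quantile window: here I would split the coverage error into its two tails and use the monotonicity of the conditional CDFs together with a layer-cake representation to argue that any excess left-tail probability of $Y^{(\text{v})}$ dominates the corresponding excess for $Y^{(\test)}$, so that the virtual CP bound controls the real one. Summing the two tail contributions and integrating over $X^{(\test)}_{\obs(m)}$ conditional on $M^{(\test)} = m$ produces \Cref{eq_lem:main}, from which the lemma's inequality \Cref{eq:pattern_other_conservative} follows by the tower property.
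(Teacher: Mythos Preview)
Your overall architecture is the same as the paper's: obtain the standard conformal coverage bound for the interval built with mask $\tilde m$ \emph{as if} $M^{(\test)}=\tilde m$, then push this bound to the event $M^{(\test)}=m$ via \Cref{ass:sto_dom}. The rewriting of the $Z^{\tilde m}$-quantiles in score form (your first step) is correct and matches what the paper uses implicitly. The virtual-outcome coupling in your second step is not wrong, but it is unnecessary: the paper simply invokes \Cref{eq:cov_interval_algo1_m} at $\tilde m$, then conditions further on $S^{\tilde m}$ and $X^{(\test)}_{\obs(\tilde m)}$ so that the endpoints $L,U$ become measurable, and writes the conditional non-coverage directly as $1-[F_{\tilde m}(U)-F_{\tilde m}(L)]$. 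Your $Y^{(\mathrm{v})}$ is just a name for a draw from $F_{\tilde m}$, so the two routes coincide.

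The substantive gap is in your third step, on the ``anti-conservative'' event. You propose to control the real tails by the virtual ones via ``monotonicity of the conditional CDFs together with a layer-cake representation''. This does not follow from \Cref{ass:sto_dom} as stated: that assumption compares \emph{quantiles} of $F_m$ and $F_{\tilde m}$ only at levels $\delta/2$ and $1-\delta/2$; it does not give a pointwise CDF ordering, so you cannot conclude $F_m(L)\le F_{\tilde m}(L)$ (or the analogous upper-tail inequality) for an arbitrary endpoint $L$. The paper does not attempt to resolve this case either; instead it makes the issue explicit by adding the side condition that the endpoints bracket the conditional median, i.e.\ $L\le \mathrm{median}(Y\mid X_{\obs(\tilde m)},M=\tilde m)\le U$. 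Under that bracketing, the quantile comparison in \Cref{ass:sto_dom} does translate into the needed CDF inequalities at $L$ and $U$, and step~(i) of the paper's proof goes through. Your layer-cake sketch would need either this same assumption or a strengthening of \Cref{ass:sto_dom} to full two-sided stochastic ordering around the median; as written it is not a valid argument.
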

The interpretation of that Lemma is that the intervals resulting from the prediction on $x^{\test}, \tilde m$ (more data hidden) and corrected with the residuals of the calibration points $(X^k, M^k = \tilde m, Y^k)$ have a \textit{larger} probability of containing $Y^{\test}$, conditionally to $X_{\obs(m)}$ than the interval built using  prediction on $x^{\test},  m$ (more  data available) and corrected with the residuals of the calibration points $(X^k, M^k =  m, Y^k)$ (more data available)

\begin{proof}[Proof of~\Cref{lem:aux}]
We start by invoking \Cref{eq:cov_interval_algo1_m} for $\tilde m$:
\begin{align}
    \mathds P\left(Y^{(\test)} \notin  \left[  \hat{q}_{\frac{\alpha}{2}} \circ \Phi ( X^{(\test)}, \tilde m ) - \widehat Q_{1-\tilde\alpha}\left(S^{\tilde m}\right) ;  \hat{q}_{1-\frac{\alpha}{2}} \circ \Phi ( X^{(\test)}, \tilde m ) + \widehat Q_{1-\tilde\alpha}\left(S^{\tilde m}\right) \right] | M^{(\test) } = \tilde m \right) \leq \alpha .
\end{align}
Consequently, by the tower property of conditional expectations:

\begin{align} \label{eq:cond_exp_proba_cov}
  \resizebox{\linewidth}{!}{   $\mathds E\left [\mathds P\left(Y^{(\test)} \notin  \left[  \hat{q}_{\frac{\alpha}{2}} \circ \Phi ( X^{(\test)}, \tilde m ) - \widehat Q_{1-\tilde\alpha}\left(S^{\tilde m}\right) ;  \hat{q}_{1-\frac{\alpha}{2}} \circ \Phi ( X^{(\test)}, \tilde m ) + \widehat Q_{1-\tilde\alpha}\left(S^{\tilde m}\right) \right] \bigg| M^{(\test) } = \tilde m ,  S^{(\tilde m)}, X^{(\test)}_{\obs(\tilde m)} \right) \bigg| M^{(\test) } = \tilde m  \right] \leq \alpha $ }.
\end{align}
Observe that $\hat{q}_{\frac{\alpha}{2}} \circ \Phi ( X^{(\test)}, \tilde m ) - \widehat Q_{1-\tilde\alpha}\left(S^{\tilde m}\right) $ is $\{M^{(\test) }=\tilde m,  S^{(\tilde m)}, X^{(\test)}_{\obs(\tilde m)}\}$-measurable.

Moreover, by \Cref{ass:sto_dom}, we have that for any $\delta \in [0,0.5]$: 
\begin{align}
q_{1-\delta/2}^{Y | (X_{\obs\left(m\right)}, M = m)} \leq q_{1-\delta/2}^{Y | (X_{\obs\left(\widetilde{m}\right)}, M = \widetilde{m})} \label{eq:dom1}\\
q_{\delta/2}^{Y | (X_{\obs\left({m}\right)}, M = {m})} \geq q_{\delta/2}^{Y | (X_{\obs\left(\widetilde{m} \right)}, M = \widetilde{m})} . \label{eq:dom2}
\end{align}
In other words the conditional distribution of $Y$ given $X_{\obs({\widetilde m})}$ and $M = {\widetilde m}$ ``stochastically dominates'' the conditional distribution of $Y$ given $X_{\obs({m})}$ and $M = {m}$.

We thus have, with $F_Z$ denoting the cumulative distribution function of $Z$:
$F_{Y | (X_{\obs\left(\widetilde{m}\right)}, M = \widetilde{m})}$ the cumulative distribution function of $Y | (X_{\obs\left(\widetilde{m}\right)}, M = \widetilde{m})$:
\begin{footnotesize}\begin{align}
    &\mathds P\left(Y^{(\test)} \notin  \left[  \hat{q}_{\frac{\alpha}{2}} \circ \Phi ( X^{(\test)}, \tilde m ) - \widehat Q_{1-\tilde\alpha}\left(S^{\tilde m}\right) ;  \hat{q}_{1-\frac{\alpha}{2}} \circ \Phi ( X^{(\test)}, \tilde m ) + \widehat Q_{1-\tilde\alpha}\left(S^{\tilde m}\right) \right] \bigg| M^{(\test) } = \tilde m ,  S^{(\tilde m)}, X^{(\test)}_{\obs(\tilde m)} \right) \nonumber \\  
    & = 1 - \left[F_{Y | (X_{\obs \left(\widetilde{m}\right)}, M = \widetilde{m})} \left( \hat{q}_{1-\frac{\alpha}{2}} \circ \Phi ( X^{(\test)}, \tilde m ) + \widehat Q_{1-\tilde\alpha}(S^{\tilde m})\right) - F_{Y | (X_{\obs \left(\widetilde{m}\right)}, M = \widetilde{m})} \left(\hat{q}_{\frac{\alpha}{2}} \circ \Phi ( X^{(\test)}, \tilde m ) - \widehat Q_{1-\tilde\alpha}(S^{\tilde m})\right)\right] \nonumber \\
     & \overset{(i)}{\geq} 1 - \left[F_{Y | (X_{\obs \left({m}\right)}, M = {m})} \left( \hat{q}_{1-\frac{\alpha}{2}} \circ \Phi ( X^{(\test)}, \tilde m ) + \widehat Q_{1-\tilde\alpha}(S^{\tilde m})\right) - F_{Y | (X_{\obs \left({m}\right)}, M = {m})} \left(\hat{q}_{\frac{\alpha}{2}} \circ \Phi ( X^{(\test)}, \tilde m ) - \widehat Q_{1-\tilde\alpha}(S^{\tilde m})\right)\right] \nonumber \\
      & = \mathds P\left(Y^{(\test)} \notin  \left[  \hat{q}_{\frac{\alpha}{2}} \circ \Phi ( X^{(\test)}, \tilde m ) - \widehat Q_{1-\tilde\alpha}\left(S^{\tilde m}\right) ;  \hat{q}_{1-\frac{\alpha}{2}} \circ \Phi ( X^{(\test)}, \tilde m ) + \widehat Q_{1-\tilde\alpha}\left(S^{\tilde m}\right) \right] \bigg| M^{(\test) } = m ,  S^{(\tilde m)}, X^{(\test)}_{\obs( m)} \right) . \label{eq:final_dom}
\end{align}
At (i) we use \eqref{eq:dom2} $F_{Y | (X_{\obs \left({m}\right)}, M = {m})} (\hat{q}_{\frac{\alpha}{2}} \circ \Phi ( X^{(\test)}, \tilde m ) - \widehat Q_{1-\tilde\alpha}(S^{\tilde m}))  \le  F_{Y | (X_{\obs \left(\widetilde{m}\right)}, M = \widetilde{m})} (\hat{q}_{\frac{\alpha}{2}} \circ \Phi ( X^{(\test)}, \tilde m ) - \widehat Q_{1-\tilde\alpha}(S^{\tilde m})) $, and  \eqref{eq:dom1}: $F_{Y | (X_{\obs \left({m}\right)}, M = {m})} (\hat{q}_{1-\frac{\alpha}{2}} \circ \Phi ( X^{(\test)}, \tilde m ) + \widehat Q_{1-\tilde\alpha}(S^{\tilde m}))  \geq  F_{Y | (X_{\obs \left(\widetilde{m}\right)}, M = \widetilde{m})} (\hat{q}_{1-\frac{\alpha}{2}} \circ \Phi ( X^{(\test)}, \tilde m ) + \widehat Q_{1-\tilde\alpha}(S^{\tilde m})) $ by \ref{ass:sto_dom}.
\end{footnotesize}
Remark that here we assume that $\left(\hat{q}_{1-\frac{\alpha}{2}} \circ \Phi ( X^{(\test)}, \tilde m ) + \widehat Q_{1-\tilde\alpha}(S^{\tilde m})\right) \geq \rm{median} (Y^{(\test)} | (X^{(\test)}_{\obs \left(\widetilde{m}\right)}, M = {\tilde m})$ and $\left(\hat{q}_{\frac{\alpha}{2}} \circ \Phi ( X^{(\test)}, \tilde m ) - \widehat Q_{1-\tilde\alpha}(S^{\tilde m})\right) \leq \rm{median} (Y^{\test} | (X^{(\test)}_{\obs \left(\widetilde{m}\right)}, M = \widetilde{m})$.

We obtain \Cref{eq_lem:main} in \Cref{lem:aux} by plugging \eqref{eq:final_dom} in \eqref{eq:cond_exp_proba_cov}, then \Cref{eq:pattern_other_conservative} by the tower property.
\end{proof}

\begin{theorem}
    \label{th:precise_conservative}
Assume the missing mechanism is MCAR, and that \Cref{ass:iid,ass:y_ind_m,ass:imp_sym} hold. Additionally \Cref{ass:sto_dom} is satisfied.

 Consider the partition described in \Cref{eq:partition}, and consider \mask running on a test point with missing pattern $m^{(\test)}$, with any of the following outputs, instead of l.~\ref{line_algo_2:compute_int} $ {\widehat{C}}_{\alpha} \left( \textcolor{blindpurple}{ x^{(\text{test})}, m^{(\text{test})} } \right) = [ \widehat Q_{\tilde\alpha}\left(Z_{\frac{\alpha}{2}}\right) ; \widehat Q_{1-\tilde\alpha}\left(Z_{1-\frac{\alpha}{2}}\right) ]$:
 \begin{enumerate}
     \item $ {\widehat{C}}_{\alpha} \left( \textcolor{blindpurple}{ x^{(\text{test})}, m^{(\text{test})} } \right) = [ \widehat Q_{\tilde\alpha}(Z^{\tilde m}_{\frac{\alpha}{2}}) ; \widehat Q_{1-\tilde\alpha}(Z^{\tilde m }_{1-\frac{\alpha}{2}}) ]$ where $\tilde m \supset m^{(\test)}$  is an arbitrary choice.
     \item $ {\widehat{C}}_{\alpha} \left( \textcolor{blindpurple}{ x^{(\text{test})}, m^{(\text{test})} } \right) = [ \widehat Q_{\tilde\alpha}(Z^{\hat m}_{\frac{\alpha}{2}}) ; \widehat Q_{1-\tilde\alpha}(Z^{\hat m }_{1-\frac{\alpha}{2}}) ]$ where $\hat m$ is a randomly selected pattern in $\{\tilde m , \tilde m \supset m^{(\test)}\}$, possibly with varying probability depending on the cardinality of the sets $Z^{\tilde m}_{\alpha/2}$ .
 \end{enumerate}
Then the resulting algorithm is mask-conditionally-valid.
\end{theorem}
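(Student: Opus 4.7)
For case (1), the plan is to observe that for any fixed $\tilde m \supset m^{(\text{test})}$, the prescribed output is exactly the interval whose coverage is bounded in \Cref{eq:pattern_other_conservative}. Specializing $m = m^{(\text{test})}$ in \Cref{lem:aux} then yields $\mathds{P}(Y^{(\text{test})} \in \widehat C_\alpha \mid M^{(\text{test})} = m^{(\text{test})}) \geq 1 - \alpha$ directly, giving MCV.

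For case (2), the approach will be to condition on the information that determines $\hat m$ and reduce to case (1) pointwise. I would introduce $\mathcal{G} = \sigma((M^{(k)})_{k \in \text{Cal}}, U)$, where $U$ represents any auxiliary randomness used by the selection rule, so that $\hat m$ is $\mathcal{G}$-measurable. Under MCAR combined with \ref{ass:y_ind_m} we have $M \ind (X,Y)$, which should ensure that the exchangeability of $\{(X^{(k)}, Y^{(k)})\}_{k \in \text{Cal}^{\tilde m}} \cup \{(X^{(\text{test})}, Y^{(\text{test})})\}$ used in the proof of \Cref{prop:meth_cond_2} survives conditioning on $\mathcal{G}$. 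Consequently, the argument of \Cref{lem:aux} --- which chains \Cref{eq:cov_interval_algo1_m} with the stochastic-domination step from \ref{ass:sto_dom} --- should go through conditionally on $\mathcal{G}$ and on $M^{(\text{test})} = m^{(\text{test})}$: for every fixed $\tilde m \supset m^{(\text{test})}$, the conditional miscoverage of $[\widehat Q_{\tilde\alpha}(Z^{\tilde m}_{\alpha/2}); \widehat Q_{1-\tilde\alpha}(Z^{\tilde m}_{1-\alpha/2})]$ would be at most $\alpha$. Applying this pointwise to $\tilde m = \hat m$ (valid since $\hat m$ is $\mathcal{G}$-measurable) and taking expectation via the tower property would deliver MCV.

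The main obstacle will be verifying the conditional version of \Cref{lem:aux}: each ingredient of its proof must survive conditioning on $\mathcal{G}$. The exchangeability-based guarantee \Cref{eq:cov_interval_algo1_m} for \Cref{alg:cp_na_sub} applied to test mask $\tilde m$ is the critical input, and it hinges on the MCAR-driven independence between calibration masks and $(X,Y)$; once this is established, the stochastic-domination inequality only uses marginal quantiles of $Y \mid (X_{\obs(\cdot)}, M = \cdot)$, which are unaffected by $\mathcal{G}$. A minor measurability issue is that $Z^{\hat m}$ may be empty on an event where no calibration point has the required augmented mask; this can be absorbed by defining the interval to be $\mathds{R}$ on that event, which trivially satisfies the coverage bound and does not affect the tower-property computation.
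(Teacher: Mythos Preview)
Your proposal is correct and follows essentially the same route as the paper. The paper's own proof is extremely terse: it says case (1) follows directly from \Cref{eq:pattern_other_conservative} (i.e., \Cref{lem:aux}), and case (2) follows by ``leveraging the independence between these cardinals and the coverage properties (same as in the proof of \Cref{prop:meth_cond_2})''. Your conditioning on $\mathcal{G} = \sigma((M^{(k)})_{k \in \Cal}, U)$ and appeal to $M \ind (X,Y)$ under MCAR plus \ref{ass:y_ind_m} is exactly the mechanism the paper gestures at, spelled out more carefully; conditioning on all calibration masks rather than just the cardinalities is harmless for the same independence reason, and your observation about the empty-$Z^{\hat m}$ edge case is a legitimate technical point the paper glosses over.
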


\begin{proof}[\underline{Proof of \Cref{th:precise_conservative}}]
The proof immediately follows from \Cref{eq:pattern_other_conservative}, and gives the result without difficulty for any arbitrary pattern or random variable independent of all other randomness.

Extension to a choice that involves the cardinality of the sets  $Z^{\tilde m}_{\alpha/2}$, leveraging the independence between these cardinals and the coverage properties (same as in the proof of \Cref{prop:meth_cond_2}).
\end{proof}

Then, the proof of \Cref{prop:meth_marg} (marginal validity of the CP-MDA-Nested) is direct by marginalizing the result of \Cref{th:precise_conservative}. \qed

\section{Infinite data results}
\label{app:infinite_data}

\begin{customProposition}[\ref{prop:qr_bayes} ($\ell_\beta$-consistency of an universal learner)]

Let $\beta \in [0,1]$. If $X$ admits a density on $\mathcal{X}$, then, for almost all imputation function $\Phi \in \mathcal{F}^{I}_{\infty}$, the function $g^*_{{\ell_\beta}, \Phi} \circ \Phi$ is Bayes optimal for the pinball risk of level $\beta$. 
\end{customProposition}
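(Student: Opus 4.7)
The strategy mirrors the approach of \citet{lemorvan2021} for mean regression, replacing the quadratic Bayes argument with a quantile Bayes argument. The architecture has three layers: (a) show that, generically over imputations $\Phi$, the imputed vector $\Phi(X,M)$ almost surely encodes the pair $(X_{\obs(M)}, M)$; (b) use this to transport the Bayes-optimal $\beta$-quantile predictor into a function of the imputed data; (c) deduce $\ell_\beta$-Bayes consistency for any universally consistent QR learner trained on the imputed data.

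\paragraph{Step 1: Almost sure injectivity of $\Phi$.} The technical core is the lemma from \citet{lemorvan2021}, which states that for almost every $\Phi \in \mathcal{F}^{I}_{\infty}$ (in their measure-theoretic sense on smooth imputations), the map $(x_{\obs(m)}, m) \mapsto \Phi(x, m)$ is injective on the support of $(X,M)$ up to a $\mathds{P}$-negligible set. The intuition is that each missing pattern $m$ sends imputed vectors onto a lower-dimensional manifold parametrized by $x_{\obs(m)}$, and for generic smooth $\Phi$ these manifolds are pairwise disjoint. I would invoke this lemma directly, checking that its hypotheses (density of $X$ on $\mathds{R}^d$, smoothness class $\mathcal{C}^{\infty}$) are exactly those of our statement.

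\paragraph{Steps 2--3: From injectivity to Bayes optimality.} Step 1 implies that the $\sigma$-algebras $\sigma(\Phi(X,M))$ and $\sigma(X_{\obs(M)}, M)$ coincide $\mathds{P}$-almost surely. Consequently, for every Borel set $A \subset \mathds{R}$,
\begin{equation*}
\mathds{P}(Y \leq y \mid \Phi(X,M)) = \mathds{P}(Y \leq y \mid X_{\obs(M)}, M) \quad \text{a.s.},
\end{equation*}
so the conditional $\beta$-quantile of $Y$ given $\Phi(X,M)$ equals that given $(X_{\obs(M)}, M)$ almost surely. Since the $\ell_\beta$-Bayes predictor with features $(X_{\obs(M)}, M)$ is the conditional $\beta$-quantile $q_\beta^{Y \mid (X_{\obs(M)}, M)}$, defining $g^*_{\ell_\beta, \Phi}(z)$ to be the $\beta$-quantile of $Y \mid \Phi(X,M)=z$ yields $g^*_{\ell_\beta, \Phi} \circ \Phi(X,M) = q_\beta^{Y \mid (X_{\obs(M)}, M)}$ a.s., proving (i). Note that in contrast to the squared-loss case, no noise centering condition is required: conditional quantiles are defined directly through the conditional CDF, which is preserved under the bijection of $\sigma$-algebras.

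\paragraph{Step 4 and main obstacle.} For (ii), any universally consistent QR algorithm $\hat g_n$ trained on i.i.d.\ samples $(\Phi(X^{(k)}, M^{(k)}), Y^{(k)})$ satisfies $\mathcal{R}_{\ell_\beta, \Phi}(\hat g_n) \to \inf_g \mathcal{R}_{\ell_\beta, \Phi}(g)$ by definition, and by (i) this infimum equals $\mathcal{R}^*_{\ell_\beta}$. The genuinely delicate part of the proof is Step 1: verifying that the ``bad'' subset of $\mathcal{F}^{I}_{\infty}$, on which two distinct pairs $(x_{\obs(m_1)}, m_1) \neq (x_{\obs(m_2)}, m_2)$ get collapsed to the same imputed point with positive probability, is negligible. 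Everything else is essentially a transcription of classical quantile regression theory through the almost sure bijection provided by $\Phi$.
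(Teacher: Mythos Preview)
Your proposal is correct and rests on the same geometric ingredient as the paper---the manifold lemmas of \citet{lemorvan2021}---but the way you exploit that ingredient is genuinely different. You argue abstractly: almost-sure injectivity of $(x_{\obs(m)},m)\mapsto\Phi(x,m)$ forces the $\sigma$-algebras $\sigma(\Phi(X,M))$ and $\sigma(X_{\obs(M)},M)$ to coincide up to null sets, hence the conditional $\beta$-quantiles coincide, and $g^*_{\ell_\beta,\Phi}\circ\Phi$ equals the $\ell_\beta$-Bayes predictor almost surely. The paper instead builds $g^*$ explicitly, pattern by pattern along an ordering of the masks, assigning on each manifold $\mathcal{M}^{(m(i))}$ the value of the Bayes predictor $\tilde f^*$ except on the (lower-dimensional, hence null) intersections with earlier manifolds; it then closes the argument by invoking the subadditivity of the pinball loss, $\rho_\beta(u+v)\le\rho_\beta(u)+\rho_\beta(v)$, to bound the risk of $g^*\circ\Phi$ by $\mathcal{R}^*_{\ell_\beta}$ plus a term that vanishes by almost-everywhere equality. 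Your route is shorter and avoids the subadditivity step entirely; the paper's route is more constructive and makes clear exactly where the pinball-specific structure enters (or rather, that very little of it is needed beyond subadditivity). One small caveat on your intuition in Step~1: the imputation manifolds are \emph{not} pairwise disjoint for generic $\Phi$---Lemma~A.2 of \citet{lemorvan2021} only guarantees that their intersections have strictly smaller dimension---so the correct statement is the one you give formally (injectivity up to a $\mathds{P}$-null set), not the informal ``disjoint'' heuristic.
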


\begin{proof}[Proof of \Cref{prop:qr_bayes}]

The proof starts in the exact same way than \citet{lemorvan2021}, based on their Lemmas A.1 and A.2. For completeness, we copy here the statements of these lemmas without their proof and rewrite the two first parts of the main proof. 

Let $\Phi$ be an imputation function such that for each missing data pattern $m$, $\phi^m \in \mathcal{C}^\infty\left(\mathds{R}^{|\obs(m)|}, \mathds{R}^{|\mis(m)|}\right)$. 

\begin{lemma}[Lemma A.1 in \citet{lemorvan2021}]
Let $\phi^m \in \mathcal{C}^{\infty}\left(\mathds{R}^{|\obs(m)|}, \mathds{R}^{|\mis(m)|}\right)$ be the imputation function for missing data pattern $m$, and let $\mathcal{M}^m=\left\{x \in \mathds{R}^d: x_{\mis(m)}=\phi^m\left(x_{\obs((m))}\right)\right\}$. For all $m$, $\mathcal{M}^m$ is an $|\obs((m))|$-dimensional manifold.
\label{lem:A1}
\end{lemma}

In \Cref{lem:A1}, $\mathcal{M}^m$ represents the manifold in which the data points are sent once imputed by $\phi^m$. \Cref{lem:A1} states that this manifold is of dimension $|\obs(m)|$.

\begin{lemma}[Lemma A.2 in \citet{lemorvan2021}]
Let $m$ and $m^{\prime}$ be two distinct missing data patterns with the same number of missing (resp. observed) values $|\mis|$ (resp $|\obs|$). Let $\phi^m \in \mathcal{C}^{\infty}\left(\mathds{R}^{|\obs(m)|}, \mathds{R}^{|\mis(m)|}\right)$ be the imputation function for missing data pattern $m$, and let $\mathcal{M}^m=\left\{x \in \mathds{R}^d: x_{\mis(m)}=\phi^m\left(x_{\obs(m)}\right)\right\}$. We define similarly $\Phi^{\left(m^{\prime}\right)}$ and $\mathcal{M}^{\left(m^{\prime}\right)}$. For almost all imputation functions $\phi^m$ and $\Phi^{\left(m^{\prime}\right)}$,
$$
\operatorname{dim}\left(\mathcal{M}^m \cap \mathcal{M}^{\left(m^{\prime}\right)}\right)= \begin{cases}0 & \text { if }|\mis|>\frac{d}{2} \\ d-2|\mis| & \text { otherwise. }\end{cases}
$$
\label{lem:A2}
\end{lemma}

Note that, as by \Cref{lem:A1} $\operatorname{dim}\left( \mathcal{M}^m \right) = \operatorname{dim}\left(\mathcal{M}^{(m')}\right) = |\obs| = d - |\mis|$, \Cref{lem:A2} states that $\operatorname{dim}\left(\mathcal{M}^m \cap \mathcal{M}^{\left(m^{\prime}\right)}\right) \leq \operatorname{dim}\left(\mathcal{M}^m\right) = \operatorname{dim}\left(\mathcal{M}^{(m')}\right)$.

Now, to prove \Cref{prop:qr_bayes} the missing data patterns are ordered as in \citet{lemorvan2021}: the first one will be the one in which all the variables are missing, while the last one will be the one in which all the variables are observed. For two data patterns with the same number of missing variables, the ordering is picked at random. We denote by $m(i)$ the $i$-th missing data pattern according to this ordering. 

We are going to build a function $g_{\Phi}$ which, composed with $\Phi$, will reach the $\ell$-Bayes risk.

For each missing data pattern, and starting by $m(1)$ of all variables missing, we can define $g_\Phi$ on the data points from the current missing data pattern. More precisely, for each $i$, $g_\Phi$ is built for every imputed data point belonging to $\mathcal{M}^{(m(i))}$ except for those already considered in previous steps (one imputed data point can belong to multiple manifolds):

\begin{equation*}
\forall Z=\Phi(X,M) \in \mathcal{M}^{(m(i))} \backslash \bigcup_{k<i} \mathcal{M}^{(m(k))}, \quad g^{\star}(Z)=\tilde{f}^{\star}(\widetilde{X})
\end{equation*}

That is, $g_\Phi \circ \Phi (X,M) $ will equal $\tilde{f}^* (X,M)$ except possibly if $\Phi(X,M) = \Phi(\tilde{Y})$ for some $\tilde{Y}$ that has more missing values than $X,M$. Therefore, for each missing data pattern $m(i)$, $g_\Phi \circ \Phi$ equals $\tilde{f}^*$ except on $\bigcup_{k<i} \mathcal{M}^{(m(k))}$. The question that remains is: what is the dimension of $\mathcal{M}^{(m(i))} \bigcap \left( \bigcup_{k<i} \mathcal{M}^{(m(k))} \right)$, these points for which there is no necessarily equality between $g_\Phi \circ \Phi$ and $\tilde{f}^*$. First, note that $\mathcal{M}^{(m(i))} \bigcap \left( \bigcup_{k<i} \mathcal{M}^{(m(k))} \right) = \bigcup_{k<i}  \left(\mathcal{M}^{(m(i))} \bigcap \mathcal{M}^{(m(k))} \right)$. For each space in this reunion, there are two cases:
\begin{itemize}
    \item either $|\obs(m(k))| < |\obs(m(i))|$: using \Cref{lem:A1}, $\operatorname{dim}\left(\mathcal{M}^{(m(k))}\right) =  |\obs(m(k))| < |\obs(m(i))| = \operatorname{dim}\left(\mathcal{M}^{(m(i))}\right)$. Thus, $\mathcal{M}^{(m(i))} \bigcap \mathcal{M}^{(m(k))}$ is of measure zero in $\mathcal{M}^{(m(i))}$.
    \item either $|\obs(m(k))| = |\obs(m(i))|$: using \Cref{lem:A2}, $\mathcal{M}^{(m(i))} \bigcap \mathcal{M}^{(m(k))}$ is of dimension 0 or smaller than $\operatorname{dim}\left(\mathcal{M}^{(m(i))} \right)$, thus it is of measure zero in $\mathcal{M}^{(m(i))}$.
\end{itemize}

Therefore, the set of data points for which $g_\Phi \circ \Phi$ does not equal the oracle is of measure 0 for each missing data pattern.

Let $\beta \in [0,1]$. We can now write down the $\ell_\beta$-risk of this built function:

\begin{align*}
\mathds{E}\left[ \ell_\beta \left( Y,g^* \circ \Phi(X,M) \right) \right] & = \mathds{E}\left[ \rho_\beta \left( Y - g^* \circ \Phi(X,M) \right) \right] \\
& = \mathds{E}\left[ \rho_\beta \left( Y - \tilde{f^*}(X,M) +  \tilde{f^*}(X,M) - g^* \circ \Phi (X,M) \right) \right] \\
(i) & \leq \mathds{E}\left[ \rho_\beta \left( Y - \tilde{f^*}(X,M) \right) \right] +  \mathds{E}\left[ \rho_\beta \left( \tilde{f^*}(X,M) - g^* \circ \Phi(X,M) \right) \right] \\
 & \leq \mathcal{R}_{\ell_\beta}^* +  \mathds{E}\left[ \rho_\beta \left( \tilde{f^*}(X,M) - g^* \circ \Phi(X,M) \right) \right],
\end{align*}

where $(i)$ holds thanks to the shape of $\rho_\beta$. For any $w \in \mathds{R}$ and any $\lambda \in \mathds{R}_+$:
\begin{align*}
    \rho_\beta\left(\lambda w\right) & = \beta \lambda |w| \mathds{1}_{w \geq 0} + (1-\beta) \lambda |w| \mathds{1}_{w \leq 0} \\
     \rho_\beta\left(\lambda w\right) & = \lambda \rho_\beta\left(w\right).
\end{align*}
Furthermore, $\rho_\beta$ is convex, thus for any $(u,v) \in \mathds{R}^2$:
\begin{align*}
    \rho_\beta\left(\frac{1}{2}u+\frac{1}{2}v\right) & \leq \frac{1}{2}\rho_\beta(u)+\frac{1}{2}\rho_\beta(v) \\
    \frac{1}{2} \rho_\beta\left(u+v\right) & \leq \frac{1}{2}\rho_\beta(u)+\frac{1}{2}\rho_\beta(v) \\
    \rho_\beta\left(u+v\right) & \leq \rho_\beta(u)+\rho_\beta(v).
\end{align*}

As $\tilde{f^*}$ and $g^* \circ \Phi$ are equals almost everywhere on each missing subspace,  $\mathds{E}\left[ \rho_\beta \left( \tilde{f^*}(X,M) - g^* \circ \Phi (X,M) \right) \right] = 0$. Indeed, decomposing by pattern one can write:
\begin{align*}
\mathds{E}\left[ \rho_\beta \left( \tilde{f^*}(X,M) - g^* \circ \Phi (X,M) \right) \right] & = \sum_{M = m} \mathds{P}(M = m) \mathds{E}\left[ \rho_\beta \left( \tilde{f^*}(X,M) - g^* \circ \Phi (X,M) \right) | M=m \right]
\end{align*}
and thus by equality almost everywhere for each pattern every term in this sum is null.

Therefore one obtains:
\begin{equation*}
\mathds{E}\left[ \ell_\beta \left( Y, g^* \circ \Phi(X,M) \right) \right] \leq \mathcal{R}_{\ell_\beta}^*.
\end{equation*}
Thus:
\begin{equation*}
\mathds{E}\left[ \ell_\beta \left( Y, g^* \circ \Phi(X,M) \right) \right] = \mathcal{R}_{\ell_\beta}^*,
\end{equation*}
and $g^* \circ \Phi$ is Bayes optimal. This implies that $\mathcal{R}_{\ell_\beta,\Phi}^* = \mathcal{R}_{\ell_\beta}^*$. Thus, a universally consistent algorithm learning $g_\Phi$ chained with $\Phi$ will lead to a Bayes consistent function.
\end{proof}

\begin{proof}[Proof of \Cref{cor:asymp_cond}]
\Cref{cor:asymp_cond} states that ``For any missing mechanism, for almost all imputation function $\Phi \in \mathcal{F}^{I}_{\infty}$, if $F_{Y|(X_{\obs(M)},M)}$ is continuous, a universally consistent quantile regressor trained on the imputed data set yields asymptotic conditional coverage.''.

Let $\beta \in [0,1]$.

Remark that \Cref{prop:qr_bayes} states that for any missing mechanism, for almost all imputation function $\Phi \in \mathcal{F}^{I}_{\infty}$ a universally consistent quantile regressor trained on the imputed data set achieves the Bayes risk asymptotically. We will thus show that any $\ell_\beta$-Bayes predictor $f_\beta^*$ (any function achieving the $\ell_\beta$-Bayes-risk) is such that $\mathds{P}( Y \leq f_\beta^*(X,M) | X_{\obs(M)}, M) = \beta$ if $F_{Y|(X_{\obs(M)},M)}$ is continuous. Therefore, any two Bayes predictors $f_{\alpha/2}^*$ and $f_{1-\alpha/2}^*$ form an interval $[f_{\alpha/2}^*(X,M); f_{1-\alpha/2}^*(X,M)]$ that achieves conditional coverage (conditionally to $X_{\obs(M)}$ and $M$).

Let $f_\beta^*$ be a $\ell_\beta$-Bayes predictor. Then:
\begin{align*}
f_\beta^* \in \argmin_{f : \mathcal{X} \times \mathcal{M} \rightarrow \mathds{R}} & \mathds{E}\left[ \rho_\beta \left(Y - f\left(X,M\right)\right) \right] \\
 = & \mathds{E}\left[\mathds{E}\left[ \rho_\beta \left(Y - f\left(X,M\right)\right) | X_{\obs(M)},M \right]\right].
\end{align*}

Let $(x,m) \in \mathcal{X}\times\mathcal{M}$. Denote $H_{x,m}(z) := \mathds{E}\left[ \rho_\beta \left(Y - z \right) | X_{\obs(M)} = x_{\obs(m)}, M=m \right]$. 
As $Y \neq z$ almost surely, we have: 
\begin{align*}
H'_{x,m}(z) & = \mathds{E}\left[ - \rho'_\beta \left(Y - z \right) | X_{\obs(M)} = x_{\obs(m)}, M=m \right] \\
& = \mathds{E}\left[ - ( - \beta \mathds{1}_{Y-z \geq 0} + (1-\beta) \mathds{1}_{Y-z \leq 0} ) | X_{\obs(M)} = x_{\obs(m)}, M=m \right] \\
& = \mathds{E}\left[ \beta \mathds{1}_{Y \geq z} - (1-\beta) \mathds{1}_{Y \leq z} | X_{\obs(M)} = x_{\obs(m)}, M=m \right] \\
& =  \beta \mathds{P}\left( Y \geq z | X_{\obs(M)} = x_{\obs(m)}, M=m \right) - (1-\beta) \mathds{P} \left(Y \leq z | X_{\obs(M)} = x_{\obs(m)}, M=m \right) \\
& = \beta \left( 1-\mathds{P}\left( Y \leq z | X_{\obs(M)} = x_{\obs(m)}, M=m \right) \right) - (1-\beta) \mathds{P} \left(Y \leq z | X_{\obs(M)} = x_{\obs(m)}, M=m \right) \\
H'_{x,m}(z) & =  \beta - \mathds{P}\left( Y \leq z | X_{\obs(M)} = x_{\obs(m)}, M=m \right).
\end{align*}

Therefore $H'_{x,m}(z) \leq 0$ if and only if $\beta \leq \mathds{P}\left( Y \leq z | X_{\obs(M)} = x_{\obs(m)}, M=m \right)$.

Thus, $z$ minimizes $H_{x,m}$ if and only if $\beta = \mathds{P}\left( Y \leq z | X_{\obs(M)} = x_{\obs(m)}, M=m \right)$. 

If $F_{Y|(X_{\obs(M)},M)}$ is continuous, there exists at least a solution, that might not be unique if it is not additionally strictly increasing. Therefore, if $F_{Y|(X_{\obs(M)},M)}$ is continuous, all the $\ell_\beta$-Bayes predictors can be written as $f_\beta^*(x,m) = q_{x,m}$ with $\mathds{P}\left( Y \leq q_{x,m} | X_{\obs(M)} = x_{\obs(m)}, M=m \right) = \mathds{P}\left( Y \leq f_\beta^*(x,m) | X_{\obs(M)} = x_{\obs(m)}, M=m \right) = \beta $.

\end{proof}

\section{Experimental study}
\label{app:exp}

\subsection{Settings detail}
\label{app:exp_set}

\paragraph{Quantile Neural Network.} The architecture and optimization of the Quantile Neural Network used in the experiments is taken from \citet{chr} (their code is freely available). This is the description provided in the original paper of the neural network: ``The network is composed of three fully connected layers with a hidden dimension of 64, and ReLU activation functions. We use the pinball loss to estimate the conditional quantiles, with a dropout regularization of rate 0.1. The network is optimized using Adam \citet{adam} with a learning rate equal to 0.0005. We tune the optimal number of epochs by cross validation, minimizing the loss function on the hold-out data points; the maximal number of epochs is set to 2000.''

\subsection{Gaussian linear results}
\label{app:exp_synth}

\begin{figure}[!h]
    \centering
    \includegraphics[width=0.85\textwidth]{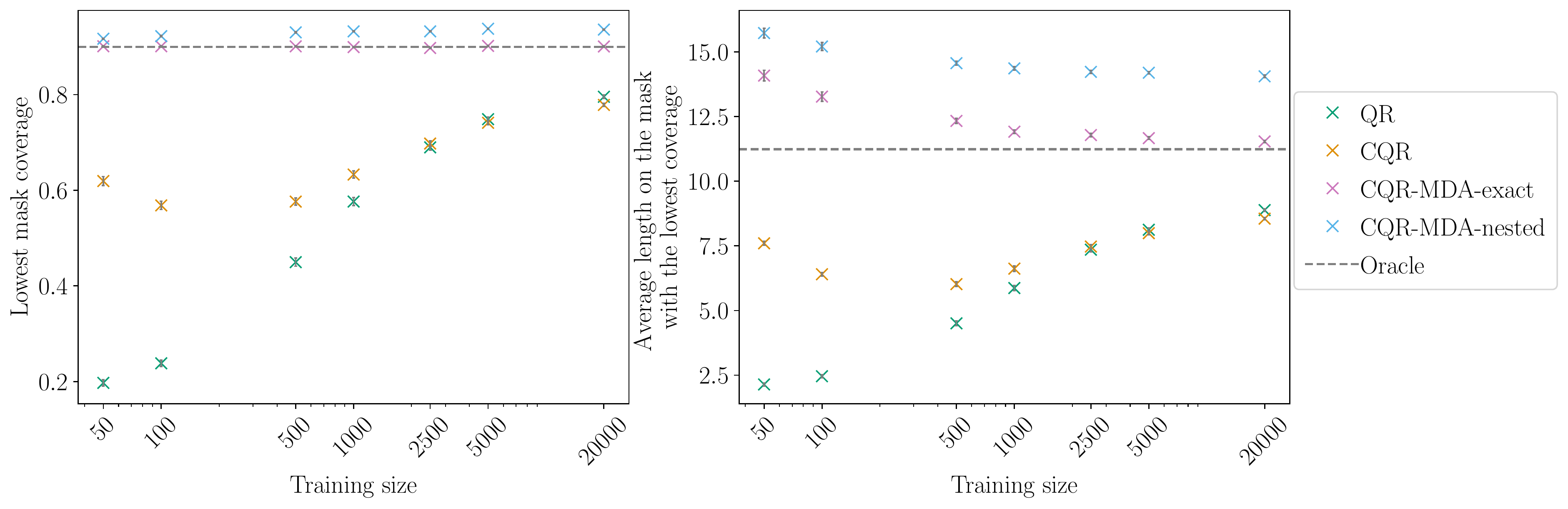}
    \caption{Coverage and interval's length for the mask leading to the lowest coverage. Model is NN. Calibration size fixed to 1000. The mask is concatenated in the features. Data is imputed using Iterative Ridge. 100 repetitions allow to display error bars, corresponding to standard error.}
    \label{fig:poc_masking_worst}
\end{figure}

\Cref{fig:poc_masking_best} is the analogous of \Cref{fig:poc_masking_worst}, but by evaluating the performances on the \camerareadyrevisionlast{mask} leading to the highest coverage.

\begin{figure}[!h]
    \centering
    \includegraphics[width=0.85\textwidth]{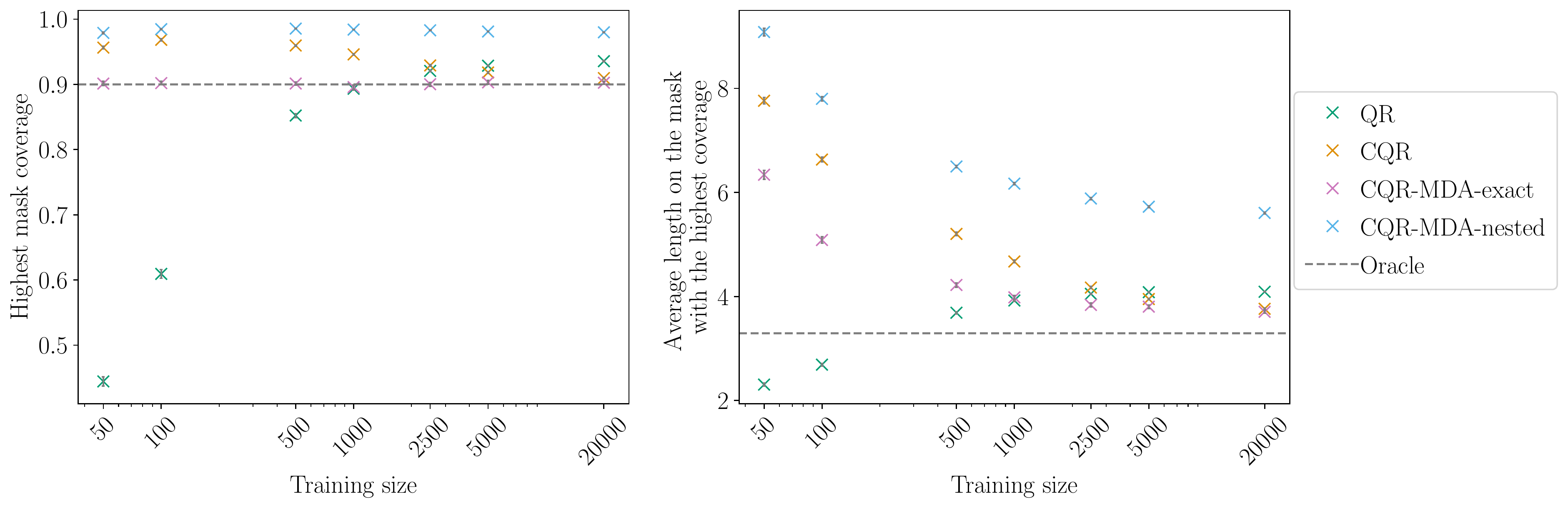}
    \caption{Coverage and interval's length for the mask leading to the highest coverage. See caption of \Cref{fig:poc_masking_worst} for the setting.}
    \label{fig:poc_masking_best}
\end{figure}

Hereafter, we present in \Cref{fig:patternsize} the exact same figure than \Cref{fig:d10_cov_len} but with a panel (the first) for vanilla QR. The 3 other methods are displayed again to facilitate the comparison.

\begin{figure}[!h]
    \centering
    \includegraphics[width=0.95\textwidth]{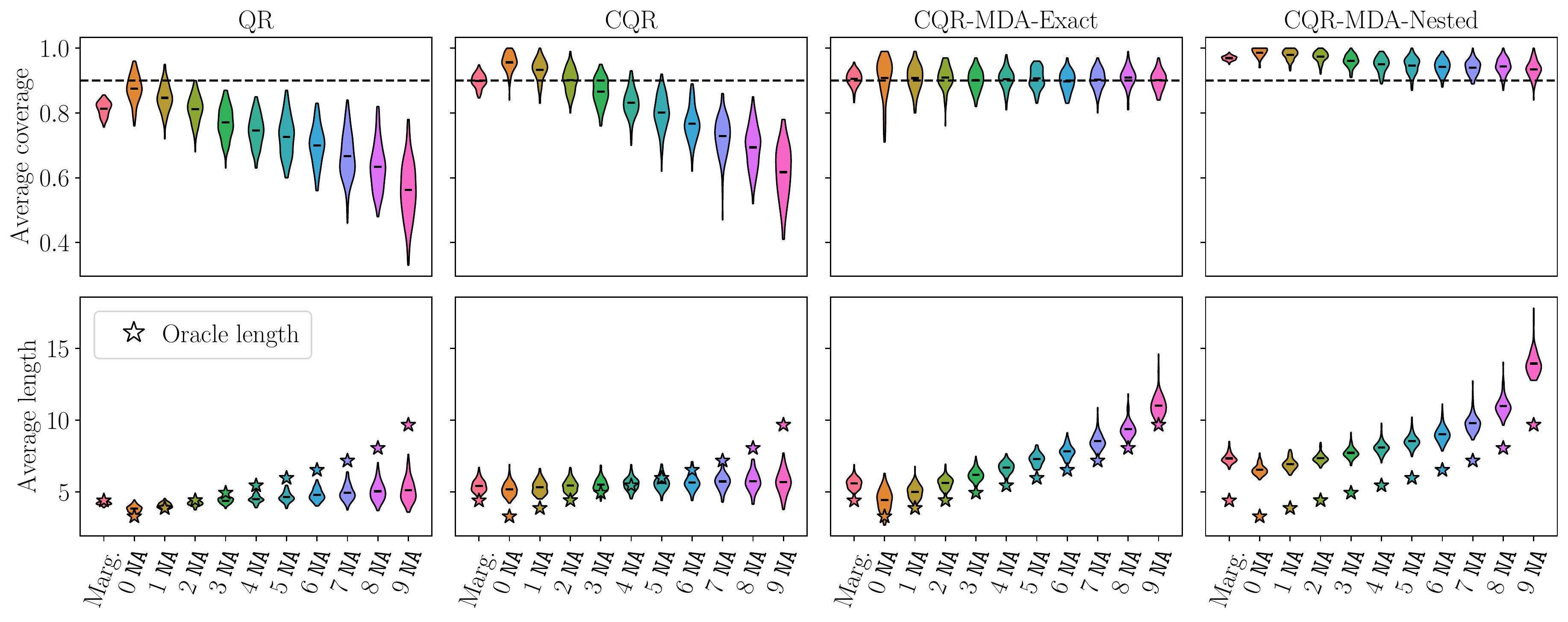}
    \vspace{-10pt}
    \caption{Average coverage (top) and length (bottom) as a function of the pattern size, i.e. the number of missing values (\texttt{NA}). First violin plot corresponds to marginal coverage. Stars correspond to the oracle length. Settings are: model is NN, train size is 500, calibration size is 250. The marginal test set includes 2000 observations. The conditional test set includes 100 individuals for each possible missing data pattern size. The mask is concatenated to the features. Data is imputed using Iterative Ridge. 100 repetitions are performed.}
    \label{fig:patternsize}
\end{figure}

Finally, \Cref{fig:d10_cov_len_1000} is the analogous of \Cref{fig:patternsize}, but for a training set containing 1000 observations and a calibration set containing 500 observations.

\begin{figure}[!h]
    \centering
    \includegraphics[width=0.95\textwidth]{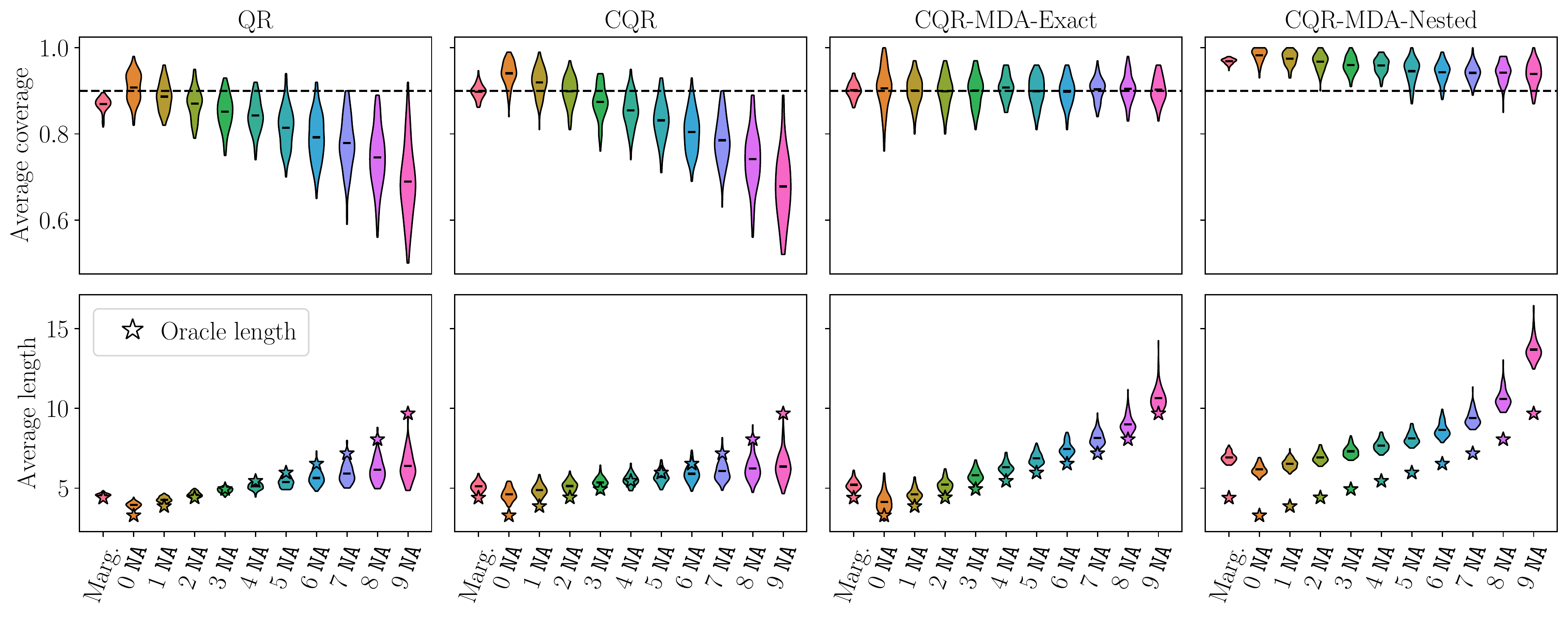}
    \vspace{-10pt}
    \caption{Model is NN. Train size is 1000. Calibration size fixed to 500. The marginal test set includes 2000 observations. The conditional test set includes 100 individuals for each possible missing data pattern size. The mask is concatenated in the features. Data is imputed using Iterative Ridge. 100 repetitions are performed.}
    \label{fig:d10_cov_len_1000}
\end{figure}

\camerareadyrevisionlast{
\subsection{Higher proportion of missing values}
\label{app:exp_more_na}

We present synthetic experiments where the proportion of MCAR missing values is of 40\% (instead of 20\% in \Cref{fig:d10_cov_len}). Except from this, the settings are exactly the same than the ones of \Cref{fig:d10_cov_len}. Precisely, the data is generated with $d=10$ according to \Cref{mod:glm}, with $X \sim \mathcal{N}\left(\mu, \Sigma \right)$, $\mu = (1,\cdots,1)^T$ and $\Sigma = \varphi (1,\cdots,1)^T(1,\cdots,1)+(1-\varphi)I_d$, $\varphi=0.8$, Gaussian noise $\varepsilon \sim \mathcal{N}(0,1)$ and the following regression coefficients $\beta = (1, 2, -1, 3, -0.5, -1, 0.3, 1.7, 0.4, -0.3)^T$. For each pattern size, 100 observations are drawn according to the distribution of $M | \text{size}(M)$ in the test set. The training and calibration sizes are respectively 500 and 250. The experiment is repeated 100 times. The results are displayed in \Cref{fig:d10_cov_len_more_na}.

\begin{figure}[!h]
    \centering
    \includegraphics[width=0.95\textwidth]{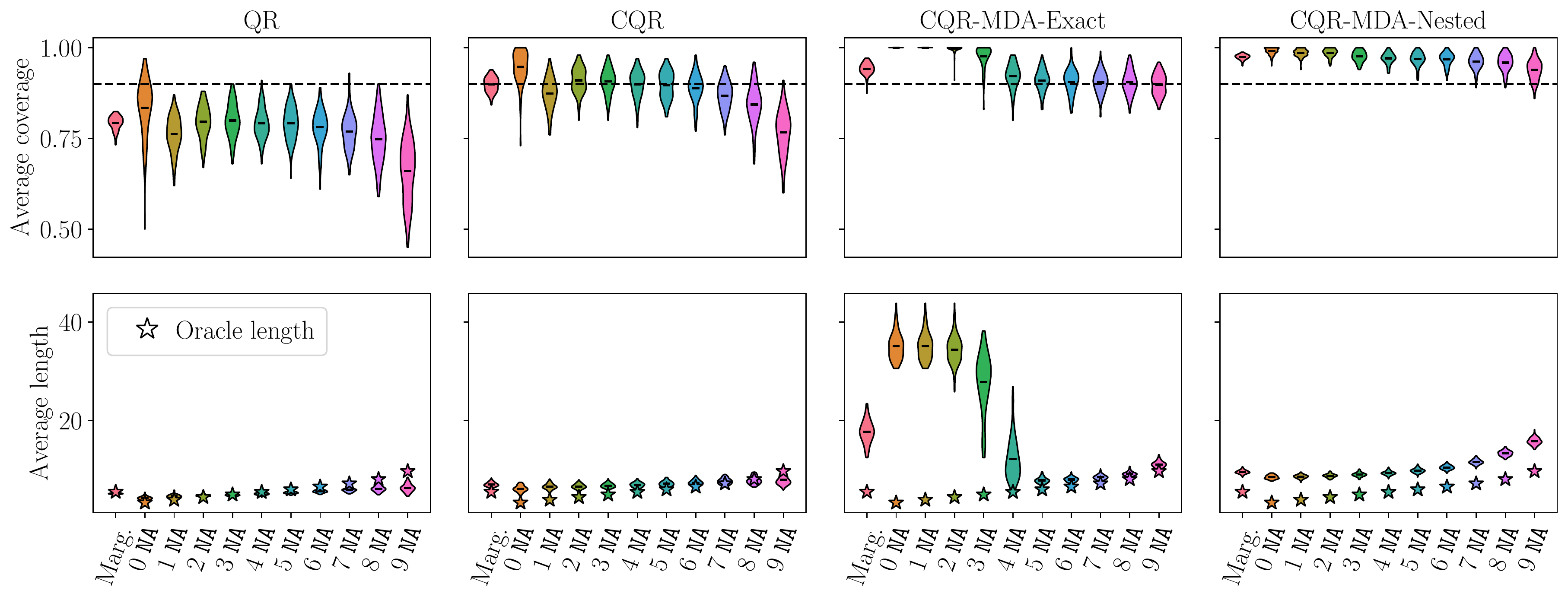}
    \vspace{-10pt}
    \caption{Same caption than \Cref{fig:patternsize}.}
    \label{fig:d10_cov_len_more_na}
\end{figure}

Interestingly, although expected, these experiments lead \masksub to frequently output infinite intervals. This is because the subsampling step with few calibration data -- with respect to the dimension and proportion of missing values -- reached a point where there are not enough observations for \masksub to calibrate accurately for some patterns. 

To compare \masksub and \mask in this setting, \Cref{fig:d10_cov_len_more_na} is obtained by replacing the infinite intervals by $\underset{k \in \mathrm{Tr} \cup \mathrm{Cal}}{\text{max}} y^{(k)} - \underset{k \in \rm{Tr} \cup \rm{Cal}}{\text{min}} y^{(k)}$. It highlights that \masksub is less \emph{efficient} (i.e. outputs larger intervals) than \mask for patterns with less than 4 \texttt{NA}s. With a smaller calibration set or a higher proportion of missing values, this effect would be amplified and generalized to more patterns. \Cref{fig:d10_cov_len_more_na} also emphasizes that \masksub leads to more coverage variability than \mask, on the patterns for which \masksub does not almost surely cover. 

}

\subsection{Semi-synthetic experiments}
\label{app:exp_semi_synth}

In the semi-synthetic experiments, two settings are examined: one where the training size is small in comparison to the number of parameters of the Neural Network -- ``Medium'' --, and one where the training size is even smaller so that some masks have a really low (or null) frequency of appearance in the training set -- ``Small''. In both cases, the calibration size is approximately half the training size. \Cref{fig:semi_synth} presented the results for the ``Medium'' case. 
\vspace{-5pt}
\begin{table}[!h]
\caption{Semi-synthetic settings: training and calibration sizes for each of the 6 data sets depending on the setting.}
\label{tab:semi_synth}
\begin{center}
\begin{small}
\begin{tabular}{|c|c|c|c|c|c|c|c|}
\toprule
\multicolumn{2}{|c|}{} & \texttt{meps_19} & \texttt{meps_20} & \texttt{meps_21} & \texttt{bio} & \texttt{bike} & \texttt{concrete} \\
\multicolumn{2}{|c|}{} & $d = 139$, $l = 5$ & $d = 139$, $l = 5$ & $d = 139$, $l = 5$ & $d = 9$, $l = 9$ & $d = 18$, $l = 4$ & $d = 8$, $l = 8$ \\
\multicolumn{2}{|c|}{} & $n = 15785$ & $n = 17541$ & $n = 15656$ & $n = 45730$ & $n = 10886$ & $n = 1030$ \\
\midrule
\multirow{2}{*}{Small} & $\rm{Tr}$ size  & 500 & 500 & 500 & 500 & 500 & 330 \\
& $\rm{Cal}$ size & 250 & 250 & 250 & 250 & 250 & 100 \\
\midrule
\multirow{2}{*}{Medium}  & $\rm{Tr}$ size & 1000 & 1000 & 1000 & 1000 & 1000 & 630 \\
& $\rm{Cal}$ size & 500 & 500 & 500 & 500 & 500 & 200 \\

\bottomrule
\end{tabular}
\end{small}
\end{center}
\end{table}

\Cref{fig:semi_synth_both} represents the results for these settings, using the same parameters than \Cref{fig:semi_synth}. For the results on the two other \texttt{meps} data sets (\texttt{meps_20} and \texttt{meps_21}) see \Cref{fig:semi_synth_meps}, which repeats the visualisation of \texttt{meps_19} to ease comparison.

\begin{figure}[!h]
    \centering
    \includegraphics[width=0.96\textwidth]{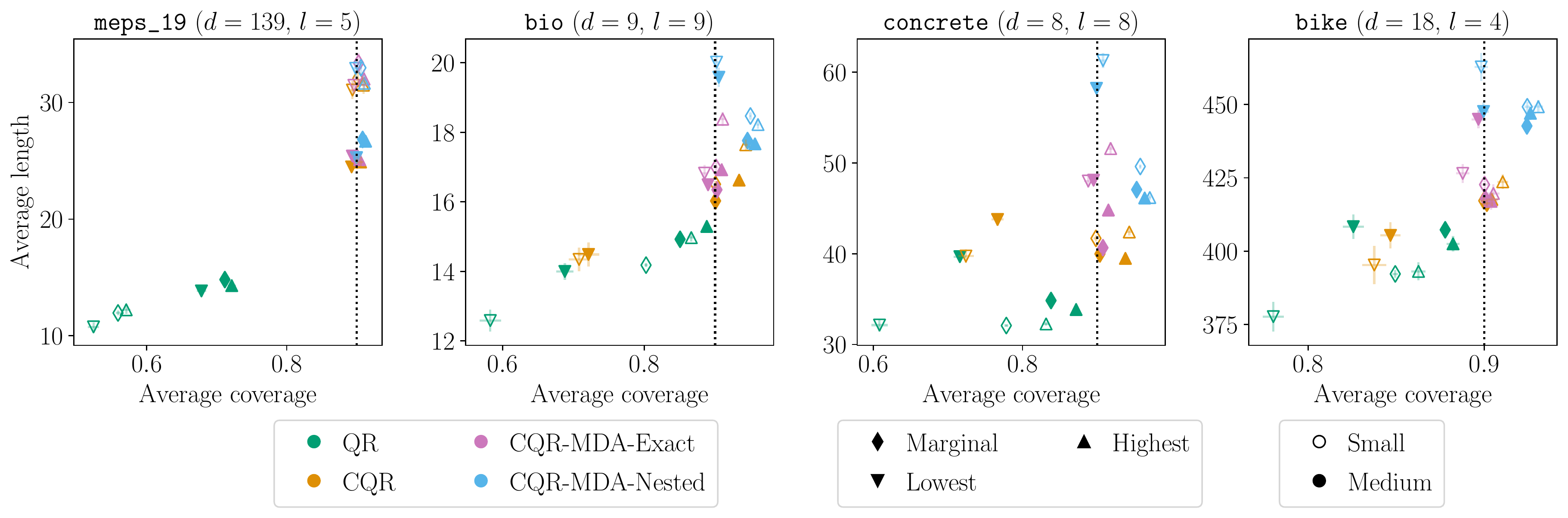}
    \caption{Model is NN. The mask is concatenated in the features. Data is imputed using Iterative Ridge. 100 repetitions are performed, allowing to display the standard error as error bars. The vertical dotted lines represent the target coverage of 90\%.}
    \label{fig:semi_synth_both}
\end{figure}

\begin{figure}[!h]
    \centering
    \includegraphics[width=0.88\textwidth]{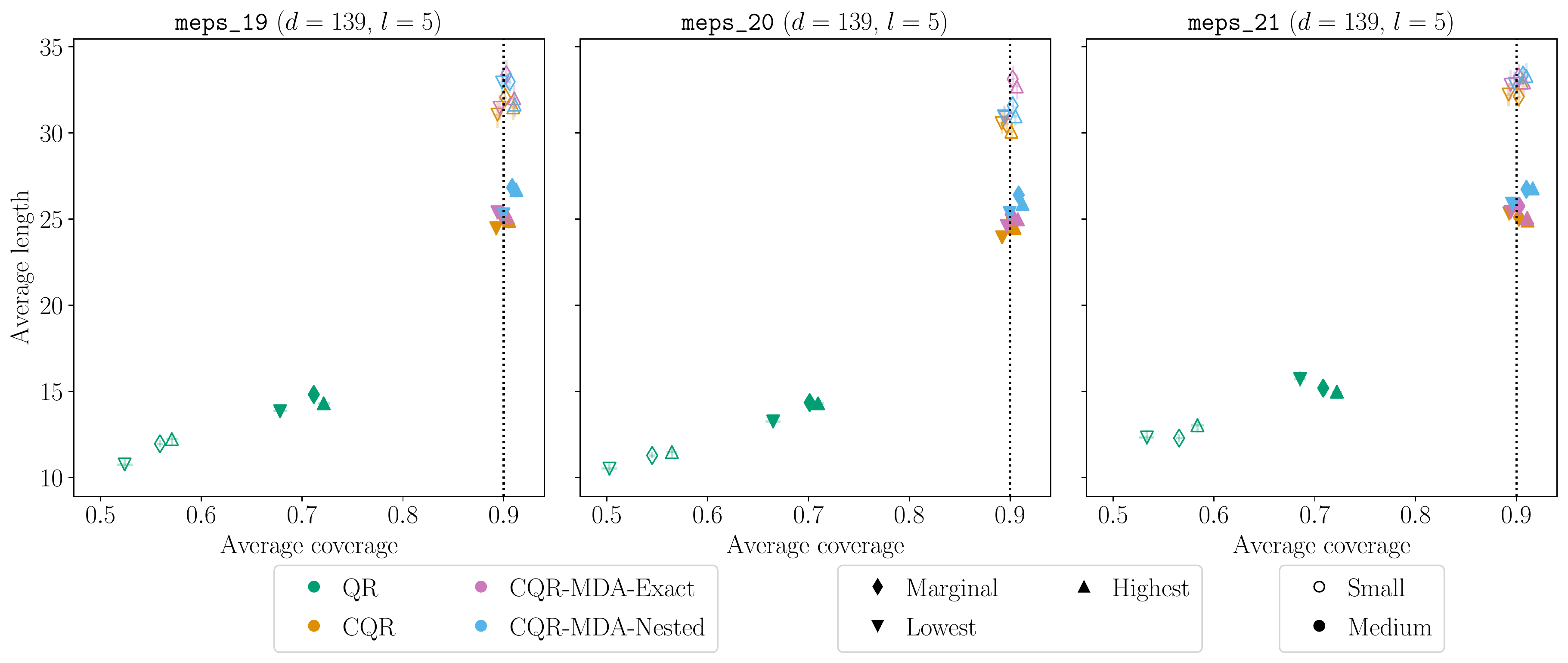}
    \caption{Same caption than \Cref{fig:semi_synth_both}.}
    \label{fig:semi_synth_meps}
\end{figure}

\subsection{Real data}
\label{app:exp_real}

\textbf{Data set description.} \citet{sportisse} selected 7 variables to model the level of platelets, after discussion with medical doctors. Thus, we followed their pipeline. Here are the 7 variables used:
\begin{itemize}
    \item \texttt{Age}: the age of the patient (no missing values);
    \item \texttt{Lactate}: the conjugate base of lactic acid, upon arrival at the hospital (17.66\% missing values);
    \item \texttt{Delta_hemo}: the difference between the hemoglobin upon arrival at hospital and the one in the ambulance (23.82\% missing values);
    \item \texttt{VE}: binary variable indicating if a Volume Expander was applied in the ambulance. A volume expander is a type of intravenous therapy that has the function of providing volume for the circulatory system (2.46\% missing values);
    \item \texttt{RBC}: a binary index which indicates whether the transfusion of Red Blood Cells Concentrates is performed (0.37\% missing values);
    \item \texttt{SI}: the shock index. It indicates the level of occult shock based on heart rate (HR) and systolic blood pressure (SBP), that is SI = $\frac{\text{HR}}{\text{SBP}}$, upon arrival at hospital (2.09\% missing values);
    \item \texttt{HR}: the heart rate measured upon arrival of hospital (1.62\% missing values).
\end{itemize}

\textbf{Splitting strategy.} To study the coverage conditionally on the masks, we must handle the scarcity of some of them. For each individual in the data set, we make only one prediction, this way avoiding too many repetitions of the same test point when computing the average. We split the data set into 5 folds, and predict on each fold by training the procedure on the 4 others, with 15390 observations for training, and 7694 for calibration.

\begin{figure}[!h]
\begin{center}
\includegraphics[width=0.4\textwidth]{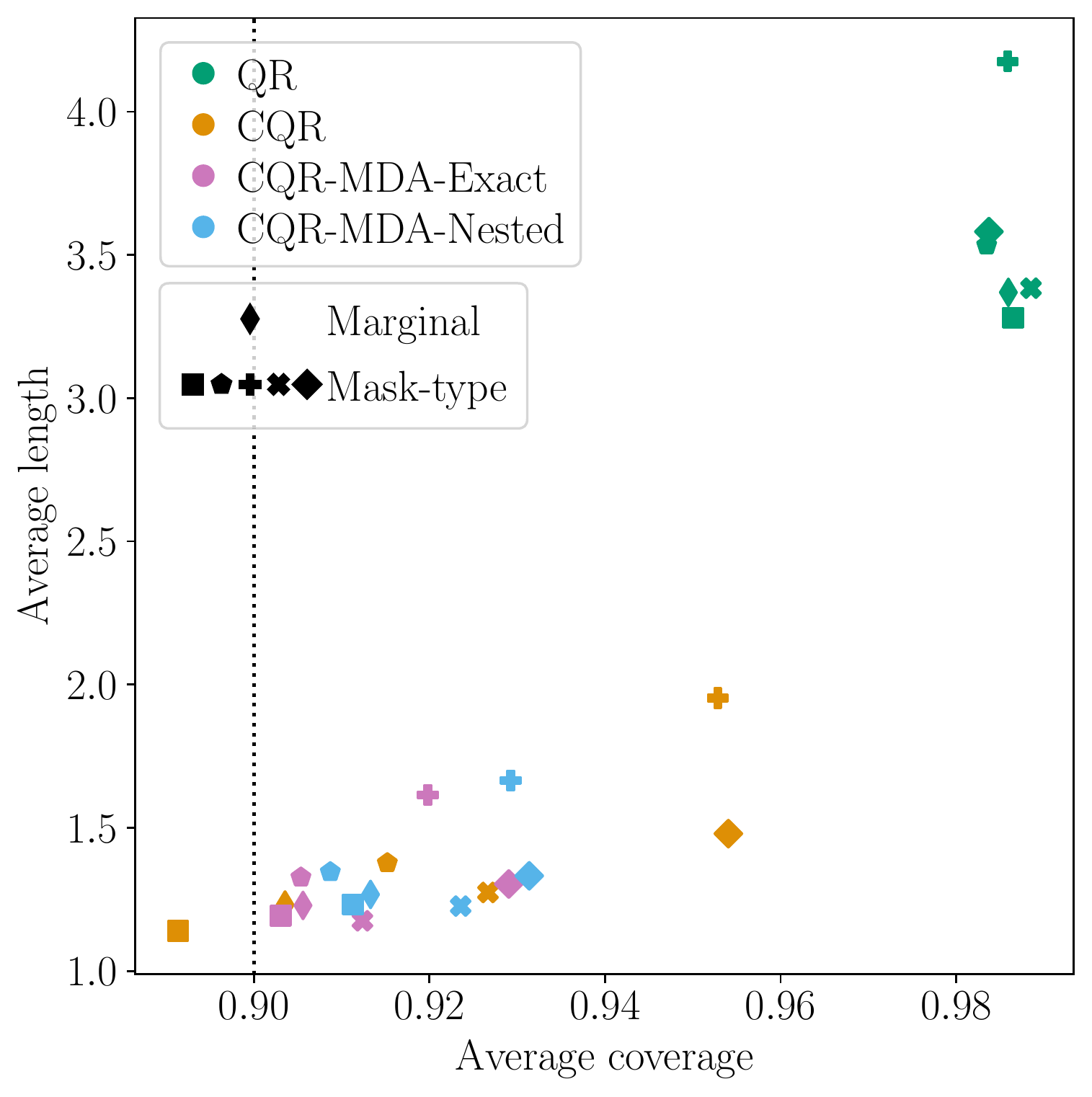}
\end{center}
\caption{Average coverage and length on the TraumaBase® data when predicting the platelets level. Colors correspond to the methods. Diamond ($\blacklozenge$) corresponds to taking the average among all individuals. Other symbols correspond to computing the average among the individuals having a fixed mask. The vertical dotted line represents the target coverage of 90\%. Model is NN. The mask is concatenated to the features. Imputation is Iterative Ridge. Each individual is predicted using 15390 observations for training, and 7694 for calibration.}
\end{figure}

\end{document}